\documentclass[11pt]{article}



\usepackage[utf8]{inputenc} 
\usepackage[T1]{fontenc}    

\usepackage{wrapfig,lipsum,booktabs}

\usepackage{paralist}
\usepackage{booktabs}       
\usepackage{nicefrac}       
\usepackage{multirow}
 \usepackage{xargs}

\usepackage{amsmath}
\usepackage{amsthm}
\usepackage{bm}
\allowdisplaybreaks
\usepackage{amssymb,mathrsfs}
\usepackage{amsfonts}
\usepackage{upgreek}
\usepackage{graphicx}
\usepackage{wrapfig}
\usepackage{xcolor}
\usepackage{pifont}
\usepackage{algpseudocode,algorithm,algorithmicx}
\usepackage{stmaryrd}
\usepackage{array}
\usepackage{enumitem}

\usepackage{graphicx} 
\usepackage{tikz}
\usepackage{pgfplots}

\usepackage{aliascnt}
\usepackage[colorlinks = true, citecolor = blue]{hyperref}
\usepackage{cleveref}

\newcommand{\cmark}{\textcolor{green!80!black}{\ding{51}}}
\newcommand{\xmark}{\textcolor{red}{\ding{55}}}

\makeatletter
\newtheorem{theorem}{Theorem}
\crefname{theorem}{theorem}{Theorems}
\Crefname{Theorem}{Theorem}{Theorems}

\newtheorem*{lemma_nonumber*}{Lemma}

\newtheorem{lemma}{Lemma}

\newaliascnt{corollary}{theorem}
\newtheorem{corollary}[corollary]{Corollary}
\aliascntresetthe{corollary}
\crefname{corollary}{corollary}{corollaries}
\Crefname{Corollary}{Corollary}{Corollaries}

\newaliascnt{proposition}{theorem}
\newtheorem{proposition}[proposition]{Proposition}
\aliascntresetthe{proposition}
\crefname{proposition}{proposition}{propositions}
\Crefname{Proposition}{Proposition}{Propositions}

\newaliascnt{definition}{theorem}

\aliascntresetthe{definition}
\crefname{definition}{definition}{definitions}
\Crefname{Definition}{Definition}{Definitions}

\newaliascnt{remark}{theorem}

\aliascntresetthe{remark}
\crefname{remark}{remark}{remarks}
\Crefname{Remark}{Remark}{Remarks}

\crefname{example}{example}{examples}
\Crefname{Example}{Example}{Examples}

\crefname{figure}{figure}{figures}
\Crefname{Figure}{Figure}{Figures}

\newtheorem{assumption}{\textbf{H}\hspace{-3pt}}
\crefformat{assumption}{{\textbf{H}}#2#1#3}



\definecolor{darkgreen}{RGB}{0,128,0}
\definecolor{darkorange}{RGB}{255,140,0}
\definecolor{darkblue}{RGB}{0,0,139}





\newcommand{\PP}{\mathbb{P}}

\def\mcb{\mathcal{B}}

\def\mcf{\mathcal{F}}
\def\mcg{\mathcal{G}}


\def\rset{\mathbb{R}}

\def\nset{\mathbb{N}}
\def\nsets{\mathbb{N}^*}


\def\rmd{\mathrm{d}}


\newcommand{\argmax}{\operatorname*{arg\,max}}



\newcommand{\LeftEqNo}{\let\veqno\@@leqno}


\newcommand{\floor}[1]{\left\lfloor #1 \right\rfloor}





\newcommand{\PE}{\mathbb{E}}


\newcommand{\absLigne}[1]{\vert #1 \vert}

\newcommand{\normLigne}[1]{\Vert #1 \Vert}

\newcommand{\parenthese}[1]{\left(#1 \right)}
\newcommand{\parentheseLignebig}[1]{\big(#1 \big)}
\newcommand{\parentheseLigneBig}[1]{\Big(#1 \Big)}
\newcommand{\parentheseLigne}[1]{(#1 )}

\newcommand{\psLigne}[2]{\langle#1,#2 \rangle}




\newcommand{\expe}[1]{\PE \left[ #1 \right]}









\def\ie{\textit{i.e.}}

\newcommand{\ocint}[1]{\left(#1\right]}
\newcommand{\ooint}[1]{\left(#1\right)}


\newcommandx\sequence[3][2=,3=]
{\ifthenelse{\equal{#3}{}}{\ensuremath{( #1_{#2})}}{\ensuremath{( #1_{#2})_{ #2 \in #3}}}}

\newcommandx\sequencet[3][2=,3=]
{\ifthenelse{\equal{#3}{}}{\ensuremath{( #1_{#2})}}{\ensuremath{( #1_{#2})_{ #2 \geq #3}}}}


\def\iid{\text{i.i.d.}}

\newcommand{\opnorm}[1]{{\left\vert\kern-0.25ex\left\vert\kern-0.25ex\left\vert #1
    \right\vert\kern-0.25ex\right\vert\kern-0.25ex\right\vert}}

\def\Id{\operatorname{Id}}



\def\Id{\operatorname{Id}}








\newcommand\coupling[2]{\Gamma(\mu,\nu)}


\def\bgamma{\bar{\gamma}}

\def\Ltt{\mathtt{L}}
\def\mtt{\mathtt{m}}
\def\Btt{\mathtt{B}}
\def\Mtt{\mathtt{M}}
\def\tMtt{\tilde{\mathtt{M}}}
\def\bMtt{\bar{\mathtt{M}}}






\newcommand{\N}{\mathbb{N}}

\newcommand{\R}{\mathbb{R}}

\newcommand{\E}{\mathbb{E}}
\newcommand{\msa}{\mathsf{A}}
\newcommand{\msx}{\mathsf{X}}

\newcommand{\Rd}{\mathbb{R}^{d}}

\newcommand{\dd}{\mathrm{d}}

\newcommand{\argmin}{\operatornamewithlimits{\arg\min}}

\newcommand{\Oh}{\operatorname{\mathrm{O}}}

\newcommand{\half}{1/2}  
\newcommand{\nofrac}[2]{{#1}/{#2}}  
\newcommand{\txts}{\textstyle}  
\newcommand{\eqsp}{\,}  

\newcommand{\pr}[1]{\left({#1}\right)}
\newcommand{\prn}[1]{({\textstyle{#1}})}

\newcommand{\br}[1]{\left[{#1}\right]}
\newcommand{\bbr}[1]{\left\{{#1}\right\}}
\newcommand{\brn}[1]{[{\textstyle{#1}}]}

\newcommand{\ac}[1]{\left\{{#1}\right\}}
\newcommand{\acn}[1]{\{{\textstyle{#1}}\}}

\newcommand{\norm}[1]{\left\|{#1}\right\|}
\newcommand{\normn}[1]{\|{\textstyle{#1}}\|}

\newcommand{\abs}[1]{\left\lvert{#1}\right\rvert}

\newcommand{\ps}[2]{\left\langle{#1},{#2}\right\rangle}  
\newcommand{\psn}[2]{\langle\textstyle{#1},{#2}\rangle}  

\newcommand{\gauss}{\mathrm{N}}
\newcommand{\wass}{W_{2}}
\newcommand{\up}{\mathscr{C}}

\newcommand{\btheta}{\tilde{\theta}}

\newcommand{\U}{U}

\newcommand{\lip}{\mathtt{L}}
\newcommand{\barM}{\bar{\mathtt{M}}}

\newcommand{\mU}{\mathtt{m}}
\newcommand{\grad}{H}  
\newcommand{\tgrad}{\tilde{H}}
\newcommand{\tgradD}{F}

\newcommand{\qslds}{$\texttt{QLSD}^\star$}

\newcommand{\qlsds}{$\texttt{QLSD}^\star$}
\newcommand{\qlsdsharp}{$\texttt{QLSD}^{\#}$}
\newcommand{\qlsd}{{\texttt{QLSD}}}
\newcommand{\qlsdpp}{$\texttt{QLSD}^{++}$}
\def\Bs{\Btt^{\star}}
\def\tBs{\tilde{\Btt}^{\star}}
\def\sigmas{\sigma_{\star}}
\def\MH{\Mtt}
\def\tMH{\tMtt}
\def\thetas{\theta^{\star}}
\def\bMH{\bar{\MH}}
\def\Pens{\mathcal{P}}
\def\tkappa{\tilde{\kappa}}
\def\Dg{\mathrm{D}_{\gamma}}
\def\DgZ{\mathrm{D}_{0,\gamma}}
\def\wpN{\wp_N}
\def\card{\mathrm{card}}
\def\wpNn{\wp_{N,n}}

\def\balpha{\bar{\alpha}}

\makeatletter
\newcommand{\ostar}{\mathbin{\mathpalette\make@circled\star}}
\newcommand{\pstar}{\mathbin{\mathpalette\make@circled+}}
\newcommand{\make@circled}[2]{%
  \ooalign{$\m@th#1\smallbigcirc{#1}$\cr\hidewidth$\m@th#1#2$\hidewidth\cr}%
}
\newcommand{\smallbigcirc}[1]{%
  \vcenter{\hbox{\scalebox{0.77778}{$\m@th#1\bigcirc$}}}%
}
\makeatother
\def\ttheta{\tilde{\theta}}

\usepackage[accepted]{aistats2022}

\setlength{\pdfpageheight}{11in}
\setlength{\pdfpagewidth}{8.5in}
\usepackage[round]{natbib}

\author{%
    Maxime Vono$^*$ \\
    Criteo AI Lab \\
    Paris, France
    \And
    Vincent Plassier$^*$ \\
    CMAP, \'Ecole Polytechnique \\
    Lagrange Mathematics and \\ Computing Research Center
    \And
    Alain Durmus$^*$ \\
    Université Paris-Saclay, CNRS \\
    ENS Paris-Saclay Centre Borelli \\
    91190 Gif-sur-Yvette, France 
    \And
    Aymeric Dieuleveut \\
    CMAP, École Polytechnique
    \And
    \'Eric Moulines \\
    CMAP, École Polytechnique 
 }

\begin{document}

\runningauthor{Maxime Vono$^*$, Vincent Plassier$^*$, Alain Durmus$^*$, Aymeric Dieuleveut, Eric Moulines}

\maketitle

\begin{abstract}
The objective of Federated Learning (FL) is to perform statistical
inference for data which are decentralised and stored locally on
networked clients. FL raises many constraints which include privacy
and data ownership, communication overhead, statistical heterogeneity,
and partial client participation. In this paper, we address these
problems in the framework of the Bayesian paradigm. To this end, we
propose a novel federated Markov Chain Monte Carlo algorithm, referred
to as Quantised Langevin Stochastic Dynamics which may be seen as an
extension to the FL setting of Stochastic Gradient Langevin Dynamics,
which handles the communication bottleneck using gradient
compression. To improve performance, we then introduce variance
reduction techniques, which lead to two improved versions coined \texttt{QLSD}$^\star$ and \texttt{QLSD}$^{++}$. We give both
non-asymptotic and asymptotic convergence guarantees for the proposed
algorithms.  We illustrate their performances using various Bayesian Federated Learning benchmarks.
\end{abstract}

\section{INTRODUCTION}
\label{sec:introduction}

A paradigm shift has occurred with \emph{Federated Learning} (FL) \citep{mcmahan2017communication,kairouzFL}. In FL, multiple entities (called clients) which own locally stored data collaborate in learning a ``global'' model which can then be ``adapted'' to each client. In the canonical FL, this task is  coordinated by a central server. The initial focus of FL was on mobile and edge device applications, but recently there has been a surge of interest in applying the FL framework to other scenarios; in particular, those involving a small number of trusted clients (\emph{e.g.} multiple organisations, enterprises, or other stakeholders).

FL has become one of the most active areas of artificial intelligence research over the past 5 years. FL differs significantly from the classical (distributed) ML setup \citep{mcmahan2017communication}:  the storage, computational, and communication capacities of each client vary amongst each other. 
This poses considerable challenges to successfully deal with many constraints raised by (i) partial client participation (\emph{e.g.} in mobile applications, a client is not always active);
(ii) communication bottleneck (clients are communication-constrained with limited bandwidth usage); (iii) model update synchronisation and merging. 

\begin{table*}[h]  
  \caption{Overview of the main existing distributed/federated approximate Bayesian approaches. Column \emph{Comm. overhead} gives the scheme employed to address the communication bottleneck. Column \emph{Heterogeneity} means that the proposed approach tackles the impact of data heterogeneity on convergence while column \emph{Bounds} highlights available non-asymptotic convergence guarantees.}
  \label{table:overview}
  \vskip 0.15in
  \begin{center}
  \begin{small}
  \begin{sc}
  \begin{tabular}{lccccc}
  \toprule
  Method & Comm. overhead & Heterogeneity & Partial participation & Bounds \\
  \midrule
  \citet{JMLR:v18:16-478} & local steps & \xmark & \xmark & \xmark \\
  \citet{nemeth2018} & one-shot & \xmark & \xmark & \xmark \\
  \citet{ThangBui18} & local steps & \xmark & \cmark & \xmark \\
  \citet{Jordan2019} & one-shot & \xmark & \xmark & \cmark \\
  \citet{Corinzia19} & local steps & \xmark & \cmark & \xmark \\
  \citet{simeone20} & local steps & \xmark & \cmark & \xmark \\
  \citet{2020_conducive_gradients} & local steps & \xmark & \xmark & \cmark \\
  \citet{2021_ICML_DGLMC} & local steps & \xmark & \xmark & \cmark \\
  \citet{FedBE21} & local steps & \cmark & \cmark & \xmark \\
  \citet{SimeoneLiu21} & one-shot & \xmark & \xmark & \xmark \\
  This work & compression & \cmark & \cmark & \cmark \\
  \bottomrule
  \end{tabular}
  \end{sc}
  \end{small}
  \end{center}
  \vskip -0.1in
\end{table*}

Many methods derived from stochastic gradient descent techniques have been proposed in the literature to meet the specific FL constraints \citep{mcmahan2017communication,alistarh2017qsgd,DIANA19,scaffold20,fedprox20,Artemis20}, see \citet{FLreview2021} for a recent comprehensive overview.
Whilst these approaches have successfully solved important issues associated to FL, they are unfortunately unable to capture and quantify epistemic predictive uncertainty which is essential in many applications such as autonomous driving or precision medicine \citep{Hunter16,Franchi20}.
Indeed, these methods only provide a point estimate being a minimiser of a target empirical risk function.
In contrast, the Bayesian paradigm \citep{Robert94} stands for a natural candidate to quantify uncertainty by providing a full description of the posterior distribution of the parameter of interest, and as such has become ubiquitous in the machine learning community \citep{Andrieu2003,JMLR:v14:hoffman13a,izmailov2020subspace,izmailov2021bayesian}.

In the last decade, many research efforts have been made to adapt
serial workhorses of Bayesian computational methods such as variational
inference, expectation-propagation, and Markov chain Monte Carlo
(MCMC) algorithms to massively distributed architectures
\citep{Wang2013,Ahn14,Wang2015,JMLR:v18:16-478,ThangBui18,Jordan2019,Rendell2020,Vono_Paulin_Doucet_2019}.
Since the main bottleneck in distributed computing is the
communication overhead, these approaches mainly focus on deriving
efficient algorithms specifically designed to meet such a constraint,
requiring only periodic or few rounds of communication between a
central server and clients; see \citet[Section 4]{2021_ICML_DGLMC} for
a recent overview.  As highlighted in
\Cref{table:overview}, most current Bayesian FL methods adapt these approaches and focus almost
exclusively on Federated Averaging type updates  \citep{mcmahan2017communication}, performing multiple local steps on each client. This is in contrast with predictive FL algorithms (which
are \textbf{not} estimating predictive uncertainty), for which a variety of schemes have been explored, \emph{e.g.} via gradient
compression or client subsampling \citep[Section
3.1.2]{FLreview2021}.
Moreover, very few Bayesian FL works have
attempted to address the challenges raised by partial device
participation or the impact of statistical heterogeneity; see
\citet{SimeoneLiu21_bis,FedBE21}. Convergence results in Bayesian
FL lag far behind ``canonical'' FL.

In this paper, we attempt to fill this gap, by proposing novel MCMC
methods that extend Stochastic Langevin Dynamics to the FL context. It
is assumed that the clients' data are independent and that the global
posterior density is therefore the product of the \emph{non-identical} local posterior
densities of each client. To meet the specificity of Bayesian FL, each iteration of the proposed approaches only requires that a subset of active clients compute a stochastic gradient oracle for their associated
negative log posterior density and send a lossy compression of these stochastic gradient oracles to the central server.
The first scheme we derive, referred to as \emph{Quantised Langevin Stochastic Dynamics} (\texttt{QLSD}), can interestingly be seen as the MCMC counterpart of the \texttt{QSGD} approach in FL \citep{alistarh2017qsgd}, just as the Stochastic Gradient Langevin Dynamics (\texttt{SGLD}) \citep{Welling11} extends the Stochastic Gradient Descent (\texttt{SGD}).
However, \texttt{QLSD} has the same drawbacks as \texttt{SGLD}:
in particular, the invariant distribution of \texttt{QLSD} may deviate from the target distribution and become similar to the invariant measure of \texttt{SGD} when the number of observations is large \citep{brosse2018promises}.
We overcome this problem by deriving two variance-reduced versions \texttt{QLSD}$^{\star}$ and \texttt{QLSD}$^{++}$ that both include control variates.

\vspace{-0.3cm}
\paragraph{Contributions}
(1) We propose a general MCMC algorithm called \texttt{QLSD} specifically designed for Bayesian inference under the FL paradigm and two variance-reduced alternatives, especially tackling \emph{heterogeneity}, \emph{communication overhead} and \emph{partial participation}.
(2) We provide a non-asymptotic convergence analysis of the proposed algorithms. The theoretical analysis highlights the impact of statistical heterogeneity measured by the discrepancy between local posterior distributions.
(3) We propose efficient mechanisms to mitigate the impact of statistical heterogeneity on convergence, either by using biased stochastic gradients or by introducing a \emph{memory} mechanism that extends \cite{DIANA19} to the Bayesian setting. In particular, we find that variance reduction indeed allows the proposed MCMC algorithm to converge towards the desired target posterior distribution when the number of observations becomes large.
(4) We illustrate the advantages of the proposed methods using several FL benchmarks. We show that the proposed methodology performs well compared to state-of-the-art Bayesian FL methods.

\vspace{-0.3cm}
\paragraph{Notations and Conventions}
The Euclidean norm on $\mathbb{R}^d$ is denoted by $\|\cdot\|$ and we set $\nsets = \nset\setminus\{0\}$.
For $n \in \N^*$, we refer to $\{1,\ldots,n\}$ with the notation $[n]$.
For $N \in \N^*$, we use $\wp_N$ to denote the power set of $[N]$ and define $\wpNn = \{ x \in \wpN \, :\,  \card(x) = n\}$ for any $n \in [N]$.
We denote by $\mathrm{N}(m,\Sigma)$ the Gaussian distribution with mean vector $m$ and covariance matrix $\Sigma$.
We define the sign function, for any $x \in \R$, as $\mathrm{sign}(x) = \mathbf{1}\{x\geq0\} - \mathbf{1}\{x<0\}$.
We define the Wasserstein distance of order $2$ for any probability measures $\mu,\nu$ on $\Rd$ with finite $2$-moment by $W_2 (\mu, \nu) = (\inf_{\zeta \in \mathcal{T}(\mu,\nu)} \int_{\mathbb{R}^d \times \mathbb{R}^d}\|\theta-\theta'\|^2\mathrm{d}\zeta(\theta,\theta'))^{\half}$, where $\mathcal{T}(\mu, \nu)$ is the set of transference plans of $\mu$ and $\nu$.


\section{QUANTISED LANGEVIN STOCHASTIC DYNAMICS}
\label{sec:QLSD}

In this section, we present the Bayesian FL framework and introduce the proposed methodology called \texttt{QLSD} along with two variance-reduced instances.

\paragraph{Problem Statement} We are interested in performing Bayesian inference on a parameter $\theta \in \Rd$ based on a training dataset $\mathrm{D}$.
We assume that the posterior distribution admits a product-form density with respect to the $d$-dimensional Lebesgue measure, \emph{i.e.}
\begin{equation}
  \label{eq:target_density}
  \pi\pr{\theta \mid \mathrm{D}} = \mathrm{Z}_{\pi}^{-1}\,\textstyle\prod_{i=1}^b\mathrm{e}^{-U_{i}(\theta)}  \eqsp,
\end{equation}
where $b \in \N^*$ and $\mathrm{Z}_{\pi} = \int_{\Rd} \prod_{i=1}^b\mathrm{e}^{-U_i(\theta)}\,\dd \theta$ is a normalisation constant.
This framework naturally encompasses the considered Bayesian FL problem. In this context, $\{\mathrm{e}^{-U_i}\}_{i \in [b]}$ stand for the unnormalised local posterior density functions associated to $b$ clients, where each client $i \in [b]$ is assumed to own a local dataset $\mathrm{D}_{i}$ such that $\mathrm{D} = \sqcup_{i=1}^b \mathrm{D}_{i}$.
The dependency of $U_{i}$ on the local dataset $\mathrm{D}_{i}$ is omitted for brevity.
A real-world illustration of the considered Bayesian problem is “multi-site fMRI classification” where each site (or client) owns a dataset coming from a local distribution because the methods of data generation and collection differ between sites. This results in different local likelihood functions, which combined with a local prior distribution, lead to heterogeneous local posteriors.

As in embarrassingly parallel MCMC approaches \citep{Neiswanger2014}, \eqref{eq:target_density} implicitly assumes that the prior can be factorized across clients, which can always be done although the choice of this factorization is an open question. 
This product-form formulation can be alleviated by considering a global prior on $\theta$ and only calculating its gradient contribution on the central server during computations, see \Cref{algo:QLSD}. 

A popular approach to sample from a target distribution with density $\pi$ defined in \eqref{eq:target_density} is based on Langevin dynamics with stochastic gradient  which, starting from an initial point $\theta_0$, defines a Markov chain $(\theta_k)_{k \in \N}$ by recursion:
\begin{equation}
  \label{eq:SGLD}
  \theta_{k+1} = \theta_k - \gamma H_{k+1}\prn{\theta_k} + \sqrt{2\gamma} Z_{k+1} \eqsp, \quad k \in \N\eqsp,
\end{equation}
where $\gamma \in \ocint{0,\bgamma}$, for some $\bar{\gamma} > 0$, is a discretisation time step, $(Z_k)_{k \in\nsets}$ is a sequence of i.i.d. standard Gaussian random variables and $(H_k)_{k \in \mathbb{N}^*}$ stand for unbiased estimators of $\nabla U$ with $U = \sum_{i=1}^b U_i$ \citep{PARISI1981,GrenanderMiller1994,Roberts1996}.
In a serial setting involving a single client which owns a dataset of size $N \in \N^*$, the potential $U$ writes $U = U_1 = \sum_{j=1}^N U_{1,j}$ for some functions $U_{1,j} : \rset^d\to \rset$, and a popular instance of this framework is \texttt{SGLD} \citep{Welling11}.  This algorithm consists in the recursion \eqref{eq:SGLD} with the specific choice  $H_{k+1}(\theta) = (N/n)\sum_{j \in \mathcal{S}_{k+1}}\nabla U_{1,j}(\theta)$, where $(\mathcal{S}_k)_{k \in\nset^*}$ is a sequence of \iid~uniform random subsets of $[N]$ of cardinal $n$.

In the FL framework, we assume that at each iteration $k$, the $i$-th client has access to an oracle $H_{k+1}^{(i)}$ based on its local negative log posterior density $U_i$, depending only on $\mathrm{D}_{i}$, 
so that 
$H_{k+1} = \sum_{i=1}^b H_{k+1}^{(i)}$ is a stochastic gradient oracle of $U$.
Note that we do not assume that $H_{k+1}^{(i)}$ is an unbiased estimator of $\nabla U_i$, but only assume that $H_{k+1}$ is unbiased.
This allows us to consider biased local stochastic gradient oracles with better convergence guarantees, see \Cref{sec:theory} for more details.
A simple adaptation of \texttt{SGLD}  to the FL framework under consideration
is given by recursion:
\begin{equation} \label{eq:LSD}
  \theta_{k+1} = \theta_k - \gamma \textstyle\sum_{i=1}^b H_{k+1}^{(i)}\prn{\theta_k} + \sqrt{2\gamma} Z_{k+1} \eqsp, k \in \N\eqsp.
\end{equation}
If for any $i \in [b]$, every potential function $U_i$ also admits a finite-sum expression \emph{i.e.} $U_i = \sum_{j=1}^{N_i} U_{i,j}$, similar to \texttt{SGLD}, we can for example use the local stochastic gradient oracles $H_{k+1}^{(i)}(\theta) = (N_i/n_i)\sum_{j \in \mathcal{S}^{(i)}_{k+1}}\nabla U_{i,j}(\theta)$, where $(\mathcal{S}^{(i)}_{k+1})_{k \in\nsets, \, i\in[b]}$ stand for \iid~uniform random subsets of $[N_i]$ of cardinal $n_i$.
However, considering the MCMC algorithm associated with the recursion \eqref{eq:LSD} is not adapted to the FL context.
Indeed, this algorithm would assume that each client is reliable and suffers from the same issues as \texttt{SGD} in a risk-based minimisation context, especially a  prohibitive communication overhead \citep{girgis2020shuffled}.

\paragraph{Proposed Methodology} To address this problem, we propose to both account for the \emph{partial participation of clients} and \emph{reduce the number of bits transmitted} during the upload period by performing a lossy compression of a subset of $\{H_{k+1}^{(i)}\}_{i \in [b],k\in \N^*}$. This method has been used extensively in the ``canonical'' FL literature \citep{alistarh2017qsgd,lin2018deep,Haddadpour20,Sattler20}, but interestingly has never been considered in Bayesian FL; see \Cref{table:overview}.

To this end, we introduce a compression operator $\mathscr{C}: \Rd \to \Rd$ that is unbiased, \emph{i.e.} for any $v \in \Rd$, $\mathbb{E}[\mathscr{C}(v)] = v$.
In recent years, numerous compression operators have been proposed \citep{seide2014-bit,aji-heafield-2017-sparse,Stich2018}.
For example, the \texttt{QSGD} approach proposed in \citet{alistarh2017qsgd} is based on stochastic quantisation.

\texttt{QSGD} considers for $\up$ a component-wise quantisation operator parameterised by a number of quantisation levels $s \ge 1$, which for each $j \in [d]$ and $v = (v_1,\ldots,v_d) \in \Rd$ are given by

{\small
\begin{align}
  \label{eq:def_quantisation_operator}
  \mathscr{C}^{(s,j)}(v) = \frac{\norm{v} \mathrm{sign}(v_j) }{s}\pr{l_j + \mathbf{1}\bbr{\xi_j \le \frac{s|v_j|}{\norm{v}} - l_j}},
\end{align}
}

\noindent where $l_j = \floor{s|v_j|/\norm{v}}$ and $\{\xi_j\}_{j \in [d]}$ is a sequence of i.i.d. uniform random variables on $[0,1]$.
In this particular case, we will denote the quantisation of $v$ via \eqref{eq:def_quantisation_operator} by $\mathscr{C}^{(s)}(v) = \{\mathscr{C}^{(s,j)}(v)\}_{j \in [d]}$.

The proposed general methodology, called \emph{Quantised Langevin Stochastic Dynamics} (\texttt{QLSD}) stands for a compressed and FL version of the specific instance of \texttt{SGLD} defined in \eqref{eq:LSD}.
More precisely, \texttt{QLSD} is an MCMC algorithm associated with the Markov chain $(\theta_k)_{k \in \N}$ starting from $\theta_0$ and defined for $k \in \N$ as
\begin{align*}
\label{eq:QLSD}
  \theta_{k+1} = \ &\theta_k - \gamma \frac{b}{|\mathcal{A}_{k+1}|}\sum_{i \in \mathcal{A}_{k+1}} \mathscr{C}_{k+1}\br{H_{k+1}^{(i)}\prn{\theta_k}} \\
  &+ \sqrt{2\gamma} Z_{k+1} \eqsp,
\end{align*}
where $\prn{\mathcal{A}_{k}}_{k \in \mathbb{N}^*}$ denotes the subset of active (\emph{i.e.} available) clients at iteration $k$, possibly random.
Note that we indexed $\mathscr{C}$ by $k+1$ to emphasize that this compression operator is a stochastic operator and hence varies across iterations, see \emph{e.g.} \eqref{eq:def_quantisation_operator}.
The derivation of \texttt{QLSD} in the considered Bayesian FL context is described in details in \Cref{algo:QLSD}.
A generalisation of \texttt{QLSD} taking into account \emph{heterogeneous communication constraints} between clients by considering different compression operators $\{\mathscr{C}^{(i)}\}_{i \in [b]}$ is available in the Supplementary Material, see \emph{e.g.} Section S1.
In the particular case of the finite-sum setting where each client owns a dataset of size $N_i$, \ie~for the choice $H^{(i)}_{k+1}(\theta) = (N_i/n_i)\sum_{j \in \mathcal{S}_{k+1}^{(i)}}\nabla U_{i,j}(\theta)$ for $\theta \in\rset^d$, $\mathcal{S}_{k+1}^{(i)} \in \wp_{N_i,n_i}$, we denote the corresponding instance of \qlsd~as \qlsdsharp.

In this paper, we have decided to focus only on a non-adjusted sampling algorithm (\texttt{QLSD}) since the derivations of non-asymptotic results are already consequent, see the Supplementary Material.
In addition, up to authors’ knowledge, a general consensus on the choice between Metropolis-adjusted algorithms and their unadjusted counterparts has not been achieved yet.

\begin{algorithm}
   \caption{Quantised Langevin Stochastic Dynamics (\texttt{QLSD})}
   \label{algo:QLSD}
  \begin{algorithmic}
     \State {\bfseries Input:} nb. iterations $K$, compression operators $\{\mathscr{C}_{k+1}\}_{k\in\mathbb{N}}$, stochastic gradients $\{H_{k+1}^{(i)}\}_{i \in [b],k\in\mathbb{N}}$, step-size $\gamma \in (0,\bar{\gamma}]$ and initial point $\theta_0$.
     \For{$k=0$ {\bfseries to} $K-1$}
     \For{$i \in \mathcal{A}_{k+1}$ \Comment{On active clients $\mathcal{A}_{k+1}$}}
        \State Compute $\textsl{g}_{i,k+1} = \mathscr{C}_{k+1}\br{H_{k+1}^{(i)}\prn{\theta_k}}$.
        \State Send $\textsl{g}_{i,k+1}$ to the central server.
       \EndFor
       \State \Comment{On the central server}
       \State Compute $\textsl{g}_{k+1} = \frac{b}{|\mathcal{A}_{k+1}|}\sum_{i\in \mathcal{A}_{k+1}}\textsl{g}_{i,k+1}$.
       \State Draw $Z_{k+1} \sim \mathrm{N}(0_d,\mathrm{I}_d)$
       \State Compute $\theta_{k+1} = \theta_k - \gamma \textsl{g}_{k+1} + \sqrt{2\gamma}Z_{k+1}$.
       \State Send $\theta_{k+1}$ to the $b$ clients.
     \EndFor
     \State {\bfseries Output:} samples $\{\theta_k\}_{k=0}^{K}$.
  \end{algorithmic}
\end{algorithm}

\paragraph{Variance-Reduced Alternatives} Consider the finite-sum setting \emph{i.e.} for any $i \in [b]$, $U_i = \sum_{j=1}^{N_i} U_{i,j}$ where $N_i$ is the size of the local dataset $\mathrm{D}_i$.
As highlighted in \Cref{sec:introduction}, \texttt{SGLD}-based approaches, including \Cref{algo:QLSD}, involve an invariant distribution that may deviate from the target posterior distribution when $\min_{i \in b} N_i$ goes to infinity, as stochastic gradients with large variance are used \citep{brosse2018promises,BakerFFN19}.
We deal with this problem by proposing two variance-reduced alternatives of \qlsdsharp~ that use control variates.
The simplest variance-reduced approach, referred to as \texttt{QLSD}$^\star$ (see \Cref{algo:QLSD-star}) and discussed in more details in the Supplementary Material (see Section S2), considers a fixed-point approach that uses a minimiser $\thetas$ of the potential $U$ \citep{brosse2018promises,BakerFFN19} defined as
\begin{equation}
  \label{eq:def_theta_star}
  \thetas \in \argmin_{\theta \in \Rd} \sum_{i=1}^bU_i(\theta)\eqsp.
\end{equation}
In this scenario, the stochastic gradient oracles write for each $i \in [b]$, $k \in \N^*$, $\theta \in\rset^d$ and $\mathcal{S}_{k+1}^{(i)} \in \wp_{N_i,n_i}$, $H_{k+1}^{(i)}(\theta) = (N_i/n_i)\sum_{j \in \mathcal{S}_{k+1}^{(i)}} [\nabla U_{i,j}(\theta) - \nabla U_{i,j}(\theta^\star)]$.
Although $\mathbb{E}[H_{k+1}] = \nabla U$, note that for each $i \in [b] $, $\mathbb{E}[H_{k+1}^{(i)}] \neq \nabla U_i$ so $H_{k+1}^{(i)}$ is not an unbiased estimate of $U_i$.
We show in \Cref{sec:theory} that introducing this bias improves the convergence properties of $\texttt{QLSD}^{\#}$ with respect to the discrepancy between local posterior distributions.
Since estimating $\theta^\star$ in a FL context might impose an additional computational burden on the sampling procedure, we propose another variance-reduced alternative referred to as \texttt{QLSD}$^{++}$ (see \Cref{algo:QLSD-VR}). This method builds on the Stochastic Variance Reduced Gradient (\texttt{SVRG}): it uses control variates $(\zeta_k)_{k \in \N}$ that are updated every $l \in \N^*$ iterations \citep{Johnson13SVRG} and at each iteration $k \in\nset$ and for any client $i\in[b]$,
the stochastic gradient oracle $H_{k+1}^{(i)}$   defined by $H_{k+1}^{(i)}(\theta) = (N_i/n_i)\sum_{j \in \mathcal{S}_{k+1}^{(i)}} [\nabla U_{i,j}(\theta) - \nabla U_{i,j}(\zeta_k)] + \nabla U_i(\zeta_k)$.
To reduce the impact of local posterior discrepancy on convergence, we take inspiration from the ``canonical'' FL literature and consider a \emph{memory term} $(\eta^{(i)}_{k})_{k \in \N}$ on each client $i \in [b]$ \citep{DIANA19,dieuleveut2020}.
At each iteration $k$, instead of directly compressing $H_{k+1}^{(i)}$, we compress the difference $H_{k+1}^{(i)} - \eta^{(i)}_k$, store it in $\textsl{g}_{i,k+1}$, and then compute the global stochastic gradient $\textsl{g}_{k+1} = \frac{b}{|\mathcal{A}_{k+1}|}\sum_{i \in \mathcal{A}_{k+1}}\textsl{g}_{i,k+1} + \sum_{i=1}^b\eta^{(i)}_{k}$.
The memory term $(\eta^{(i)}_{k})_{k \in \N}$ is then updated on each client $i \in [b]$, by the recursion $\eta^{(i)}_{k+1} = \eta^{(i)}_{k} + \alpha \mathbf{1}_{\mathcal{A}_{k+1}}(i)\textsl{g}_{i,k+1}$. The benefits of using this memory mechanism will be assessed theoretically in \Cref{sec:theory} and illustrated numerically in Section S5.2 in the Supplementary Material.
\begin{algorithm}[h]
   \caption{Variance-reduced Quantised Langevin Stochastic Dynamics (\texttt{QLSD}$^{++}$)}
   \label{algo:QLSD-VR}
  \begin{algorithmic}
     \State {\bfseries Input:} minibatch sizes $\{n_i\}_{i \in [b]}$, number of iterations $K$, compression operators $\{\mathscr{C}_{k+1}\}_{k\in\mathbb{N}^*}$, step-size $\gamma \in (0,\bar{\gamma}]$ with $\bar{\gamma} > 0$, initial point $\theta_0$ and $\alpha \in (0,\bar{\alpha}]$ with $\bar{\alpha} > 0$.
     \State \Comment{Memory mechanism initialisation}
     \State Initialise $\{\eta^{(1)}_0,\ldots,\eta^{(b)}_0\}$ and $\eta_0 = \sum_{i=1}^b \eta^{(i)}_0$.
     \For{$k=0$ {\bfseries to} $K-1$}
      \State \Comment{Update of the control variates}
      \If{$k \equiv 0$ ($\mathrm{mod} \ l$)}
      \State Set $\zeta_k = \theta_k$.
      \Else{}
      \State Set $\zeta_k = \zeta_{k-1}$
      \EndIf
     \For{$i \in \mathcal{A}_{k+1}$ \Comment{On active clients}}
        \State Draw $\mathcal{S}_{k+1}^{(i)} \sim \mathrm{Uniform}\pr{\wp_{N_i,n_i}}$.
        \State {\small Set $H_{k+1}^{(i)}(\theta_k) = (N_i/n_i)\sum_{j \in \mathcal{S}_{k+1}^{(i)}} [\nabla U_{i,j}(\theta_k) - \nabla U_{i,j}(\zeta_k)] + \nabla U_i(\zeta_k)$.}
        \State Compute $\textsl{g}_{i,k+1} = \mathscr{C}_{k+1}\pr{H_{k+1}^{(i)}(\theta_k) - \eta^{(i)}_{k}}$.
        \State Send $\textsl{g}_{i,k+1}$ to the central server.
        \State Set $\eta^{(i)}_{k+1} = \eta^{(i)}_{k} + \alpha \textsl{g}_{i,k+1}$.
     \EndFor
     \State \Comment{On the central server}
     \State Compute $\textsl{g}_{k+1} = \eta_k + \frac{b}{|\mathcal{A}_{k+1}|}\sum_{i \in \mathcal{A}_{k+1}}\textsl{g}_{i,k+1}$.
     \State Set $\eta_{k+1} = \eta_k + \alpha\sum_{i \in \mathcal{A}_{k+1}}^b\textsl{g}_{i,k+1}$.
     \State Draw $Z_{k+1} \sim \mathrm{N}(0_d,\mathrm{I}_d)$.
     \State Compute $\theta_{k+1} = \theta_k - \gamma \textsl{g}_{k+1} + \sqrt{2\gamma}Z_{k+1}$.
     \State Send $\theta_{k+1}$ to the $b$ clients.
     \EndFor
     \State {\bfseries Output:} samples $\{\theta_k\}_{k=0}^{K}$.
  \end{algorithmic}
\end{algorithm}
%
\section{THEORETICAL ANALYSIS}
\label{sec:theory}

This section provides a detailed theoretical analysis of the proposed methodology.
In particular, we will show the \emph{impact of using stochastic gradients}, \emph{partial participation} and \emph{compression} by deriving quantitative convergence bounds for \texttt{QLSD}, which is detailed in \Cref{algo:QLSD}.
We then derive non-asymptotic convergence bounds for \texttt{QLSD}$^\star$ and \texttt{QLSD}$^{++}$, and explicitly show that these variance-reduced algorithms indeed succeed in reducing both the variance caused by stochastic gradients and the effects of \emph{local posterior discrepancy} in the bounds we obtain for \qlsdsharp.
 We consider the following assumptions on the potential $U$.
\begin{assumption}\label{main:ass:potential_U}
  For any $i\in[b]$, $U_{i}$ is continuously differentiable. In addition, suppose that the following  hold.
  \begin{enumerate}[wide, labelwidth=!, labelindent=0pt,label=(\roman*),noitemsep,nolistsep]
  \item \label{main:ass:potential_U:1} $U$ is $\mU$-strongly convex, \emph{i.e. } for any $\theta_{1},\theta_{2} \in \Rd$, $\psLigne{\nabla U(\theta_1) - \nabla U(\theta_2)}{\theta_1-\theta_2} \geq \mtt \norm{\theta_1-\theta_2}^2$. 
  \item \label{main:ass:potential_U:2} $\U$ is $\lip$-Lipschitz, \emph{i.e. } for any $\theta_{1},\theta_{2} \in \Rd$, $\norm{\nabla\U\pr{\theta_{1}}-\nabla\U\pr{\theta_{2}}}
      \le \lip\norm{\theta_{1}-\theta_{2}}$.
    \end{enumerate}
\end{assumption}
Note that \Cref{main:ass:potential_U}-\ref{main:ass:potential_U:1} implies that $\U$ admits a unique minimiser denoted by $\theta^{\star}\in\Rd$.
%

The compression operators $\{\mathscr{C}_{k+1}\}_{k\in \N}$ are assumed to satisfy the following assumption.
\begin{assumption}\label{main:ass:compression}
  The compression operators $\{\mathscr{C}_{k+1}\}_{k\in \N}$ are independent and satisfy the following conditions.
  \begin{enumerate}[wide, labelwidth=!, labelindent=0pt,label=(\roman*),noitemsep,nolistsep]
    \item \label{main:ass:compression:unbiased} For any $k \in \N^*$, $v\in\R^d$, $\mathbb{E}[\up_{k}\prn{v}] = v$.
    \item \label{main:ass:compression:variance} There exists $\omega\ge 1$, such that for any $k \in \N^*$, $v\in\R^d$, $\mathbb{E}[\norm{\up_k\prn{v}-v}^{2}] \le \omega\norm{v}^{2}$.
  \end{enumerate}
\end{assumption}
 As an example, the assumption on the variance of the compression operator detailed in \Cref{main:ass:compression}-\ref{main:ass:compression:variance} is verified for the quantisation operator $\up^{(s)}$ defined in \eqref{eq:def_quantisation_operator} with $\omega = \min(d/s^2,\sqrt{d}/s)$ \citep[Lemma 3.1]{alistarh2017qsgd}.

\paragraph{Non-Asymptotic Analysis for \Cref{algo:QLSD}} We consider the following assumptions on the stochastic gradient oracles used in \texttt{QLSD}.
  \begin{assumption}\label{main:ass:stochastic_gradient}
    The random fields $\acn{\grad_{k+1}^{(i)}:\Rd\to\Rd}_{i\in[b],k\in \mathbb{N}}$ are independent and satisfy the following conditions.
    \begin{enumerate}[wide, labelwidth=!, labelindent=0pt,label=(\roman*),noitemsep,nolistsep]
    \item \label{main:ass:stochastic_gradient:1} For any $\theta\in\Rd$ and $k \in \N$, $\sum_{i=1}^{b}\mathbb{E}[\grad_{k+1}^{(i)}\prn{\theta}]=\nabla\U\pr{\theta}$.
    \item \label{main:ass:stochastic_gradient:2}
      There exist $\{\Mtt_i > 0\}_{i\in [b]}$, such that for any $i \in [b]$, $k \in \N$, $\theta_{1},\theta_{2} \in \Rd$, $\mathbb{E}\br{\norm{\grad_{k+1}^{(i)}\prn{\theta_{1}}-\grad_{k+1}^{(i)}\prn{\theta_{2}}}^{2}}\le \Mtt_i \ps{\theta_{1}-\theta_{2}}{\nabla\U_{i}\pr{\theta_{1}}-\nabla\U_{i}\pr{\theta_{2}}}.$
    \item \label{main:ass:stochastic_gradient:3} There exist $\sigma_{\star},\Btt^{\star}\in\R_+$ such that for any $\theta\in\Rd$, $k \in \N$, we have $\mathbb{E}\br{\norm{\grad_{k+1}^{(i)}\prn{\theta^{\star}}}}^2 \le \Btt^{\star}/b$, and $\mathbb{E}\br{\norm{\textstyle\sum_{i=1}^{b}\grad_{k+1}^{(i)}\parentheseLigne{\theta^{\star}}}^2} \le \sigmas^2$,
    where $\thetas$ is defined in \eqref{eq:def_theta_star}.
    \end{enumerate}
  \end{assumption}
  We can notice that \Cref{main:ass:stochastic_gradient}-\ref{main:ass:stochastic_gradient:2} implies that $\nabla\U_{i}$ is $\Mtt_i$-Lipschitz continuous since by the Cauchy-Schwarz inequality, for any $i \in [b]$ and any $\theta_{1},\theta_{2}\in\Rd$, $\norm{\nabla\U_{i}\pr{\theta_{1}}-\nabla\U_{i}\pr{\theta_{2}}}^2
    \le \Mtt_i\ps{\theta_{1}-\theta_{2}}{\nabla\U_{i}\pr{\theta_{1}}-\nabla\U_{i}\pr{\theta_{2}}}$.  Conversely,  in  the finite-sum setting, \Cref{main:ass:stochastic_gradient}-\ref{main:ass:stochastic_gradient:2} is satisfied by \qlsdsharp~with $\Mtt_i  = N_i \bMtt$ if for any $i \in [b]$ and $j \in [N_i]$, $U_{i,j}$ is convex and $\nabla\U_{i,j}$ is $\bMtt$-Lipschitz continuous, for $\bMtt \geq 0$ by \citet[Theorem 2.1.5]{nesterov2003introductory}.

    \begin{table*}
\caption{Order of the asymptotic biases $\{B_{\bgamma}, B_{\protect\ostar,\bgamma}, B_{\protect\oplus,\bgamma}\}$, associated to the three proposed MCMC algorithms, in squared 2-Wasserstein distance for two types of asymptotic. \textcolor{red}{Red} dependencies prevent from (quick) convergence while \textcolor{green!40!black}{green} dependencies ensure convergence of associated MCMC algorithms. $\thetas$ is defined in \eqref{eq:def_theta_star}.}
\label{table:bias}
\begin{center}
\begin{tabular}{llcccccc}
\hline
\addlinespace[0.05cm]
  Algo. & Bias & \phantom{aaa} & \multicolumn{3}{c}{\shortstack{Dependencies of the \\ asymptotic bias when $\bgamma \downarrow 0$}} & & \shortstack{Dependencies of the \\ asymptotic bias as $N_i \rightarrow \infty$ } \\
\cmidrule{3-6} &&$d$ & $\grad_{k+1}^{(i)}$ & $\Btt^\star$ & partial particip. & $\omega$ & \\
\addlinespace[0.05cm]
\hline
\addlinespace[0.05cm]
  \texttt{QLSD} & $B_{\bgamma}$ & $d$ & $\sigma^2_\star$ & $\textcolor{red}{\Btt^\star}$ & $(1-p)/p$ & $\omega$ & $\textcolor{red}{\Oh(N_i)}$ \\
\qlsdsharp & $B_{\bgamma}$ & $d$ & $N_i^2$ & $\textcolor{red}{\sum_{i=1}^b \norm{\nabla U_i(\thetas)}^2}$ & $(1-p)/p$ & $\omega$ & $\textcolor{red}{\Oh(N_i)}$ \\
\texttt{QLSD}$^\star$ & $B_{\ostar,\bgamma}$ &$d$ & $N_i$ & - & $(1-p)/p$ & $\omega$ & $ \textcolor{green!40!black}{d\Oh\pr{1}}$ \\
\texttt{QLSD}$^{++}$ & $B_{\oplus,\bgamma}$ & $d$ & $N_i$ & - & $(1-p)/p$ & $\omega$ & $ \textcolor{green!40!black}{d\Oh\pr{1}}$  \\
\addlinespace[0.05cm]
\hline
\end{tabular}
\end{center}
\vspace{-0.4cm}
\end{table*}

  In addition, it is worth mentioning that the first inequality in \Cref{main:ass:stochastic_gradient}-\ref{main:ass:stochastic_gradient:3} is also required for our derivation in the deterministic case where $\grad_{k+1}^{(i)} = \nabla U_i$ due to the compression operator.
  In this particular case, $\Btt^\star$ stands for an upper-bound on $\sum_{i=1}^b\norm{\nabla U_i(\theta^\star)}^2$ and corresponds to some discrepancy between local posterior density functions meaning that $\nabla U_i \neq \nabla U$ for $i \in [b]$. This phenomenon, referred to as \emph{data heterogeneity} in the risk-based literature \citep{DIANA19,scaffold20}, is ubiquitous in the FL context.

Finally, we assume for simplicity that \emph{clients' partial participation} is realised by each client having probability $p \in (0,1]$ of being active in each communication round.
\begin{assumption}
  \label{ass:A_k}
For any $k \in \N^*$, $\mathcal{A}_{k} = \{ i \in [b] \,: \, B_{i,k}= 1 \}$ where $\{B_{i,k}\,: \, i \in [b]\, , \, k\in\nsets\}$ is a family of \iid~Bernouilli random variables with success probability $p \in\ocint{0,1}$.
\end{assumption}
 A generalisation of this scheme considering different probabilities $p_i$ per client can be found in the Supplementary Material, see \emph{e.g.} Section S1.1.
  Under the above assumptions and by denoting $Q_{\gamma}$ the Markov kernel associated to \Cref{algo:QLSD}, the following convergence result holds.

\begin{theorem}\label{thm:QLSD}
  Assume \Cref{main:ass:potential_U}, \Cref{main:ass:compression}, \Cref{main:ass:stochastic_gradient} and \Cref{ass:A_k}.
  Then, there exists $\bgamma_{\infty}$ such that for $\bgamma < \bgamma_{\infty}$, there exist $A_{\bgamma},B_{\bgamma} >0$ (explicitly given in Section S1 in the Supplementary Material) satisfying for any probability measure $\mu\in\mathcal{P}_{2}\pr{\Rd}$, any step size
  $\gamma\in\ocint{0,\bgamma}$ and $k \in \N$,\vspace{-0.2cm}
  \begin{align*}
    &\wass^{2}\pr{\mu Q_{\gamma}^{k},\pi}
    \le (1 - \gamma \mtt/2)^k \cdot \wass^{2}\pr{\mu,\pi} + \gamma B_{\bgamma} \\
    &+ \gamma^2 A_{\bgamma} (1-\mtt \gamma/2)^{k-1} k \cdot \int_{\Rd}\|\theta-\theta^\star\|^2\mu(\rmd\theta)\eqsp,
  \end{align*}
  where $\thetas$ is defined in \eqref{eq:def_theta_star}.
\end{theorem}
\vspace{-0.2cm}Similar to \texttt{ULA} \citep{Dalalyan2017,durmus2018high} and \texttt{SGLD} \citep{Dalalyan2019,durmus2019analysis}, the upper bound given in \Cref{thm:QLSD} includes a contracting term that depends on the initialisation and a bias term $\gamma B_{\bgamma}$ that does not vanish with $k \rightarrow \infty$ due to the use of a fixed step size $\gamma$.
In the asymptotic scenario, \ie~$\bgamma \downarrow 0$, \Cref{table:overview} gives the dependencies of $B_{\bgamma}$ for \qlsd~ and its particular instance \qlsdsharp, in terms of key quantities associated with the setting we consider.
Similar to \texttt{SGLD}, we can observe that the use of stochastic gradients entails a bias term of order $\sigma^2_\star\Oh(\gamma)$.
On the other hand, the use of partial participation and compression compared to \texttt{SGLD} introduces an \emph{additional bias} of order $(\omega/p)(\mtt\Btt^\star + \lip \Mtt d)\Oh(\gamma)$, which grows with in particular $\Btt^\star$, corresponding to the impact of the local posterior discrepancy on convergence.

\paragraph{Non-Asymptotic Analysis for Variance-Reduced Alternatives}
We assume in the sequel that the potential functions $\{U_i\}_{i \in [b]}$ admit the finite-sum decomposition $U_{i}=\sum_{j=1}^{N_i}U_{i,j}$ for each $i\in [b]$ and consider the following assumptions.
\begin{assumption}
  \label{ass:potential_Ui}
  For any $i\in[b],j\in [N_i]$, $U_{i,j}$ is continuously differentiable and the following holds.
  \begin{enumerate}[wide, labelwidth=!, labelindent=0pt,label=(\roman*),noitemsep,nolistsep]
    \item There exists $\MH_i > 0$ such that, for any $\theta_{1},\theta_{2} \in \Rd$, $\norm{\nabla U_i(\theta_2) - \nabla U_i(\theta_1)}^2 \le  \MH \ps{\theta_{2}-\theta_{1}}{\nabla\U_{i}\parentheseLigne{\theta_{2}}-\nabla\U_{i}\parentheseLigne{\theta_{1}}}$.
    \item  \label{ass:potential_Ui_coco} There exists $\bMH\ge 0$ such that, for any $\theta_{1},\theta_{2}\in\Rd$, $\norm{\nabla\U_{i,j}\prn{\theta_{2}}-\nabla\U_{i,j}\prn{\theta_{1}}}^2
      \le \bMH \ps{\nabla\U_{i,j}\prn{\theta_{2}}-\nabla\U_{i,j}\prn{\theta_{1}}}{\theta_{2}-\theta_{1}}$.
  \end{enumerate}
\end{assumption}
As mentioned earlier, \Cref{ass:potential_Ui} is satisfied if for every $i \in[b]$ and $j \in [N_i]$, $U_{i,j}$ is convex and $\nabla U_{i,j}$ is $\bMH$-Lipschitz continuous.
Under these additional conditions, the following non-asymptotic convergence results hold for the two reduced-variance MCMC algorithms described in \Cref{sec:QLSD}.
Denote by $Q_{\ostar,\gamma}$ the Markov kernel associated to \qlsds~ with a step size $\gamma \in\ocint{0,\bgamma}$.
\begin{theorem}\label{thm:QLSD_VR}
  Assume \Cref{main:ass:potential_U}, \Cref{main:ass:compression}, \Cref{ass:A_k} and \Cref{ass:potential_Ui}.
    Then, there exists $\bgamma_{\ostar,\infty}$ such that for $\bgamma < \bgamma_{\ostar,\infty}$, there exist $A_{\ostar,\bgamma},B_{\ostar,\bgamma} >0$ (explicitly given in Section S2 in the Supplementary Material) satisfying for any probability measure $\mu\in\mathcal{P}_{2}\pr{\Rd}$, any step size
  $\gamma\in\ocint{0,\bgamma}$ and $k \in \N$,\vspace{-0.2cm}
  \begin{align*}
    &\wass^{2}\pr{\mu Q_{\ostar,\gamma}^{k},\pi}
    \le (1 - \gamma \mtt/2)^k \cdot \wass^{2}\pr{\mu,\pi} + \gamma B_{\ostar,\bgamma}\\
    &+ \gamma^2 A_{\ostar,\bgamma} (1-\mtt \gamma/2)^{k-1} k \cdot \int_{\Rd}\|\theta-\theta^\star\|^2\mu(\rmd\theta)\eqsp,
  \end{align*}
  where $\thetas$ is defined in \eqref{eq:def_theta_star}.
\end{theorem}
\vspace{-0.25cm}Compared to \qlsd~and \qlsds, \texttt{QLSD}$^{++}$ only defines an inhomogeneous Markov chain, see Section S3.3 in the Supplementary Material for more details.
For a step-size $\gamma \in \ocint{0,\bgamma}$ and an iteration $k\in\nset$, we denote by $\mu Q_{\oplus,\gamma}^{(k)}$ the distribution of $\theta_k$ defined by \qlsdpp~starting from $\theta_0$ with distribution $\mu$.
\begin{theorem}\label{theorem_QLSDpp}
  Assume \Cref{main:ass:potential_U}, \Cref{main:ass:compression}, \Cref{ass:A_k} and \Cref{ass:potential_Ui}, and let $l \in \N^*$ and $\alpha \in (0,1/(\omega+1)]$.
    Then, there exists $\bgamma_{\oplus,\infty}$ such that for $\bgamma < \bgamma_{\oplus,\infty}$, there exist $A_{\oplus,\bgamma},B_{\oplus,\bgamma},C_{\oplus,\bgamma} >0$ (explicitly given in Section S3 in the Supplementary Material and independent of $\alpha$) satisfying for any probability measure $\mu\in\mathcal{P}_{2}\pr{\Rd}$, any step size
  $\gamma\in\ocint{0,\bgamma}$ and $k \in \N$,\vspace{-0.2cm}
  \begin{align*}
    &\wass^{2}\prn{\mu Q_{\pstar,\gamma}^{(k)},\pi}
    \le (1 - \gamma \mtt/2)^k \cdot \wass^{2}\pr{\mu,\pi}  + \gamma B_{\oplus,\bgamma} \\
    &+ \gamma^2 A_{\oplus,\bgamma} (1-\gamma \mtt/2)^{\floor{k/l}} \cdot \int_{\Rd}\|\theta-\theta^\star\|^2\mu(\rmd\theta) \\
    &+ \gamma C_{\oplus,\bgamma} [(1-\alpha)^k\wedge (1-\gamma \mtt/2)^{\floor{k/l}}]\sum_{i=1}^b\norm{\nabla U_i(\theta^\star)}^2,
  \end{align*}
  where $\thetas$ is defined in \eqref{eq:def_theta_star}.
\end{theorem}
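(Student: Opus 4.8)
The plan is to extend the one-step synchronous-coupling argument behind \Cref{thm:QLSD} and \Cref{thm:QLSD_VR}, now propagating two extra auxiliary quantities along the chain: the anchor error $\norm{\zeta_k-\theta^\star}^2$ of the \texttt{SVRG} control variate, and the aggregate memory error $\mce_k=\sum_{i=1}^b\norm{\eta^{(i)}_k-\nabla U_i(\theta^\star)}^2$. First I would set up the synchronous coupling with the stationary Langevin diffusion: let $(\vartheta_t)_{t\ge0}$ solve $\rmd\vartheta_t=-\nabla U(\vartheta_t)\rmd t+\sqrt 2\,\rmd B_t$ with $\vartheta_0\sim\pi$, driven by the Brownian motion whose rescaled increments are the Gaussian terms $Z_{k+1}$ of \qlsdpp, and take $(\theta_k,\vartheta_{k\gamma})$ optimally coupled for the $\wass$-distance. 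Expanding $\norm{\theta_{k+1}-\vartheta_{(k+1)\gamma}}^2$, the shared noise cancels, and \Cref{ass:potential_U}-\ref{ass:potential_U:1}-\ref{ass:potential_U:2} together with \eqref{eq:intro:Nesterov} produce a contraction factor $(1-\gamma\mtt/2)$, up to (a) an Euler discretisation error of order $\gamma^2$ bounded in terms of moments of $\pi$ and (b) the gradient-mismatch term $\gamma^2\,\expe{\norm{\textsl{g}_{k+1}-\nabla U(\theta_k)}^2\mid\Fc_k}$. Using $\pi P_\gamma=\pi$ one converts this into a recursion for $\wass^2(\mu Q_{\pstar,\gamma}^{(k)},\pi)$.

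Second, I would bound the conditional variance of $\textsl{g}_{k+1}$. Since $\eta_k=\sum_i\eta^{(i)}_k$, one has $\textsl{g}_{k+1}-\nabla U(\theta_k)=\sum_i\{\up(H_i(\theta_k,\cdot)-\eta^{(i)}_k,\cdot)-(H_i(\theta_k,\cdot)-\eta^{(i)}_k)\}+\sum_i\{H_i(\theta_k,\cdot)-\nabla U_i(\theta_k)\}$; by \Cref{ass:compression}-\ref{ass:compression:variance} and client-wise independence the first block contributes at most $\omega\sum_i\expe{\norm{H_i(\theta_k,\cdot)-\eta^{(i)}_k}^2\mid\Fc_k}$, while the second one, the \texttt{SVRG} minibatch noise, is controlled by \Cref{ass:potential_Ui}-\ref{ass:potential_Ui_coco} through co-coercivity in terms of $\norm{\theta_k-\zeta_k}^2$. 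Writing $H_i(\theta_k,\cdot)-\eta^{(i)}_k$ as $[H_i-\nabla U_i(\theta_k)]+[\nabla U_i(\theta_k)-\nabla U_i(\theta^\star)]+[\nabla U_i(\theta^\star)-\eta^{(i)}_k]$ and using Lipschitzness and co-coercivity then gives $\expe{\norm{\textsl{g}_{k+1}-\nabla U(\theta_k)}^2\mid\Fc_k}\le c_1\norm{\theta_k-\theta^\star}^2+c_2\norm{\zeta_k-\theta^\star}^2+c_3\mce_k$ with constants explicit in $\omega,\lip,\MH,\bMH$.

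Third, I would close the system with the two auxiliary recursions. On \qlsdpp, $\zeta_k=\theta_{l\floor{k/l}}$, so $\expe{\norm{\zeta_k-\theta^\star}^2}=\expe{\norm{\theta_{l\floor{k/l}}-\theta^\star}^2}$; a drift estimate for the $\theta$-chain (strong convexity plus bounded gradient noise at $\theta^\star$, itself a by-product of the variance bound above) gives $\expe{\norm{\theta_j-\theta^\star}^2}\le(1-\gamma\mtt/2)^{j}\expe{\norm{\theta_0-\theta^\star}^2}+c_0$, which upon substitution is exactly what produces the $(1-\gamma\mtt/2)^{\floor{k/l}}$ prefactor of $\int_{\Rd}\norm{\theta-\theta^\star}^2\mu(\rmd\theta)$ in the statement. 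For the memory, $\expe{\eta^{(i)}_{k+1}-\nabla U_i(\theta^\star)\mid\Fc_k}=(1-\alpha)(\eta^{(i)}_k-\nabla U_i(\theta^\star))+\alpha(\nabla U_i(\theta_k)-\nabla U_i(\theta^\star))$, and the conditional fluctuation of $\eta^{(i)}_{k+1}$ is $\alpha^2$ times a compression-plus-\texttt{SVRG} variance; since $\alpha\le 1/(\omega+1)$ this produces $\expe{\mce_{k+1}}\le(1-\alpha/2)\expe{\mce_k}+\Oh(\alpha)(\expe{\norm{\theta_k-\theta^\star}^2}+\expe{\norm{\zeta_k-\theta^\star}^2})$. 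Unrolling this in two ways — directly, which yields the $(1-\alpha)^k$ decay of $\mce_0$, and by dominating $\mce_k$ through the already-established $\theta$-decay across \texttt{SVRG} blocks — gives the $[(1-\alpha)^k\wedge(1-\gamma\mtt/2)^{\floor{k/l}}]$ factor in front of $\sum_{i=1}^b\norm{\nabla U_i(\theta^\star)-\eta^{(i)}_0}^2$. Feeding all of this into the skeleton recursion, choosing $\bgamma_{\oplus,\infty}$ small enough that every factor $1-c\gamma$ stays positive and the cross-coupling terms get absorbed, and summing the resulting geometric series yields the four-term bound, from which $A_{\oplus,\bgamma},B_{\oplus,\bgamma},C_{\oplus,\bgamma}$ are read off; a quick inspection confirms $C_{\oplus,\bgamma}$ is $\alpha$-free.

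The hard part will be this last closing step: the three recursions — for $\expe{\norm{\theta_k-\vartheta_{k\gamma}}^2}$, for $\expe{\norm{\theta_k-\theta^\star}^2}$ (through which $\zeta_k$ enters) and for $\expe{\mce_k}$ — are mutually coupled, so one must choose the Lyapunov weights and the step-size threshold so that the coupling terms are dominated, while keeping the $\alpha$-dependence transparent enough to obtain the stated minimum of two rates together with an $\alpha$-free $C_{\oplus,\bgamma}$. The time-inhomogeneity introduced by the periodic refresh $\zeta_k=\theta_{l\floor{k/l}}$ — turning a within-block estimate into the clean $(1-\gamma\mtt/2)^{\floor{k/l}}$ factor, rather than a $(1-\gamma\mtt/2)^{k}$ one that would be lost in the middle of a block — is the other delicate point.
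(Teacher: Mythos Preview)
Your proposal is correct and follows essentially the same route as the paper: synchronous coupling with the stationary Langevin diffusion, a variance bound of the form $c_1\norm{\theta_k-\theta^\star}^2+c_2\norm{\zeta_k-\theta^\star}^2+c_3\mce_k$ (the paper's \Cref{lemma:varianceQLSDpp}), individual one-step recursions for $\norm{\theta_k-\theta^\star}^2$ and $\mce_k$ (\Cref{lemma:QLSDpp_theta_thetastar}, \Cref{lemma:QLSDpp_thetastar_eta}), and a Lyapunov combination to close the system. The paper makes the Lyapunov weight explicit, taking $\psi(\theta,\eta)=\norm{\theta-\theta^\star}^2+(3\omega/\alpha)\gamma^2\mce$, and the $\alpha$-independence of $C_{\oplus,\bgamma}$ is obtained precisely through the step-size restriction $\gamma\le \alpha/(3\mtt)$ (hidden in $\bgamma_{\alpha,1}$), which cancels the $1/\alpha$ in the Lyapunov weight---this is the concrete mechanism behind your ``quick inspection'', and is worth spelling out.
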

\vspace{-0.2cm}\Cref{table:bias} provides the dependencies of the asymptotic bias terms $ B_{\ostar,\bgamma},  B_{\oplus,\bgamma}$ as $\bgamma \downarrow 0$ with respect to key quantities associated to the problem we consider.
For comparison, we do the same regarding the specific instance of \Cref{algo:QLSD}, \texttt{QLSD}$^{\#}$.
Remarkably, thanks to biased local stochastic gradients for \texttt{QLSD}$^\star$ and the memory mechanism for \texttt{QLSD}$^{++}$, we can notice that their associated asymptotic biases do not depend on local posterior discrepancy in contrast to \texttt{QLSD}$^{\#}$.
This is in line with non-asymptotic convergence results in risk-based FL which also show that the impact of data heterogeneity can be alleviated using such a memory mechanism \citep{Artemis20}.
The impact of stochastic gradients is discussed in further details in the next paragraph.

\paragraph{Consistency Analysis in the Big Data Regime} In \citet{brosse2018promises}, it was shown that \texttt{ULA} and \texttt{SGLD} define homogeneous Markov chains, each of which admits a unique stationary distribution. However, while the invariant distribution of \texttt{ULA} gets closer to $\pi$ as $N_i$ increases, conversely the invariant measure of \texttt{SGLD} never approaches $\pi$ and is in fact very similar to the invariant measure of \texttt{SGD}.
Moreover, the non-compressed counterpart of \texttt{QLSD}$^{\star}$ has been shown not to suffer from this problem, and it has been theoretically proven to be a viable alternative to \texttt{ULA} in the Big Data environment.
Since \qlsd~ is a generalisation of \texttt{SGLD}, the conclusions of \citet{brosse2018promises} hold.
On the other hand, we show that the reduced-variance alternatives to \texttt{QLSD} that we introduced provide more accurate estimates of $\pi$ as $N_i$ increases, see the last column in \Cref{table:bias}.
Detailed calculations are deferred to Section S4 in the Supplementary Material.

\section{NUMERICAL EXPERIMENTS}
\label{sec:experiments}
This section illustrates our methodology with three numerical experiments that include both synthetic and real datasets.
For all experiments, we consider the finite-sum setting and use the stochastic quantisation operator $\up^{(s)}$ for $s \geq1$ defined in \eqref{eq:def_quantisation_operator} to perform the compression step. In this case \Cref{main:ass:compression}-\ref{main:ass:compression:variance} is verified with $\omega = \min(d/s^2,\sqrt{d}/s)$.
Further experimental results are given in Section S5 in the Supplementary Material.
\begin{figure}
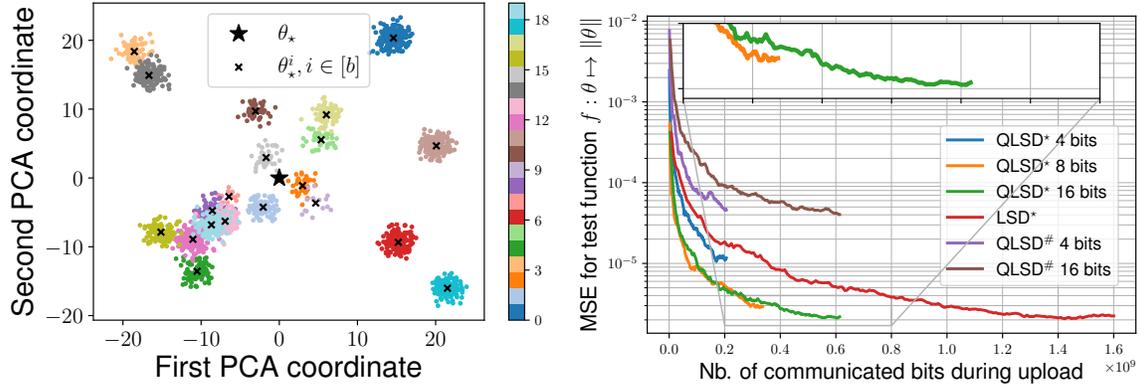

  \begin{center}
    \includegraphics[scale=0.55]{images/toy_example_noniid_dataset.pdf}
    \includegraphics[scale=0.55]{images/toy_example_MSE_mean.pdf}
  \end{center}
  \vspace{-0.4cm}
  \caption{Toy Gaussian example. (top) 2D projection of the heterogeneous synthetic dataset where each color refers to a client and each dot is an observation $y_{i,j}$. (bottom) Estimation performances of the considered Bayesian FL algorithms.\label{fig:toy_example}}
  \vspace{-0.4cm}
\end{figure}

\vspace{-0.1cm}
\paragraph{Toy Gaussian Example} This first experiment aims at illustrating the general behavior of \Cref{algo:QLSD} with respect to the use of stochastic gradients and compression scheme.
To this purpose, we set $b=20$ and $d=50$ and consider a Gaussian posterior distribution with density defined in \eqref{eq:target_density} where, for any $i \in [b]$ and $\theta \in \Rd$, $U_i(\theta) = \sum_{j=1}^{N_i} \|\theta - y_{i,j}\|^2/2$, $\{y_{i,j}\}_{i \in [b],j \in [N_i]}$ being a set of synthetic independent but not identically distributed observations across clients and $N_i \in [10,200]$, see \Cref{fig:toy_example} (top row).
Note that in this specific case, $\theta^\star$ admits a closed form expression.
For all the algorithms, we choose the (optimised) step-size $\gamma = 4.9 \times 10^{-4}$ and choose a minibatch size $n_i=\floor{N_i/10}$.
Instances of \qlsdsharp~and \qslds~using $s = 2^p$ are referred to as p-bits instances of these MCMC algorithms.
We compare these algorithms with the non-compressed counterpart of \qslds~referred to as \texttt{LSD}$^\star$, see \Cref{algo:LSD-star}.
\Cref{fig:toy_example} shows the behavior of the mean squarred error (MSE) associated to the test function $f:\theta \mapsto \norm{\theta}$, computed using 30 independent runs of each algorithm, with respect to the number of bits transmitted.
We can notice that \qlsds~always outperforms \qlsdsharp and that decreasing the value of $\omega$ does not significantly reduce the bias associated to \qlsds.
This illustrates the impact of the variance of the stochastic gradients and supports our theoretical analysis summarised in \Cref{table:bias}.
On the other hand, \texttt{QLSD}$^\star$ with $s=2^{16}$ achieves a similar MSE as \texttt{LSD}$^\star$ while requiring roughly 2.5 times less number of bits.

\vspace{-0.2cm}
\paragraph{Bayesian Logistic Regression} In this experiment, we compare the proposed methodology based on gradient compression with two existing \texttt{FedAvg}-type MCMC algorithms.
Since $\theta^\star$ defined in \eqref{eq:def_theta_star} is not easily available, we implement \texttt{QLSD}$^{++}$ detailed in \Cref{algo:QLSD-VR}.
We adopt a zero-mean Gaussian prior with covariance matrix $2\cdot10^{-2} \mathrm{I}_d$ and use the \textsc{FEMNIST} dataset \citep{FEMNIST}.
We set $b=50$, $l=100$, $\alpha = 1/(\omega+1)$ and $\gamma=10^{-5}$.
We launch \texttt{QLSD}$^{++}$ for $s \in \{2^4,2^8,2^{16}\}$ and compare its performances with \texttt{DG-SGLD} \citep{2021_ICML_DGLMC} and \texttt{FSGLD} \citep{2020_conducive_gradients} which use multiple local steps to address the communication bottleneck.
We are interested in performing uncertainty quantification by estimating highest posterior density (HPD) regions.
For any $\alpha \in (0,1)$, we define $\mathcal{C}_{\alpha} = \{\theta \in \Rd ; -\log \pi(\theta|\mathrm{D}) \le \eta_{\alpha}\}$ where $\eta_{\alpha} \in \mathbb{R}$ is chosen such that $\int_{\mathcal{C}_{\alpha}}\pi(\theta|\mathrm{D})\dd\theta = 1-\alpha$.
We compute the relative HPD error based on the scalar summary $\eta_{\alpha}$, \emph{i.e.} $|\eta_{\alpha}-\eta_{\alpha}^{\texttt{LSD}}|/\eta_{\alpha}^{\texttt{LSD}}$ where $\eta_{\alpha}^{\texttt{LSD}}$ has been estimated using the non-compressed counterpart of \texttt{QLSD}$^{++}$, referred to as \texttt{LSD}$^{++}$ and standing for a serial variance-reduced \texttt{SGLD}, see \Cref{algo:LSD-plus}.
\Cref{table:log_reg} gives this relative HPD error for $\alpha=0.01$ and provides the relative efficiency of \texttt{QLSD}$^{++}$ and competitors corresponding to the savings in terms of transmitted bits per iteration.
One can notice that the proposed approach provides similar results as its non-compressed counterpart while being 3 to 7 times more efficient.
In addition, we show that \texttt{QLSD}$^{++}$ provides similar performances as \texttt{DG-SGLD} and \texttt{FSGLD} which highlight that gradient compression and periodic communication are competing approaches.
\begin{table}
\caption{Bayesian Logistic Regression.}\label{table:log_reg}
\centering
\begin{tabular}{ccc}\\\toprule
Algorithm & 99\% HPD error & Rel. efficiency \\\midrule
\texttt{FSGLD} &5.4e-3 & 6.2\\
\texttt{DG-SGLD} &5.2e-3 & 6.4\\
\texttt{QLSD}$^{++}$ 4 bits &6.1e-3 & 7.6\\
\texttt{QLSD}$^{++}$ 8 bits &4.3e-3 & 6.7\\
\texttt{QLSD}$^{++}$ 16 bits &6.9e-4 & 3.1\\
\bottomrule
\end{tabular}
\vspace{-0.3cm}
\end{table}

\begin{table*}  
\centering 
\caption{Performances of Bayesian FL algorithms on the considered Bayesian neural networks problem.}   
    \begin{tabular}{ccccccccc} \\ \toprule
    Method &\texttt{HMC} & \texttt{SGLD} & \texttt{QLSD}$^{++}$ & \texttt{QLSD}$^{++}$ \texttt{PP} & \texttt{FedBe-Dirichlet} & \texttt{FedBe-Gauss.} &  \texttt{DG-SGLD} &  \texttt{FSGLD} \\
    \midrule
    Accuracy & 89.6 & 88.8 & 88.1 & 86.6 & 90.7 & 90.2 & 92.2 & 87.5 \\
    Agreement & 0.94 & 0.91 & 0.90 & 0.90 & 0.90 & 0.89 & 0.91 & 0.91 \\
    TV & 0.07 & 0.11 & 0.12 & 0.12 & 0.16 & 0.16 & 0.13 & 0.13 \\
    \bottomrule
    \end{tabular}
    \label{table:bbn_comparison}
\end{table*}

\paragraph{Bayesian Neural Networks}
In our third experiment, we go beyond the scope of our theoretical analysis by performing posterior inference in Bayesian neural networks.
We use the {ResNet-20} model \citep{he2016deep}, choose a zero-mean Gaussian prior distribution with variance $1/5$ and consider the classification problem associated with the \textsc{CIFAR-10} dataset \citep{krizhevsky2009learning}.
We run \texttt{QLSD}$^{++}$ with $s=2$, $l=20$, $\alpha=1/(\omega+1)$, and with either $p=1$ (full participation) or $p=0.25$ (partial participation).
We compare the proposed methodology with a long-run Hamiltonian Monte Carlo (\texttt{HMC}) considered as a ``ground truth'' \citep{izmailov2021bayesian} and $\texttt{SGLD}$.
For completeness, we also implement four other distributed/federated approximate sampling approaches, namely two instances of \texttt{FedBe} \citep{FedBE21}, \texttt{DG-SGLD} and \texttt{FSGLD}.
Following \citet{wilson2021evaluating}, we compare the aforementioned algorithms through three metrics: classification \emph{accuracy} on the test dataset using the minimum mean-square estimator, \emph{agreement} between the top-1 prediction given by each algorithm and the one given by \texttt{HMC} and \emph{total variation} between approximate and ``true'' (associated with \texttt{HMC}) predictive distributions.
More details about algorithms' hyperparameters and considered metrics are given in Section S5.3 in the Supplementary Material.
The results we obtain are gathered in \Cref{table:bbn_comparison}. In terms of agreement and total variation, \texttt{QLSD}$^{++}$ (even with partial participation) gives similar results as \texttt{SGLD} and competes favorably with other existing federated approaches.
\Cref{fig:BNN} complements this empirical analysis by showing calibration curves of posterior predictive distributions.

\begin{figure}
  \begin{center}
    \mbox{{\includegraphics[scale=0.65]{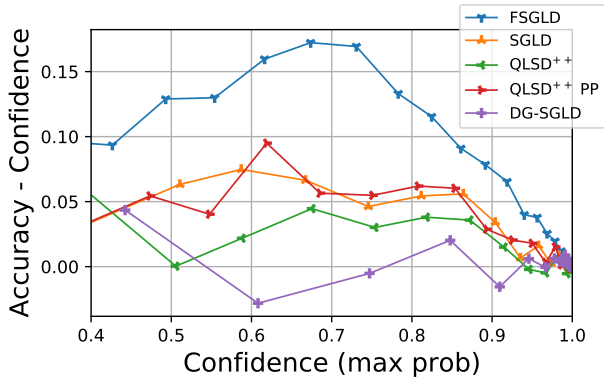}}}
  \end{center}
  \vspace{-0.4cm}
   \caption{Bayesian Neural Networks. \label{fig:BNN}}
   \vspace{-0.4cm}
\end{figure}

\section{CONCLUSION}
\label{sec:conclu}
In this paper, we presented a general methodology based on Langevin stochastic dynamics for Bayesian FL.
In particular, we addressed the challenges associated with this new ML paradigm by assuming that a subset of clients sends compressed versions of its local stochastic gradient oracles to the central server.
Moreover, the proposed method was found to have favorable convergence properties, as evidenced by numerical illustrations.
In particular, it compares favorably to \texttt{FedAvg}-type Bayesian FL algorithms.
A limitation of this work is that the proposed method does not target the initial posterior distribution due to the use of a fixed discretisation time step.
Therefore, this work paves the way for more advanced Bayesian FL approaches based, for example, on Metropolis-Hastings schemes to remove asymptotic biases.
In addition, although the data ownership issue is implicitly tackled by the FL paradigm by not sharing data, stronger privacy guarantees can be ensured, typically by combining differential privacy, secure multi-party computation and homomorphic encryption methods. 
Proposing a differentially private version of our methodology is a possible extension of our work, that is left for further work.
This work has no direct societal impact.

\subsubsection*{Acknowledgements}

The authors acknowledge support of the Lagrange Mathematics and Computing Research Center. 

\bibliography{biblio}
\bibliographystyle{plainnat}


\clearpage
\appendix

\thispagestyle{empty}

{\hsize\textwidth
    \linewidth\hsize \toptitlebar {\centering
        {\Large\bfseries Supplementary Material:\\QLSD: Quantised Langevin Stochastic Dynamics for Bayesian Federated Learning \par}}
    \bottomtitlebar}

\theoremstyle{plain}
\newtheorem{unlemma}{Lemma S}
\newtheorem{unproposition}{Proposition S}
\newtheorem{uncorollary}{Corollary S}
\newtheorem{untheorem}{Theorem S}

\setcounter{equation}{0}
\setcounter{assumption}{0}
\setcounter{figure}{0}
\setcounter{table}{0}
\setcounter{algorithm}{0}
\setcounter{page}{1}
\makeatletter
\renewcommand{\theequation}{S\arabic{equation}}
\renewcommand{\theassumption}{S\arabic{assumption}}
\renewcommand{\thefigure}{S\arabic{figure}}
\renewcommand{\thetable}{S\arabic{table}}
\renewcommand{\thetheorem}{S\arabic{theorem}}
\renewcommand{\thelemma}{S\arabic{lemma}}
\renewcommand{\thesection}{S\arabic{section}}
\renewcommand{\theremark}{S\arabic{remark}}
\renewcommand{\theproposition}{S\arabic{proposition}}
\renewcommand{\thecorollary}{S\arabic{corollary}}
\renewcommand{\thealgorithm}{S\arabic{algorithm}}
 
\paragraph{Notations and conventions.}

We denote by $\mathcal{B}\parentheseLigne{\mathbb{R}^d}$ the Borel $\sigma$-field of $\mathbb{R}^d$, $\mathbb{M}\parentheseLigne{\mathbb{R}^d}$ the set of all Borel measurable functions $f$ on $\mathbb{R}^d$ and $\norm{\cdot}$ the Euclidean norm on $\mathbb{R}^d$.
For $\mu$ a probability measure on $\parentheseLigne{\Rd,\mathcal{B}\parentheseLigne{\Rd}}$ and $f \in \mathbb{M}\parentheseLigne{\mathbb{R}^d}$ a $\mu$-integrable function, denote by $\mu\parentheseLigne{f}$ the integral of $f$ with respect to (w.r.t.) $\mu$.
Let $\mu$ and $\nu$ be two sigma-finite measures on $\parentheseLigne{\Rd,\mathcal{B}\parentheseLigne{\Rd}}$. 
Denote by $\mu \ll \nu$ if $\mu$ is absolutely continuous w.r.t. $\nu$ and $\rmd \mu/\rmd \nu$ the associated density. 
We say that $\zeta$ is a transference plan of $\mu$ and $\nu$ if it is a probability measure on $\parentheseLigne{\Rd \times \Rd,\mathcal{B}\parentheseLigne{\Rd \times \Rd}}$ such that for all measurable set $\mathsf{A}$ of $\Rd$, $\zeta\parentheseLigne{\mathsf{A} \times \Rd} = \mu\parentheseLigne{\mathsf{A}}$
and $\zeta\parentheseLigne{\Rd \times \mathsf{A}} = \nu\parentheseLigne{\mathsf{A}}$.
We denote by $\mathcal{T}\parentheseLigne{\mu,\nu}$ the set of transference plans of $\mu$ and $\nu$.
 In addition, we say that a couple of $\mathbb{R}^d$-random variables $\parentheseLigne{X,Y}$ is a coupling of $\mu$ and $\nu$ if there exists $\zeta \in \mathcal{T}\parentheseLigne{\mu,\nu}$ such that $\parentheseLigne{X,Y}$ are distributed according to $\zeta$.
 We denote by $\mathcal{P}_{2}\parentheseLigne{\Rd}$ the set of probability measures with finite $2$-moment: for all $\mu \in \mathcal{P}_{2}\parentheseLigne{\Rd},\int_{\Rd} \|x\|^{2} \rmd\mu\parentheseLigne{x} < \infty$. 
We define the squared Wasserstein distance of order $2$ associated with $\|\cdot\|$ for any probability measures $\mu,\nu \in \mathcal{P}_{2}\parentheseLigne{\Rd}$ by
\begin{equation*}
W_{2}^{2} \parentheseLigne{\mu,\nu} = \inf_{\zeta \in \mathcal{T}\parentheseLigne{\mu,\nu}} \int_{\mathbb{R}^d \times \mathbb{R}^d}\|x-y\|^{2}\mathrm{d}\zeta\parentheseLigne{x,y} \eqsp.
\end{equation*}
By \citet[Theorem 4.1]{Villani2008}, for all $\mu$, $\nu$ probability measures on $\Rd$, there exists a transference plan $\zeta^{\star} \in \mathcal{T}\parentheseLigne{\mu,\nu}$ such that for any coupling $\parentheseLigne{X,Y}$ distributed according to $\zeta^{\star}$, $W_{2}\parentheseLigne{\mu,\nu} = \E[\|x-y\|^{2}]^{1/2}$. 
This kind of transference plan (respectively coupling) will be called an optimal transference plan (respectively optimal coupling) associated with $W_{2}$. 
By \citet[Theorem 6.16]{Villani2008}, $\mathcal{P}_{2}\parentheseLigne{\Rd}$ equipped with the
Wasserstein distance $W_{2}$ is a complete separable metric space.
For the sake of simplicity, with little abuse, we shall use the same notations for
a probability distribution and its associated probability density function.
For $n \ge 1$, we refer to the set of integers between $1$ and $n$ with the notation $[n]$ and $\wp_n$ the power set of $[n]$.
The $d$-multidimensional Gaussian probability distribution with mean $\mu \in \Rd$ and covariance matrix $\Sigma \in \mathbb{R}^{d \times d}$ is denoted by $\gauss\parentheseLigne{\mu,\Sigma}$.

\section{PROOF OF \Cref{thm:QLSD}}


This section aims at proving \Cref{thm:QLSD} in the main paper.

\subsection{Generalised quantised Langevin stochastic dynamics}
We  show  that \texttt{QLSD} defined in \Cref{algo:QLSD} in the main paper can be cast into a more general framework
that we refer to as generalised quantised Langevin stochastic
dynamics. Then, the guarantees for {\qlsd}~will be a simple consequence of the ones that we will establish for generalised {\qlsd}.    
For ease of reading, we recall first the setting and the
assumptions that we consider all along the paper.  Recall that the dataset $\mathrm{D}$ is assumed to be partitioned into $b$ \emph{shards} $\{\mathrm{D}_{i}\}_{i=1}^{b}$ such that
$\sqcup_{i=1}^{b} \mathrm{D}_{i} = \mathrm{D}$ and the posterior distribution of interest is assumed
to admit a density with respect to the $d$-dimensional Lebesgue
measure which factorises across clients, \emph{\emph{i.e.}} for any
$\theta\in\Rd$,
\begin{align*}\label{eq:target_density_sup}
  &\pi\parentheseLigne{\theta} = \exp\ac{-U\parentheseLigne{\theta}} / \int_{\Rd} \mathrm{e}^{-U\parentheseLigne{\theta}}\,\rmd \theta \eqsp,&
  &\U\parentheseLigne{\theta} = \sum_{i=1}^{b} U_{i}\parentheseLigne{\theta}\eqsp.
\end{align*}
We consider the following assumptions on the potential $\U$.
\begin{assumption}\label{ass:potential_U}
  For any $i\in[b]$, $U_{i}$ is continuously differentiable. In addition, suppose that the following conditions hold.
  \begin{enumerate}[wide, labelwidth=!, labelindent=0pt,label=(\roman*),noitemsep,nolistsep]
  \item \label{ass:potential_U:1} $U$ is $\mU$-strongly convex, \emph{i.e.} for any $\theta_{1},\theta_{2} \in \Rd$,
    \begin{equation*}
    U\parentheseLigne{\theta_{1}} \ge U\parentheseLigne{\theta_{2}} +\ps{\theta_{1}-\theta_{2}}{\nabla\U\parentheseLigne{\theta_{2}}} +\mU\norm{\theta_{1}-\theta_{2}}^{2}/2\eqsp.
    \end{equation*}
  \item \label{ass:potential_U:2} $\U$ is $\lip$-Lipschitz, \emph{i.e.} for any $\theta_{1},\theta_{2} \in \Rd$,
    \begin{equation*}\label{eq:bound:lipschitz}
      \norm{\nabla\U\parentheseLigne{\theta_{1}}-\nabla\U\parentheseLigne{\theta_{2}}}
      \le \lip\norm{\theta_{1}-\theta_{2}}\eqsp.
    \end{equation*}
    \end{enumerate}
\end{assumption}
Note that \Cref{ass:potential_U}-\ref{ass:potential_U:1} implies that $\U$ admits a unique minimiser denoted by $\theta^{\star}\in\Rd$.
Moreover, for any $(\theta_1,\theta_1)\in\Rd$, \Cref{ass:potential_U}-\ref{ass:potential_U:1}-\ref{ass:potential_U:2} combined with \citet[Equation 2.1.24]{nesterov2003introductory} shows that
\begin{equation}\label{eq:intro:Nesterov}
  \ps{\nabla\U \parentheseLigne{\theta_2}-\nabla\U \parentheseLigne{\theta_1}}{\theta_2-\theta_1}
  \ge \frac{\mU \lip}{\mU+\lip}\norm{\theta_2-\theta_1}^2
  +\frac{1}{\mU+\lip}\norm{\nabla\U \parentheseLigne{\theta_2}-\nabla\U \parentheseLigne{\theta_1}}^2\eqsp.
\end{equation}
We consider the following assumptions on the family $\acn{\grad_{i}:\Rd\times \msx_{1}\to\Rd}_{i\in[b]}$ and  $\up$. 

  \begin{assumption}\label{ass:compression}
  There exists a probability measure $\nu_{2}$ on a measurable space $(\msx_{2},\mathcal{X}_{2})$ and a family of measurable functions $\{\up_i:\Rd\times\msx_2\to\Rd\}_{i\in [b]}$ such that the following conditions hold.   
  \begin{enumerate}[wide, labelwidth=!, labelindent=0pt,label=(\roman*),noitemsep,nolistsep]
    \item \label{ass:compression:unbiased} For any $\theta\in\R^d$ and any $i \in [b]$, $\int_{\msx_{2}}\up_i\parentheseLigne{\theta,x^{(2)}}\,\nu_{2}\parentheseLigne{\rmd x^{(2)}} = \theta$.
    \item \label{ass:compression:variance} There exist $\{\omega_i\in\R_+\}_{i\in[b]}$, such that for any $\theta\in\R^d$ and any $i \in [b]$,    
    \begin{equation*}\int_{\msx_{2}}\norm{\up_i\parentheseLigne{\theta,x^{\parentheseLigne{2}}}-\theta}^{2}\,\nu_{2}\parentheseLigne{\rmd x^{\parentheseLigne{2}}} \le \omega_i\norm{\theta}^{2}\eqsp.\end{equation*}
    \end{enumerate}
  \end{assumption}
  \begin{assumption}\label{ass:stochastic_gradient}
    There exist a family of probability measures $\{\nu_{1}^{(i)}\}_{i \in [b]}$ defined on measurable spaces $\{(\msx_{1}^{(i)},\mathcal{X}_{1}^{(i)})\}_{i \in [b]}$ and a family of measurable functions $\acn{\grad_{i}:\Rd\times \msx_{1}^{(i)} \to\Rd}_{i\in[b]}$ such that the following conditions hold.
    \begin{enumerate}[wide, labelwidth=!, labelindent=0pt,label=(\roman*),noitemsep,nolistsep]
    \item \label{ass:stochastic_gradient:1} For any $\theta\in\Rd$,
      \begin{equation*}        
        \sum_{i=1}^{b}\int_{\msx_{1}^{(i)}}\grad_{i}\parentheseLigne{\theta,x^{\parentheseLigne{1,i}}}\nu_{1}^{(i)}(\rmd x^{(1,i)}) = \nabla\U(\theta)\eqsp.
      \end{equation*}
    \item \label{ass:stochastic_gradient:2}
      There exist $\{\MH_i > 0\}_{i \in [b]}$, such that for any $i \in [b]$, $\theta_{1},\theta_{2} \in \Rd$,
      \begin{equation*}
        \int_{\msx_{1}^{(i)}}\norm{\grad_{i}\parentheseLigne{\theta_{2},x^{\parentheseLigne{1,i}}}-\grad_{i}\parentheseLigne{\theta_{1},x^{\parentheseLigne{1,i}}}}^{2}\nu_{1}^{(i)}\parentheseLigne{\rmd x^{\parentheseLigne{1,i}}} \le \MH_i \ps{\theta_{2}-\theta_{1}}{\nabla\U_{i}\parentheseLigne{\theta_{2}}-\nabla\U_{i}\parentheseLigne{\theta_{1}}}\eqsp.
      \end{equation*}
    \item \label{ass:stochastic_gradient:3} There exists $\sigmas,\Bs\in\R_+$ such that for any $i\in [b]$, $\theta\in\Rd$, we have
      \begin{align} \label{eq:stochastic_gradient:3}
        &\int_{\msx_{1}^{(i)}}\norm{\grad_{i}\parentheseLigne{\theta^{\star},x^{\parentheseLigne{1}}}}^2\nu_{1}^{(i)}\parentheseLigne{\rmd x^{\parentheseLigne{1}}}\le \Bs/b \eqsp,&       &\int_{\msx_{1}^{(1)} \times \cdots \times \msx_{1}^{(b)}}\norm{\sum_{i=1}^{b}\grad_{i}\parentheseLigne{\theta^{\star},x^{\parentheseLigne{1,i}}}}^2 \otimes_{i=1}^b \nu_{1}^{(i)}\parentheseLigne{\rmd x^{\parentheseLigne{1,i}}} \le \sigmas^2\eqsp.
      \end{align}
    \end{enumerate}
  \end{assumption}
  We can notice that \Cref{ass:stochastic_gradient}-\ref{ass:stochastic_gradient:2} implies that $\nabla\U_{i}$ is $\MH_i$-Lipschitz continuous since by the Cauchy Schwarz inequality, for any $i \in [b]$ and any $\theta_{1},\theta_{2}\in\Rd$,
  \begin{equation*}\label{eq:bound:jensen}
    \norm{\nabla\U_{i}\parentheseLigne{\theta_{1}}-\nabla\U_{i}\parentheseLigne{\theta_{2}}}^2
    \le \MH_i\ps{\theta_{1}-\theta_{2}}{\nabla\U_{i}\parentheseLigne{\theta_{1}}-\nabla\U_{i}\parentheseLigne{\theta_{2}}}\eqsp.
  \end{equation*}
  In addition, it is worth mentioning that the first inequality in \eqref{eq:stochastic_gradient:3} is also required for our derivation in the deterministic case where $\grad_{i} = \nabla U_{i}$ for any $i \in [b]$ due to the compression step. 
  For $k\ge 1$, consider $\parentheseLigne{X_{k}^{(1,1)},\ldots,X_{k}^{(1,b)}}_{k \in \mathbb{N}}$ and $\parentheseLigne{X_{k}^{(2,1)},\ldots,X_{k}^{(2,b)}}_{k \in \mathbb{N}}$ two independent sequences of random variables distributed according to $\nu_1^{(1:b)} = \nu_{1}^{(1)} \otimes \cdots \otimes \nu_{1}^{(b)}$ and $\nu_{2}^{\otimes b}$, respectively.

  In addition, we consider the partial device participation context where at each communication round $k\ge1$, each client has a probability $p_i \in (0,1]$ of participating, independently from other clients.

  \begin{assumption}
    \label{ass:A_k_supp}
  For any $k \in \N^*$, $\mathcal{A}_{k} = \{ i \in [b] \,: \, B_{i,k}= 1 \}$ where for any $i \in [b]$, $\{B_{i,k}\,: \, , \, k\in\nsets\}$ is a family of \iid~Bernouilli random variables with success probability $p_i \in\ocint{0,1}$.
  \end{assumption}

  In other words, there exists a sequence $(X_k^{(3,1)},\cdots,X_k^{(3,b)})_{k\in \mathbb{N}}$ of i.i.d.  random variables distributed according $\nu_3 = \mathrm{Uniform}((0,1])$, such that for any $k\ge1$ and $i\in[b]$, client $i$ is active at step $k$ if $X_k^{(3,i)} \le p_i$.
  We denote $\mathcal{A}_{k+1} = \{i \in [b]; X_{k+1}^{(3,i)} \le p_i\}$ the set of active clients at round $k$. 
  Given a step-size $\gamma \in (0,\bar{\gamma}]$ for some $\bar{\gamma} > 0$ and starting from $\theta_{0} \in \Rd$, {\qlsd} recursively defines $(\theta_{k})_{k \in \mathbb{N}}$, for any $k \in \mathbb{N}$, as
  \begin{equation}\label{eq:def:recursion:theta}
    \txts \theta_{k+1} = \theta_{k}-\gamma \sum_{i \in \mathcal{A}_{k+1}} (1/p_i)\up_i\parentheseLigne{\grad_{i}\parentheseLigne{\theta_{k},X_{k+1}^{(1,i)}},X_{k+1}^{(2,i)}} + \sqrt{2\gamma}Z_{k+1}\eqsp,
  \end{equation}
where $(Z_{k+1})_{k \in \mathbb{N}}$ is a sequence of standard Gaussian random variables.
Let $\msx_3 = [0,1]$.
For any $i \in [b]$, consider the unbiased partial participation operator $\mathscr{S}_i: \Rd \times \msx_3 \to \Rd$ defined, for any $\theta \in \Rd$ and $x^{(3)} \in \msx_3$ by 
\begin{equation}
  \mathscr{S}_i(\theta,x^{(3)}) = \mathbf{1}\{x^{(3)} \le p_i\} \theta / p_i\eqsp. \label{def:PP}
\end{equation}
Then, \eqref{eq:def:recursion:theta} can be written of the form
\begin{equation}\label{eq:def:recursion:theta_bis}
  \theta_{k+1} = \theta_{k}-\gamma \sum_{i=1}^b \tgrad_{i}\parentheseLigne{\theta_{k},X_{k+1}^{(i)}} + \sqrt{2\gamma}Z_{k+1}\eqsp,\qquad k\in\N\eqsp,
\end{equation}
where for any $i \in [b]$, we denote $X_{k+1}^{(i)}=\parentheseLigne{X_{k+1}^{(1,i)},X_{k+1}^{(2,i)},X_{k+1}^{(3,i)}}$ and for any $\theta \in \Rd, x^{(1,i)}\in\msx_{1}^{(i)}, x^{(2)}\in\msx_{2}$ and $x^{(3)} \in \msx_3$,
\begin{equation}\label{eq:def:tgrad}
  \tgrad_{i}\parentheseLignebig{\theta,(x^{(1,i)},x^{(2)},x^{(3)})} = \mathscr{S}_i \pr{\up_i\parentheseLigneBig{\grad_{i}\parentheseLigne{\theta,x^{(1,i)}},x^{(2)}},x^{(3)}}\eqsp.
\end{equation}
With this notation and setting for any $i \in [b]$
 $\tilde{\msx}^{(i)} = \msx_{1}^{(i)} \times \msx_{2} \times \msx_3$
 and $\tilde{\nu}^{(i)} = \nu_{1}^{(i)} \otimes \nu_{2} \otimes \nu_3$, the Markov kernel associated with
\eqref{eq:def:recursion:theta} is given for any
$(\theta,\msa)\in\Rd\times \mathcal{B}(\Rd)$ by
\begin{equation}\label{eq:def:intro:Q}
 Q_{\gamma}(\theta, \msa )
 = \int_{\msa \times \tilde{\msx}^{(1)} \times \cdots \times \tilde{\msx}^{(b)}} \exp\pr{-\normn{\btheta-\theta+\gamma\sum_{i=1}^b\tgrad_{i}\parentheseLigne{\theta,x^{(i)}}}^2/(4\gamma)}\frac{\,\rmd \tilde{\theta} \, \tilde{\nu}^{(1)}\parentheseLigne{\rmd x^{(1)}} \otimes \cdots \otimes \tilde{\nu}^{(b)}\parentheseLigne{\rmd x^{(b)}}}{\parentheseLigne{4\uppi\gamma}^{d/2}}\eqsp.
\end{equation}

The following result establishes an essential property of $\{\tgrad_{i}\}_{i\in[b]}$ under \Cref{ass:compression} and  \Cref{ass:stochastic_gradient}.
\begin{lemma}\label{lem:intro:1}
  Assume \Cref{ass:compression}, \Cref{ass:stochastic_gradient} and \Cref{ass:A_k_supp}. Then, for any $\theta\in\Rd$, we have
  \begin{align}
    \label{eq:bound:lem1}
    \textstyle\sum_{i=1}^{b}\int_{\tilde{\msx}^{(i)}} \tgrad_{i}\parentheseLigne{\theta,x^{(i)}}\,\rmd \tilde{\nu}^{(i)}(x^{(i)}) &= \nabla\U \parentheseLigne{\theta}\eqsp,\\ 
    \nonumber
    \int_{\tilde{\msx}^{(1:b)}}\norm{\textstyle\sum_{i=1}^{b}\tgrad_{i}\parentheseLigne{\theta,x^{(i)}}-\nabla\U\parentheseLigne{\theta}}^{2}\otimes_{i=1}^b\tilde{\nu}^{(i)}\parentheseLigne{\rmd x^{(i)}}
    &\le 2\max_{i \in [b]}\{\MH_i(\omega_i+1)/p_i\}\ps{\theta-\theta^{\star}}{\nabla\U\parentheseLigne{\theta}} \\
    &+2\br{\sigmas^2 + (\Bs/b)\sum_{i=1}^b(1-p_i+\omega_i)/p_i}\eqsp,\label{eq:bound:lem2}
  \end{align}
  where for any $i \in[b]$, $\tgrad_{i}$ is defined in \eqref{eq:def:tgrad}.
\end{lemma}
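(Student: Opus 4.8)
The plan is to prove the identity \eqref{eq:bound:lem1} by Fubini's theorem, then derive the variance bound \eqref{eq:bound:lem2} through a conditioning decomposition that isolates the compression noise from the minibatch (stochastic gradient) noise. For \eqref{eq:bound:lem1}, fix $\theta\in\Rd$ and $i\in[b]$; by Fubini and the definition \eqref{eq:def:tgrad} of $\tgrad_i$, $\int_{\tilde{\msx}}\tgrad_i(\theta,x)\,\tilde{\nu}(\rmd x)=\int_{\msx_1}\big(\int_{\msx_2}\up(\grad_i(\theta,x^{(1)}),x^{(2)})\,\nu_2(\rmd x^{(2)})\big)\nu_1(\rmd x^{(1)})$, and by \Cref{ass:compression}-\ref{ass:compression:unbiased} the inner integral is $\grad_i(\theta,x^{(1)})$. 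Summing over $i\in[b]$ and invoking \Cref{ass:stochastic_gradient}-\ref{ass:stochastic_gradient:1} yields $\nabla\U(\theta)$, which is \eqref{eq:bound:lem1}.

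For \eqref{eq:bound:lem2}, let $(X^{(1,i)},X^{(2,i)})_{i\in[b]}$ have law $\tilde{\nu}^{\otimes b}$, set $S=\sum_{i=1}^b\tgrad_i(\theta,(X^{(1,i)},X^{(2,i)}))=\sum_{i=1}^b\up(\grad_i(\theta,X^{(1,i)}),X^{(2,i)})$, and condition on $\mathcal{G}=\sigma(X^{(1,1)},\dots,X^{(1,b)})$. Since the $X^{(2,i)}$ are mutually independent and independent of $\mathcal{G}$, \Cref{ass:compression}-\ref{ass:compression:unbiased} gives $\E[S\mid\mathcal{G}]=\sum_{i=1}^b\grad_i(\theta,X^{(1,i)})$, and because $\nabla\U(\theta)$ is deterministic the orthogonal decomposition $\E\norm{S-\nabla\U(\theta)}^2=\E\norm{S-\E[S\mid\mathcal{G}]}^2+\E\norm{\E[S\mid\mathcal{G}]-\nabla\U(\theta)}^2$ holds. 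In the first summand, conditionally on $\mathcal{G}$ the terms $\up(\grad_i(\theta,X^{(1,i)}),X^{(2,i)})-\grad_i(\theta,X^{(1,i)})$ are independent across $i$ with zero conditional mean, so cross terms vanish and \Cref{ass:compression}-\ref{ass:compression:variance} bounds it by $\omega\sum_{i=1}^b\int_{\msx_1}\norm{\grad_i(\theta,x^{(1)})}^2\nu_1(\rmd x^{(1)})$; the second summand equals $\int_{\msx_1^b}\norm{\sum_{i=1}^b\grad_i(\theta,x^{(1,i)})-\nabla\U(\theta)}^2\nu_1^{\otimes b}(\rmd x^{(1,1:b)})$.

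It then remains to estimate these two quantities by comparing with $\theta^\star$, using $\nabla\U(\theta^\star)=0$ from \Cref{ass:potential_U}-\ref{ass:potential_U:1}. For the first, $\norm{\grad_i(\theta,x^{(1)})}^2\le 2\norm{\grad_i(\theta,x^{(1)})-\grad_i(\theta^\star,x^{(1)})}^2+2\norm{\grad_i(\theta^\star,x^{(1)})}^2$; integrating against $\nu_1$, applying \Cref{ass:stochastic_gradient}-\ref{ass:stochastic_gradient:2} and the first inequality in \Cref{ass:stochastic_gradient}-\ref{ass:stochastic_gradient:3}, then summing over $i$ gives $\sum_i\int_{\msx_1}\norm{\grad_i(\theta,x^{(1)})}^2\nu_1\le 2\MH\ps{\theta-\theta^\star}{\nabla\U(\theta)}+2\Bs$. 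For the second, write $\sum_i\grad_i(\theta,x^{(1,i)})=A+B$ with $A=\sum_i(\grad_i(\theta,x^{(1,i)})-\grad_i(\theta^\star,x^{(1,i)}))$ and $B=\sum_i\grad_i(\theta^\star,x^{(1,i)})$; then $\E[A]=\nabla\U(\theta)$, so the term equals $\E\norm{(A-\E A)+B}^2\le 2\E\norm{A-\E A}^2+2\E\norm{B}^2$, where by independence across $i$ and \Cref{ass:stochastic_gradient}-\ref{ass:stochastic_gradient:2} one has $\E\norm{A-\E A}^2\le\sum_i\int_{\msx_1}\norm{\grad_i(\theta,x^{(1)})-\grad_i(\theta^\star,x^{(1)})}^2\nu_1\le\MH\ps{\theta-\theta^\star}{\nabla\U(\theta)}$, and $\E\norm{B}^2\le\sigmas^2$ by the second inequality in \Cref{ass:stochastic_gradient}-\ref{ass:stochastic_gradient:3}. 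Combining the bound $\omega(2\MH\ps{\theta-\theta^\star}{\nabla\U(\theta)}+2\Bs)$ for the first summand with the bound $2\MH\ps{\theta-\theta^\star}{\nabla\U(\theta)}+2\sigmas^2$ for the second produces exactly $2(\omega+1)\MH\ps{\theta-\theta^\star}{\nabla\U(\theta)}+2(\omega\Bs+\sigmas^2)$.

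The delicate point is the bookkeeping in the conditioning step: one must verify that, given the minibatch draws, the compression noises are conditionally independent across clients and conditionally centred, so that only the per-client second-moment bound \Cref{ass:compression}-\ref{ass:compression:variance} is needed and no independence of the $\grad_i$ at $\theta^\star$ is invoked there. Equally important is that the recentring at $\theta^\star$ is carried out \emph{after} summing over $i$, so that $\sum_i(\nabla\U_i(\theta)-\nabla\U_i(\theta^\star))=\nabla\U(\theta)$, since the individual oracles $\grad_i$ are not assumed unbiased for $\nabla\U_i$ and only the aggregate identities \Cref{ass:stochastic_gradient}-\ref{ass:stochastic_gradient:1} and \ref{ass:stochastic_gradient:3} are at our disposal.
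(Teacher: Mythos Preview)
Your proof is correct and follows essentially the same route as the paper's: the orthogonal decomposition into compression noise plus stochastic-gradient noise is exactly equation \eqref{eq:eq:lem_compr}, your per-client bound on the compression term matches \eqref{eq:bound:lem_compr:1}--\eqref{eq:bound:lem_compr:23}, and your $A+B$ splitting of the minibatch term is the paper's per-$i$ variance decomposition rewritten at the aggregate level. The only cosmetic difference is that you phrase the conditioning step via a $\sigma$-algebra while the paper writes out the iterated integrals, and you apply the $\norm{a+b}^2\le 2\norm{a}^2+2\norm{b}^2$ bound after summing over $i$ rather than before; neither changes the argument or the constants.
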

\begin{proof}
  The first identity \eqref{eq:bound:lem1} is straightforward using \Cref{ass:stochastic_gradient}-\ref{ass:stochastic_gradient:1} and \Cref{ass:compression}-\ref{ass:compression:unbiased}.
  We now show the inequality~\eqref{eq:bound:lem2}.
  Let $\theta\in\R^d$. Using \Cref{ass:compression}-\ref{ass:compression:unbiased} or \Cref{ass:stochastic_gradient}-\ref{ass:stochastic_gradient:1}, we get
  \begin{multline}\label{eq:eq:lem_compr}
    \int_{\tilde{\msx}^{(1:b)}}\norm{\textstyle\sum_{i=1}^{b}\tgrad_{i}\parentheseLigne{\theta,x^{(i)}}-\nabla\U\parentheseLigne{\theta}}^{2}\otimes_{i=1}^b\tilde{\nu}^{(i)}\parentheseLigne{\rmd x^{(i)}} \\
    =\int_{\tilde{\msx}^{(1:b)}}\norm{\sum_{i=1}^{b}\br{\tgrad_{i}\parentheseLigne{\theta,x^{(i)}}-\up_i\pr{\grad_{i}\parentheseLigne{\theta,x^{\parentheseLigne{1,i}}},x^{(2,i)}}}}^2\otimes_{i=1}^b\tilde{\nu}^{(i)}\parentheseLigne{\rmd x^{(i)}} \\
    +\int_{\msx_{1}^{(1:b)} \times \msx_{2}^{b}}\norm{\sum_{i=1}^{b}\up_i\pr{\grad_{i}\parentheseLigne{\theta,x^{\parentheseLigne{1,i}}},x^{(2,i)}}-\nabla\U\parentheseLigne{\theta}}^2\nu_{2}^{\otimes b}\parentheseLigne{\rmd x^{\parentheseLigne{2,1:b}}}\otimes_{i=1}^b\nu_{1}^{(i)}\parentheseLigne{\rmd x^{\parentheseLigne{1,i}}}\eqsp.
  \end{multline}
  In addition, by \Cref{ass:compression}-\ref{ass:compression:unbiased} and \Cref{ass:compression}-\ref{ass:compression:variance}, we obtain
  \begin{align}
    \nonumber
    &\int_{\tilde{\msx}^{(1:b)}}\norm{\sum_{i=1}^{b}\br{\tgrad_{i}\parentheseLigne{\theta,x^{(i)}}-\up_i\pr{\grad_{i}\parentheseLigne{\theta,x^{\parentheseLigne{1,i}}},x^{(2,i)}}}}^2\otimes_{i=1}^b\tilde{\nu}^{(i)}\parentheseLigne{\rmd x^{(i)}}\\
    \nonumber
    &= \sum_{i=1}^{b} \int_{\tilde{\msx}^{(i)}}\norm{\tgrad_{i}\parentheseLigne{\theta,x^{(i)}}-\up_i\pr{\grad_{i}\parentheseLigne{\theta,x^{\parentheseLigne{1,i}}},x^{(2,i)}}}^2\nu_1^{(i)}\parentheseLigne{\rmd x^{(1,i)}} \nu_2(\rmd x^{(2,i)})\nu_3(\rmd x^{(3,i)})  \\ 
    \nonumber
    &\le \sum_{i=1}^{b} \pr{\frac{1-p_i}{p_i}} \int_{\msx_1^{(i)} \times \msx_2}\norm{\up_i\pr{\grad_{i}\parentheseLigne{\theta,x^{\parentheseLigne{1,i}}},x^{(2,i)}}}^2\nu_1^{(i)}\parentheseLigne{\rmd x^{(1,i)}} \nu_2(\rmd x^{(2,i)}) \\
    \nonumber
    &= \sum_{i=1}^{b} \pr{\frac{1-p_i}{p_i}} \int_{\msx_1^{(i)} \times \msx_2}\norm{\up_i\pr{\grad_{i}\parentheseLigne{\theta,x^{\parentheseLigne{1,i}}},x^{(2,i)}} - \grad_{i}\parentheseLigne{\theta,x^{\parentheseLigne{1,i}}} + \grad_{i}\parentheseLigne{\theta,x^{\parentheseLigne{1,i}}}}^2\nu_1^{(i)}\parentheseLigne{\rmd x^{(1,i)}} \nu_2(\rmd x^{(2,i)}) \\
    \nonumber
    &= \sum_{i=1}^{b} \pr{\frac{1-p_i}{p_i}} \int_{\msx_1^{(i)} \times \msx_2}\norm{\up_i\pr{\grad_{i}\parentheseLigne{\theta,x^{\parentheseLigne{1,i}}},x^{(2,i)}} - \grad_{i}\parentheseLigne{\theta,x^{\parentheseLigne{1,i}}}}^2\nu_1^{(i)}\parentheseLigne{\rmd x^{(1,i)}} \nu_2(\rmd x^{(2,i)}) \\
    \nonumber
    &+ \sum_{i=1}^{b} \pr{\frac{1-p_i}{p_i}} \int_{\msx_1^{(i)}}\norm{\grad_{i}\parentheseLigne{\theta,x^{\parentheseLigne{1,i}}}}^2\nu_1^{(i)}\parentheseLigne{\rmd x^{(1,i)}} \\
    &\le \sum_{i=1}^{b} \br{\pr{\frac{1-p_i}{p_i}}(\omega_i + 1)} \int_{\msx_1^{(i)}}\norm{\grad_{i}\parentheseLigne{\theta,x^{\parentheseLigne{1,i}}}}^2\nu_1^{(i)}\parentheseLigne{\rmd x^{(1,i)}} \eqsp. \label{eq:bound:lem_compr:1} 
  \end{align}
  Using $\normn{a}^2\le2\normn{a-b}^2+2\normn{b}^2$ and \Cref{ass:stochastic_gradient}-\ref{ass:stochastic_gradient:2}-\ref{ass:stochastic_gradient:3}, for any $i\in[b]$, we obtain
  \begin{align*}    
  \int_{\msx_{1}^{(i)}}\norm{\grad_{i}\parentheseLigne{\theta,x^{\parentheseLigne{1,i}}}}^2\nu_{1}^{(i)}\parentheseLigne{\rmd x^{\parentheseLigne{1,i}}}
 &   \le 2\MH_i\ps{\theta-\theta^{\star}}{\nabla\U_{i}\parentheseLigne{\theta}-\nabla\U_{i}\parentheseLigne{\theta^{\star}}}\\ &\qquad +2\int_{\msx_{1}^{(i)}}\norm{\grad_{i}\parentheseLigne{\theta^{\star},x^{\parentheseLigne{1,i}}}}^2\nu_{1}^{(i)}\parentheseLigne{\rmd x^{\parentheseLigne{1,i}}}\\
&      \le 2\MH_i\ps{\theta-\theta^{\star}}{\nabla\U_{i}\parentheseLigne{\theta}-\nabla\U_{i}\parentheseLigne{\theta^{\star}}}+2 \Bs/b \label{eqmax3}\eqsp.
  \end{align*}
  Therefore, combining this result and  \eqref{eq:bound:lem_compr:1} gives
  \begin{align}
    &\int_{\tilde{\msx}^{(1:b)}}\norm{\sum_{i=1}^{b}\br{\tgrad_{i}\parentheseLigne{\theta,x^{(i)}}-\up_i\pr{\grad_{i}\parentheseLigne{\theta,x^{\parentheseLigne{1,i}}},x^{(2,i)}}}}^2\otimes_{i=1}^b\tilde{\nu}^{(i)}\parentheseLigne{\rmd x^{(i)}} \\
    &\le 2\sum_{i=1}^{b} \MH_i\pr{\frac{1-p_i}{p_i}}(\omega_i + 1) \ps{\theta-\theta^{\star}}{\nabla\U_{i}\parentheseLigne{\theta}-\nabla\U_{i}\parentheseLigne{\theta^{\star}}} 
    + \frac{2\Bs}{b} \sum_{i=1}^{b} \pr{\frac{1-p_i}{p_i}}(\omega_i + 1)\eqsp.\label{eq:bound:lem_compr:23}
  \end{align}
Similarly, by \Cref{ass:compression}-\ref{ass:compression:unbiased} and \Cref{ass:compression}-\ref{ass:compression:variance}, we have 
\begin{align*}
  \nonumber
  &\int_{\msx_{1}^{(1:b)} \times \msx_{2}^{b}}\norm{\sum_{i=1}^{b}\up_i\pr{\grad_{i}\parentheseLigne{\theta,x^{\parentheseLigne{1,i}}},x^{(2,i)}}-\nabla\U\parentheseLigne{\theta}}^2\nu_{2}^{\otimes b}\parentheseLigne{\rmd x^{\parentheseLigne{2,1:b}}}\otimes_{i=1}^b\nu_{1}^{(i)}\parentheseLigne{\rmd x^{\parentheseLigne{1,i}}} \\
  \nonumber
  &= \int_{\msx_{1}^{(1:b)} \times \msx_2^b}\Big \|\sum_{i=1}^{b}\br{\up_i\pr{\grad_{i}\parentheseLigne{\theta,x^{\parentheseLigne{1,i}}},x^{(2,i)}} - \grad_{i}\parentheseLigne{\theta,x^{\parentheseLigne{1,i}}}} \\
  \nonumber
  &+ \sum_{i=1}^b \{\grad_{i}\parentheseLigne{\theta,x^{\parentheseLigne{1,i}}}\}-\nabla\U\parentheseLigne{\theta}\Big\|^2\nu_{2}^{\otimes b}\parentheseLigne{\rmd x^{\parentheseLigne{2,1:b}}}\otimes_{i=1}^b\nu_{1}^{(i)}\parentheseLigne{\rmd x^{\parentheseLigne{1,i}}} \\
  \nonumber
  &= \sum_{i=1}^{b}\int_{\msx_{1}^{(i)} \times \msx_{2}}\norm{\up_i\pr{\grad_{i}\parentheseLigne{\theta,x^{\parentheseLigne{1,i}}},x^{(2)}} - \grad_{i}\parentheseLigne{\theta,x^{\parentheseLigne{1,i}}}}^2\nu_{2}(\rmd x^{(2)})\nu_{1}^{(i)}(\rmd x^{(1,i)}) \\
  \nonumber
  &+ \int_{\msx_{1}^{(1:b)}}\norm{ \sum_{i=1}^b\grad_{i}\parentheseLigne{\theta,x^{\parentheseLigne{1,i}}}-\nabla\U\parentheseLigne{\theta}}^2\otimes_{i=1}^b\nu_{1}^{(i)}\parentheseLigne{\rmd x^{\parentheseLigne{1,i}}} \\
  &\le \sum_{i=1}^{b}\omega_i\int_{\msx_{1}^{(i)}}\norm{\grad_{i}\parentheseLigne{\theta,x^{\parentheseLigne{1,i}}}}^2\nu_{1}^{(i)}(\rmd x^{(1,i)}) + \int_{\msx_{1}^{(1:b)}}\norm{ \sum_{i=1}^b\grad_{i}\parentheseLigne{\theta,x^{\parentheseLigne{1,i}}}-\nabla\U\parentheseLigne{\theta}}^2\otimes_{i=1}^b\nu_{1}^{(i)}\parentheseLigne{\rmd x^{\parentheseLigne{1,i}}} \label{eqmax1}\eqsp.
\end{align*}

Since for any $a,b\in\Rd, \norm{a+b}^2\le 2\norm{a}^2+2\norm{b}^2$, we have by \Cref{ass:stochastic_gradient}-\ref{ass:stochastic_gradient:1}
  \begin{align}
  &\int_{\msx_1^{(1:b)}}\norm{\sum_{i=1}^{b}\grad_{i}\parentheseLigne{\theta,x^{\parentheseLigne{1,i}}}-\nabla\U\parentheseLigne{\theta}}^2\otimes_{i=1}^b\nu_{1}^{(i)}\parentheseLigne{\rmd x^{\parentheseLigne{1,i}}}\\
  \nonumber
  &=\int_{\msx_1^{(1:b)}}\norm{\sum_{i=1}^{b}\br{\grad_{i}\parentheseLigne{\theta,x^{\parentheseLigne{1,i}}}-\int_{\msx_1^{(i)}}\grad_{i}\parentheseLigne{\theta,x^{\parentheseLigne{1}}}\nu_1^{(i)}\parentheseLigne{\rmd x^{(1)}}}}^2\otimes_{i=1}^b\nu_{1}^{(i)}\parentheseLigne{\rmd x^{\parentheseLigne{1,i}}}\\
  \nonumber
  &= \sum_{i=1}^{b}\int_{\msx_1^{(i)}}\norm{\grad_{i}\parentheseLigne{\theta,x^{(1,i)}}-\int_{\msx_1^{(i)}}\grad_{i}\parentheseLigne{\theta,x^{\parentheseLigne{1}}}\nu_1^{(i)}\parentheseLigne{\rmd x^{(1)}}}^2\nu_1^{(i)}\parentheseLigne{\rmd x^{(1,i)}}\\
  \nonumber
  &\le 2\sum_{i=1}^{b}\int_{\msx_1^{(i)}}\norm{\grad_{i}\parentheseLigne{\theta,x^{(1,i)}}-\grad_{i}\parentheseLigne{\theta^{\star},x^{(1,i)}}-\br{\int_{\msx_1^{(i)}}(\grad_{i}\parentheseLigne{\theta,x^{(1)}}-\grad_{i}\parentheseLigne{\theta^{\star},x^{(1)})}\nu_1^{(i)}\parentheseLigne{\rmd x^{(1)}}}}^2\nu_1^{(i)}\parentheseLigne{\rmd x^{(1,i)}}\\
  \nonumber
  &+2\sum_{i=1}^{b}\int_{\msx_1^{(i)}}\norm{\grad_{i}\parentheseLigne{\theta_\star,x^{(1,i)}}-\int_{\msx_1^{(i)}}\grad_{i}\parentheseLigne{\theta_\star,x^{\parentheseLigne{1}}}\nu_1^{(i)}\parentheseLigne{\rmd x^{(1)}}}^2\nu_1^{(i)}\parentheseLigne{\rmd x^{(1,i)}}\\
  &\le 2 \sigmas^2 + 2\sum_{i=1}^b\MH_i\ps{\nabla\U_i\parentheseLigne{\theta} - \nabla\U_i\parentheseLigne{\theta^\star}}{\theta-\theta^{\star}}  \label{eqmax2}\eqsp.
  \end{align}
  By combining \eqref{eqmax3}, \eqref{eqmax1} and \eqref{eqmax2}, we obtain
  \begin{align*}
    &\int_{\msx_{1}^{(1:b)}}\norm{\sum_{i=1}^{b}\up_i\pr{\grad_{i}\parentheseLigne{\theta,x^{\parentheseLigne{1,i}}},x^{(2,i)}}-\nabla\U\parentheseLigne{\theta}}^2\nu_{2}^{\otimes b}\parentheseLigne{\rmd x^{\parentheseLigne{2,1:b}}}\otimes_{i=1}^b\nu_{1}^{(i)}\parentheseLigne{\rmd x^{\parentheseLigne{1,i}}} \\
    &\le 2\sum_{i=1}^b \MH_i(\omega_i+1)\ps{\nabla\U_i\parentheseLigne{\theta} - \nabla\U_i(\theta^\star)}{\theta-\theta^{\star}}
    +2 \pr{\sigmas^2 + \frac{2\Bs}{b}\sum_{i=1}^b \omega_i}\eqsp.
  \end{align*}
  Finally, the last inequality combined with \eqref{eq:eq:lem_compr} and \eqref{eq:bound:lem_compr:23} completes the proof.
\end{proof}
In view of \Cref{lem:intro:1}, it suffices to study the recursion specified in \eqref{eq:def:recursion:theta_bis} under the following assumption on $\parentheseLigne{\tgrad_{i}}_{i\in[b]}$ gathered in \Cref{ass:tgrad:sharp}. Indeed, \Cref{lem:intro:1} shows that Condition \Cref{ass:tgrad:sharp} below holds with $\msx^{(i)}=\tilde{\msx}^{(i)} = \msx_{1}^{(i)} \times \msx_{2} \times \msx_{3}$, $\mathcal{X}^{(i)}=\tilde{\mathcal{X}}^{(i)} = \mathcal{X}_{1}^{(i)} \otimes \mathcal{X}_{2}\otimes \mathcal{X}_{3}$, 
$\tilde{\nu}^{(i)} = \nu_{1}^{(i)} \otimes \nu_{2} \otimes \nu_{3}$, $\{\tgrad_i\}_{i=1}^{b} = \{\tgradD_i\}_{i=1}^{b}$,  
\begin{align*}
  \tilde{\MH} &= 2\max_{i \in [b]}\{\MH_i(1+\omega_i)/p_i\}\eqsp,\\
  \tBs &= 2[\sigmas^2 + (\Bs/b)\sum_{i=1}^b(1-p_i+\omega_i)/p_i]\eqsp.
\end{align*}
  
  \begin{assumption}\label{ass:tgrad:sharp}
    There exists a family of probability measure $\{\nu^{(i)}\}_{i \in [b]}$ on a measurable spaces $\{(\tilde{\msx}^{(i)},\tilde{\mathcal{X}}^{(i)})\}_{i \in [b]}$ and a family of measurable functions $\acn{\tgradD_{i}:\Rd\times\msx^{(i)}\to\Rd}_{i\in[b]}$ such that the following conditions hold.
    \begin{enumerate}[wide, labelwidth=!, labelindent=0pt,label=(\roman*),noitemsep,nolistsep]
    \item \label{ass:tgrad:sharp:2} For any $\theta\in\Rd$, we have
      \begin{equation*}
      \sum_{i=1}^{b}\int_{\tilde{\msx}^{(i)}}\tgradD_{i}\parentheseLigne{\theta, x^{(i)}}\nu^{(i)}\parentheseLigne{\rmd x^{(i)}}
      =\nabla\U\parentheseLigne{\theta}\eqsp.
      \end{equation*}
    \item \label{ass:tgrad:sharp:1} There exists $\parentheseLigne{\tilde{\MH},\tBs}\in\R_+^{2}$ such that for any $\theta\in\Rd$, we have
    \begin{equation*}
      \int_{\tilde{\msx}^{(1:b)}}\norm{\sum_{i=1}^{b}\tgradD_{i}\parentheseLigne{\theta,x^{(i)}}-\nabla\U\parentheseLigne{\theta}}^{2}\otimes_{i=1}^b\nu^{(i)}\parentheseLigne{\rmd x^{(i)}}
      \le \tilde{\MH}\ps{\theta-\theta^{\star}}{\nabla\U\parentheseLigne{\theta}-\nabla\U\parentheseLigne{\theta^{\star}}}
      +\tBs\eqsp.
    \end{equation*}
    \end{enumerate}
  \end{assumption}
  %
  Then under \Cref{ass:tgrad:sharp}, consider  $\parentheseLigne{X_{k}^{(1)} , \ldots, X_{k}^{(b)}}_{k\in\N^*}$ an independent sequence distributed according to $\otimes_{i=1}^b\nu^{(i)}$. Define the general recursion
  \begin{equation*}
    \tilde{\theta}_{k+1} = \tilde{\theta}_{k}-\gamma \sum_{i=1}^{b}\tgradD_{i}\parentheseLigne{\tilde{\theta}_{k},X_{k+1}^{(i)}} + \sqrt{2\gamma}Z_{k+1}\eqsp,\qquad k\in\N\eqsp.
  \end{equation*}
  and the corresponding  the Markov kernel given for any $\gamma \in \R_+^*$, $\theta\in\Rd, \msa \in \mcb(\rset^d)$ by
\begin{equation*}\label{eq:def:tQ}
  \tilde{Q}_{\gamma}(\theta, \msa) = \parentheseLigne{4\uppi\gamma}^{-d/2}\int_{\msa \times \tilde{\msx}^{(1:b)}}\exp\parentheseLigne{-\parentheseLigne{4\gamma}^{-1}\normn{\bar{\theta}-\theta+\gamma\sum_{i=1}^{b}\tgradD_{i}\parentheseLigne{\theta,x^{(i)}}}^2} \,\rmd \bar{\theta} \,\rmd \otimes_{i=1}^b\nu^{(i)}(x^{(i)})\eqsp.
\end{equation*}
We refer to this Markov kernel as the generalised {\qlsd} kernel.
In our next section, we establish quantitative bounds between the iterates of this kernel and $\pi$ in $W_2$. We then apply this result to {\qlsd}~and \texttt{QLSD}$^\star$ as particular cases.

\subsection{Quantitative bounds for the generalised {\qlsd}~kernel}

Define
\begin{equation*}
  \label{eq:bar_gamma}
  \bgamma = \bgamma_1 \wedge \bgamma_2\wedge \bgamma_3 \eqsp,\quad  \bgamma_1 = 2/[5\parentheseLigne{\mU+\lip}] \eqsp, \quad \bgamma_2 = (\mtt + \Ltt + \tMtt)^{-1} \eqsp, \quad\bgamma_3 = (10\mtt)^{-1}\eqsp.
\end{equation*}
\begin{theorem}\label{thm:bound:sec_continuous_vs_cv}
  Assume \Cref{ass:potential_U} and \Cref{ass:tgrad:sharp}. Then, for
  any probability measure
  $\mu\in\mathcal{P}_{2}\parentheseLigne{\Rd}$, any step size
  $\gamma\in\ocint{0,\bgamma}$, any $k \in \N$, we have
  \begin{align*}
    \wass^{2}\parentheseLigne{\mu\tilde{Q}_{\gamma}^{k},\pi}
    \le (1-\gamma \mtt/2)^k\wass^{2}\parentheseLigne{\mu,\pi}
    +\gamma \tilde{B}_{\bar{\gamma}} 
        +\gamma^{2}\tilde{A}_{\bgamma}(1-\mtt \gamma/2)^{k-1}k\int_{\Rd}\normn{\theta-\theta^{\star}}^{2}\mu\parentheseLigne{\rmd\theta}
\eqsp,
  \end{align*}
  where $\tilde{Q}_{\gamma}$ is defined in \eqref{eq:def:tQ} and
     \begin{align*}
    \label{eq:def:D_gamma}
      \tilde{B}_{\bar{\gamma}} &= (2d\Ltt^2/\mtt)\parentheseLigne{1/\mU + 5\bar{\gamma}}\br{1+\bar{\gamma}\lip^2/(2\mU) + \bar{\gamma}^{2}\lip^{2}/12} + 2\tBs/\mtt + 2\lip\tMH\parentheseLigne{2d+\bar{\gamma}\tBs}/\mtt^2 \\
      \tilde{A}_{\bgamma} &= \lip\tMH\eqsp.
    \end{align*}
\end{theorem}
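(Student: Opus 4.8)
\emph{Outline.} The plan is to compare a single step of the Markov kernel $\tilde Q_\gamma$ with the continuous-time Langevin diffusion $(\vartheta_t)_{t\ge 0}$ solving $\rmd\vartheta_t=-\nabla U(\vartheta_t)\,\rmd t+\sqrt2\,\rmd B_t$, whose semigroup $(P_t)_{t\ge0}$ is invariant for $\pi$ and, under \Cref{ass:potential_U}, is a $\wass$-contraction at rate $\mtt$: $\wass(\delta_\theta P_t,\delta_{\theta'}P_t)\le \rme^{-\mtt t}\norm{\theta-\theta'}$. Write $\mu_k=\mu\tilde Q_\gamma^k$ and $v_k=\int_{\Rd}\norm{\theta-\theta^\star}^2\mu_k(\rmd\theta)$. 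I would first record two auxiliary estimates. (a) A second-moment (drift) bound for the chain: a one-step computation from the recursion, using $\sum_i\int\tgradD_i(\theta,x^{(i)})\,\rmd\nu^{\otimes b}=\nabla U(\theta)$ (\Cref{ass:tgrad:sharp}-\ref{ass:tgrad:sharp:2}), the variance bound \Cref{ass:tgrad:sharp}-\ref{ass:tgrad:sharp:1}, and \eqref{eq:intro:Nesterov} (whose $\norm{\nabla U}^2$-part kills the $\gamma^2\norm{\nabla U}^2$ term, which is where $\bgamma_2=(\mtt+\lip+\tMH)^{-1}$ enters), yields $v_{k+1}\le(1-\mtt\gamma/2)v_k+\gamma^2\tBs+2\gamma d$, hence $v_k\le(1-\mtt\gamma/2)^k v_0+4d/\mtt+2\bgamma\tBs/\mtt$. (b) Moment estimates for the stationary diffusion: $\int\norm{\nabla U}^2\,\rmd\pi\le\lip d$ and $\E\norm{\vartheta_s-\vartheta_0}^2\le c_1\,s\,d$ for $\vartheta_0\sim\pi$ and $s\le\bgamma$, obtained via It\^o's formula; these generate the bracketed factor $[1+\bgamma\lip^2/(2\mU)+\bgamma^2\lip^2/12]$ appearing in $\tilde B_{\bgamma}$.

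\emph{One-step synchronous coupling.} Fixing $k$, I would couple $\tilde\theta_k\sim\mu_k$ with $\vartheta_0\sim\pi$ optimally (so $\E\norm{\tilde\theta_k-\vartheta_0}^2=\wass^2(\mu_k,\pi)$), take a Brownian motion $(B_t)_{t\in[0,\gamma]}$ and $X=(X^{(1)},\dots,X^{(b)})\sim\nu^{\otimes b}$ independent of $(\tilde\theta_k,\vartheta_0)$, and set $\tilde\theta_{k+1}=\tilde\theta_k-\gamma\sum_i\tgradD_i(\tilde\theta_k,X^{(i)})+\sqrt2(B_\gamma-B_0)$ and $\vartheta_\gamma=\vartheta_0-\int_0^\gamma\nabla U(\vartheta_s)\,\rmd s+\sqrt2(B_\gamma-B_0)$. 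Then $\tilde\theta_{k+1}\sim\mu_{k+1}$, $\vartheta_\gamma\sim\pi$, so $\wass^2(\mu_{k+1},\pi)\le\E\norm{\tilde\theta_{k+1}-\vartheta_\gamma}^2$, and I would split
\begin{equation*}
\tilde\theta_{k+1}-\vartheta_\gamma=A_k+D_k+N_k,\qquad A_k=(\tilde\theta_k-\vartheta_0)-\gamma\bigl(\nabla U(\tilde\theta_k)-\nabla U(\vartheta_0)\bigr),
\end{equation*}
with $D_k=\int_0^\gamma\bigl(\nabla U(\vartheta_s)-\nabla U(\vartheta_0)\bigr)\,\rmd s$ and $N_k=\gamma\bigl(\nabla U(\tilde\theta_k)-\sum_i\tgradD_i(\tilde\theta_k,X^{(i)})\bigr)$. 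Since $X$ is independent of $(\tilde\theta_k,\vartheta_0,(B_t))$, \Cref{ass:tgrad:sharp}-\ref{ass:tgrad:sharp:2} gives $\E[N_k\mid\tilde\theta_k,\vartheta_0,(B_t)]=0$, so $N_k$ is $L^2$-orthogonal to $A_k+D_k$ and $\E\norm{\tilde\theta_{k+1}-\vartheta_\gamma}^2=\E\norm{A_k+D_k}^2+\E\norm{N_k}^2$. Expanding $\norm{A_k}^2$ and using \eqref{eq:intro:Nesterov} for $\gamma\le\bgamma_1$ gives $\E\norm{A_k}^2\le(1-\mtt\gamma)\wass^2(\mu_k,\pi)$; Lipschitzness of $\nabla U$ together with (b) gives $\E\norm{D_k}^2\le c_2\gamma^3 d$; and \Cref{ass:tgrad:sharp}-\ref{ass:tgrad:sharp:1} with $\norm{\nabla U(\theta)-\nabla U(\theta^\star)}\le\lip\norm{\theta-\theta^\star}$ gives $\E\norm{N_k}^2\le\gamma^2(\lip\tMH\,v_k+\tBs)$. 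Combining these through $\norm{A_k+D_k}^2\le(1+\mtt\gamma/2)\norm{A_k}^2+(1+2/(\mtt\gamma))\norm{D_k}^2$ and using $\gamma\le\bgamma_3$ to absorb the residual $O(\gamma^2)$ into the contraction, I obtain the one-step bound
\begin{equation*}
\wass^2(\mu_{k+1},\pi)\le(1-\mtt\gamma/2)\,\wass^2(\mu_k,\pi)+\gamma^2\lip\tMH\,v_k+\gamma^2 R_{\bgamma},
\end{equation*}
where $R_{\bgamma}$ is a constant gathering the $d$- and $\tBs$-terms (with $\bgamma$-dependent coefficients coming from $c_2$).

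\emph{Unrolling.} Iterating this recursion and inserting the moment bound $v_j\le(1-\mtt\gamma/2)^j v_0+c_3$ from (a),
\begin{equation*}
\wass^2(\mu_k,\pi)\le(1-\mtt\gamma/2)^k\wass^2(\mu,\pi)+\sum_{j=0}^{k-1}(1-\mtt\gamma/2)^{k-1-j}\gamma^2\Bigl(\lip\tMH\bigl[(1-\mtt\gamma/2)^j v_0+c_3\bigr]+R_{\bgamma}\Bigr).
\end{equation*}
The $j$-independent summands form a geometric series summing to $\tfrac{2\gamma}{\mtt}\bigl(\lip\tMH c_3+R_{\bgamma}\bigr)$, which after substituting $c_3$ and tidying equals $\gamma\tilde B_{\bgamma}$; the $v_0$-summand telescopes, since $(1-\mtt\gamma/2)^{k-1-j}(1-\mtt\gamma/2)^j=(1-\mtt\gamma/2)^{k-1}$ is independent of $j$, to $\gamma^2\lip\tMH\,k\,(1-\mtt\gamma/2)^{k-1} v_0=\gamma^2\tilde A_{\bgamma}(1-\mtt\gamma/2)^{k-1}k\int_{\Rd}\norm{\theta-\theta^\star}^2\mu(\rmd\theta)$ with $\tilde A_{\bgamma}=\lip\tMH$. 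This is exactly the asserted bound, and tracing the constants through (a), (b) and the absorptions pins down the explicit $\tilde B_{\bgamma}$.

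\emph{Main difficulty.} No single step is conceptually deep; the effort is entirely in the constant bookkeeping. The delicate points will be: obtaining sharp enough estimates in (a) and (b) to recover the precise $\tilde B_{\bgamma}$ (in particular the factor $[1+\bgamma\lip^2/(2\mU)+\bgamma^2\lip^2/12]$, which comes from controlling $\E\norm{\vartheta_s-\vartheta_0}^2$ against $\lip$, $\mU$ and $s$), and choosing $\bgamma=\bgamma_1\wedge\bgamma_2\wedge\bgamma_3$ as in \eqref{eq:bar_gamma} so that simultaneously the co-coercivity contraction of $A_k$, the absorption of $D_k$ into the factor $1-\mtt\gamma/2$, and the cancellation of the $\gamma^2\norm{\nabla U}^2$ terms in the moment recursion all hold at once.
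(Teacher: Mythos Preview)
Your proposal is correct and follows essentially the same strategy as the paper: a synchronous coupling between the discrete chain and the continuous Langevin diffusion started from $\pi$, the same three-term decomposition $A_k+D_k+N_k$, separate moment bounds for the chain (your (a), the paper's \Cref{lem:bound:thetak_minustheta_star}) and for the stationary diffusion increment (your (b), via \cite[Lemma~21]{durmus2018high}), and then unrolling the one-step recursion to produce the $k(1-\mtt\gamma/2)^{k-1}$ factor. The only presentational difference is that you re-couple optimally at every step and work directly with $\wass^2(\mu_k,\pi)$, whereas the paper couples once at $k=0$ and tracks $\E\norm{\vartheta_{k\gamma}-\theta_k}^2$; and you package the cross terms via a single Young inequality on $\norm{A_k+D_k}^2$ while the paper expands them individually with parameter $\varepsilon=\mtt/2$---this latter choice is what produces the specific coefficient $(\mtt^{-1}+5\bgamma)$ in $\tilde B_{\bgamma}$, so if you want the exact stated constants (rather than equivalent ones up to small factors) you will need to mimic that expansion rather than your cleaner Young step.
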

  Let $\xi \in \Pens_2(\rset^{2d})$ be a probability measure on $(\rset^{2d},\mathcal{B}(\rset^{2d}))$ with marginals $\xi_1$ and $\xi_2$, \ie~$\xi(\msa \times \rset^d) = \xi_1(\msa)$ and $\xi(\msa \times \rset^d) = \xi_2(\msa)$ for any $\msa \in \mathcal{B}(\rset^{d})$. Note that under \Cref{ass:potential_U}, the Langevin diffusion defines a Markov semigroup $\sequencet{P}[t][0]$ satisfying $\pi P_t =\pi$ for any $t \geq 0$, see \emph{e.g.} \citet[Theorem 2.1]{Roberts1996}. We introduce a synchronous coupling $(\vartheta_{k\gamma},\theta_k)$ between $\xi_1 P_{k\gamma}$ and $\xi_2\tilde{Q}_{\gamma}^{k}$ for any $k\in\N$ based on a $d$-dimensional  standard Brownian motion $\sequencet{B}[t][0]$ and a couple of random variables $(\theta_0,\vartheta_0)$ with distribution $\xi$ independent of $\sequencet{B}[t][0]$. Consider $\parentheseLigne{\vartheta_{t}}_{t \geq 0}$ the strong solution of the Langevin stochastic differential equation (SDE)
\begin{equation}\label{eq:def:sde_prop}
  \rmd \vartheta_{t}=-\nabla\U\parentheseLigne{\vartheta_{t}}\rmd t+\sqrt{2}\,\rmd B_{t} \eqsp,
\end{equation}
starting from $\vartheta_{0}$. Note that under \Cref{ass:potential_U}-\ref{ass:potential_U:1}, this SDE admits a unique strong solution \citep[Theorem (2.1) in Chapter IX]{revuz2013continuous}.
In addition, define $\parentheseLigne{\theta_{k}}_{k\in\N}$ starting from $\theta_{0}$ and satisfying the recursion: for $k\ge 0$,
\begin{equation}\label{eq:def:theta_continuous_pp_cv}
\theta_{k+1}=\theta_{k}-\gamma\sum_{i=1}^{b}\tgradD_{i}\parentheseLigne{\theta_{k},x_{k+1}^{(i)}}+\sqrt{2}\parentheseLigne{B_{\gamma (k+1)}-B_{\gamma k}}\eqsp,
\end{equation}
where $\parentheseLigne{x_{j}^{(1)},\ldots,x_{j}^{(b)}}_{j\in\N^*}$ is an independent sequence of random variables with distribution $\otimes_{i=1}^b\nu^{(i)}$. Then, by definition,  $(\vartheta_{k\gamma},\theta_k)$ is a coupling between  $\xi_1 P_{k\gamma}$ and $\xi_2\tilde{Q}_{\gamma}^{k}$ for any $k \in\nset$ and therefore
\begin{equation}\label{eq:bound:W2_cont_pp_cv}
    \wass\parentheseLigne{\xi_1 P_{k\gamma} , \xi_2\tilde{Q}_{\gamma}^{k}} \le \E\br{\|\vartheta_{\gamma k}-\theta_{k}\|^{2}}^{\half}\eqsp.
\end{equation}
We can now give the proof of \Cref{thm:bound:sec_continuous_vs_cv}.
 
\begin{proof}
  By \citet[Theorem 4.1]{Villani2008}, for any couple of probability
  measures on $\Rd$, there exists an optimal transference plan
  $\xi^{\star}$ between $\nu$ and $\pi$ since $\pi\in\Pens_2(\rset^d)$ by the strong convexity assumption \Cref{ass:potential_U}-\ref{ass:potential_U:1}. Let $(\vartheta_0,\theta_0)$ be a corresponding
  coupling which therefore satisfies
  $\wass\parentheseLigne{\mu,\pi}=\PE^{1/2}[\norm{\vartheta_0-\theta_0}^2]$. Consider then
  $\parentheseLigne{\vartheta_{k}}_{k\in\N},\parentheseLigne{\theta_{k}}_{k\in\N}$ defined in
  \eqref{eq:def:sde_prop}-\eqref{eq:def:theta_continuous_pp_cv}
  starting from $(\vartheta_0,\theta_0)$. Note that since $\pi P_t = \pi$ by \citet[Theorem 2.1]{Roberts1996} for any $t \geq 0$ and $\theta_0$ has distribution $\pi$, we get by \citet[Proposition 1]{durmus2018high} that for any $k \in\nset$, $\PE[\normLigne{\vartheta_{k\gamma} - \thetas}^2] \le d/\mtt$ and then
  \Cref{lem:bound:sec_continuous_vs_cv} below shows that for any $k \in\nset$,
    \begin{equation*}
    \E[\norm{\vartheta_{(k+1)\gamma}-\theta_{k+1}}^2]
    \le \kappa_{\gamma}    \E[\norm{\vartheta_{k\gamma}-\theta_{k}}^2]
    +\gamma^{2}\lip\tMH\, \expe{\normn{\theta_0-\theta^{\star}}^{2}}\tkappa_{\gamma}^{k}
    + \gamma^2 \Dg\eqsp,
  \end{equation*}
  where we have set
  \begin{align*}
    \kappa_\gamma = 1-\gamma \mU\parentheseLigne{1-5\gamma\mU}\eqsp,&
    &\tkappa_{\gamma} = 1-\gamma \mtt\br{2-\gamma\parentheseLigne{\mU+\tilde{\MH}}} \eqsp,&
    &\Dg = \DgZ + \prn{1/\mtt + 5\gamma}\prn{\nofrac{\gamma d \Ltt^4}{2\mtt}} \eqsp.
  \end{align*}
A straightforward induction shows that
  \begin{equation*}
    \E[\norm{\vartheta_{k\gamma}-\theta_{k}}^2]
    \le \kappa_{\gamma}^{k}\wass^2\parentheseLigne{\mu,\pi}
    +\gamma^{2}\lip\tMH\, \expe{\normn{\theta_0-\theta^{\star}}^{2}}\sum_{l=0}^{k-1}\kappa_{\gamma}^l\tkappa_{\gamma}^{k-1-l}
    + \gamma^2 \Dg/(1-\kappa_{\gamma})\eqsp.
  \end{equation*}
  Using $\kappa_{\gamma} \wedge \tkappa_{\gamma} \le 1 - \mtt \gamma/2$ since $\gamma \le \bgamma$,  \eqref{eq:bound:W2_cont_pp_cv}  and $\pi P_t = \pi$ for any $t \geq 0$ completes the proof.
\end{proof}

\subsubsection{Supporting Lemmata}\label{subsection:lemmas:general_qlsd}

In this subsection, we derived two lemmas. Taking $\prn{\theta_k}_{k\in\N}$ defined by the recursion \eqref{eq:def:theta_continuous_pp_cv}, \Cref{lem:bound:thetak_minustheta_star} aims to upper bound the squared deviation between $\theta_{k}$ and the minimiser of $\U$ denoted $\theta^\star$, for any $k\in\N$.

\begin{lemma}\label{lem:bound:thetak_minustheta_star}
    Assume \Cref{ass:potential_U} and \Cref{ass:tgrad:sharp}.
    Let $\txts\gamma\in\ocint{0,2/\parentheseLigne{\mU+\Ltt+\tilde{\MH}}}$. Then, for any $k \in \N,\theta_0\in\Rd$, we have
    \begin{equation*}
      \int_{\Rd}\norm{\theta-\theta^{\star}}^{2} \tilde{Q}_{\gamma}^k\parentheseLigne{\theta_0,\,\rmd\theta}
      \le\parentheseLigne{1-\gamma \mtt\br{2-\gamma\parentheseLigne{\mU+\tilde{\MH}}}}^{k}\normn{\theta_0-\theta^{\star}}^{2}
      +\frac{2d+\gamma\tBs}{\mtt\br{2-\gamma\parentheseLigne{\mU+\tilde{\MH}}}}\eqsp,
    \end{equation*}
    where $\tilde{Q}_{\gamma}$ is defined in \eqref{eq:def:tQ}.
\end{lemma}
\begin{proof}
For any $\theta_0\in\Rd$, by definition \eqref{eq:def:tQ} of $\tilde{Q}_{\gamma}$ and using \Cref{ass:tgrad:sharp}-\ref{ass:tgrad:sharp:2}, we obtain
\begin{multline}\label{eq:eq:contraction_proof_D}
  \int_{\Rd}\norm{\theta-\theta^{\star}}^{2} \tilde{Q}_{\gamma}\parentheseLigne{\theta_0,\,\rmd\theta}
=\normn{\theta_0-\theta^{\star}}^{2}
  -2\gamma\ps{\theta_0-\theta^{\star}}{\nabla\U\parentheseLigne{\theta_0}} \\
  +\gamma^{2}\int_{\tilde{\msx}^{(1:b)}}\norm{\textstyle\sum_{i=1}^{b}\tgradD_{i}\parentheseLigne{\theta_0,x^{(i)}}}^{2} \otimes_{i=1}^b\nu^{(i)}\parentheseLigne{\rmd x^{(i)}}
  +2\gamma d\eqsp.
\end{multline}
Moreover, using \Cref{ass:potential_U}, \Cref{ass:tgrad:sharp} and \eqref{eq:intro:Nesterov}, it follows that
\begin{align}
\int_{\tilde{\msx}^{(1:b)}}\norm{\textstyle\sum_{i=1}^{b}\tgradD_{i}\parentheseLigne{\theta_0,x^{(i)}}}^{2}\otimes_{i=1}^b\nu^{(i)}\parentheseLigne{\rmd x^{(i)}}
\nonumber
&=  \int_{\tilde{\msx}^{(1:b)}}\norm{\textstyle\sum_{i=1}^{b}\tgradD_{i}\parentheseLigne{\theta_0,x^{(i)}}- \nabla U(\theta_0)}^2\otimes_{i=1}^b\nu^{(i)}\parentheseLigne{\rmd x^{(i)}}\\
\nonumber
&+\norm{\nabla U(\theta_0)}^2 \\
\nonumber
&\le  \tilde{\MH} \ps{\theta_0-\theta^{\star}}{\nabla\U(\theta_0)} +  \tBs +   \norm{\nabla U(\theta_0) - \nabla U(\thetas)}^2  \\
&\le  [\mtt + \Ltt + \tilde{\MH}] \ps{\theta_0-\theta^{\star}}{\nabla\U(\theta_0)} +  \tBs  - \Ltt \mtt \norm{\theta_0-\thetas}^2 \eqsp.
  \label{eq:bound:contraction_proof:tgrad_D}
\end{align}
Plugging \eqref{eq:bound:contraction_proof:tgrad_D} in \eqref{eq:eq:contraction_proof_D} implies
\begin{multline*}
\int_{\Rd}\norm{\theta-\theta^{\star}}^{2} \tilde{Q}_{\gamma}(\theta_0,\rmd \theta)
\le (1-\gamma^2 \mtt \Ltt)\normn{\theta_0-\theta^{\star}}^{2}
  -\gamma\{2-\gamma [\mtt + \Ltt + \tilde{\MH}]\} \ps{\theta_0-\theta^{\star}}{\nabla\U\parentheseLigne{\theta_0}} 
  +\gamma^{2}  \tBs  + 2 \gamma d \eqsp.
\end{multline*}
Using \Cref{ass:potential_U}-\ref{ass:potential_U:1}, we have $\psn{\theta_0-\theta^{\star}}{\nabla\U\parentheseLigne{\theta_0}}\ge \mtt \normn{\theta_0-\theta^{\star}}^{2}$ which, combined with the condition $\gamma \le 1/(\mtt +\Ltt + \tMH)$, gives
\begin{equation*}
\int_{\Rd}\norm{\theta-\theta^{\star}}^{2}\tilde{Q}_{\gamma}(\theta_0,\rmd\theta)
\le\parentheseLigne{1-\gamma \mtt\brn{2-\gamma\parentheseLigne{\mU+\tilde{\MH}}}}\normn{\theta_0-\theta^{\star}}^{2}
+\gamma\parentheseLigne{2d+\gamma\tBs}\eqsp.
\end{equation*}
Using $0<\gamma<2/\parentheseLigne{\mU+\tilde{\MH}}$ and the Markov property combined with a straightforward induction completes the proof.
\end{proof}
For any $k\in\N$, the following lemma gives an explicit upper bound on the expected squared norm between $\vartheta_{k+1}$ and $\theta_{k+1}$ in function of $\vartheta_{k}, \theta_{k}$. 
The purpose of this lemma is to derive a contraction property involving a contracting term and a bias term which is easy to control. 
\begin{lemma}\label{lem:bound:sec_continuous_vs_cv}
  Assume \Cref{ass:potential_U} and \Cref{ass:tgrad:sharp}. Consider $\sequencet{\vartheta}[t][0]$ and $(\theta_k)_{k\in\nset}$ defined in \eqref{eq:def:sde_prop} and \eqref{eq:def:theta_continuous_pp_cv}, respectively, for some initial distribution $\xi \in \Pens_2(\rset^{2d})$. For any $k \in \N$ and  $\gamma \in \ooint{0, 2/\brn{(5\parentheseLigne{\mU+\lip}) \vee \parentheseLigne{\mU+\tilde{\MH}+\Ltt}}}$, we have 
  \begin{align*}
    \E\br{\norm{\vartheta_{\gamma (k+1)}-\theta_{k+1}}^2}
    &\le \{1-\gamma \mU\parentheseLigne{1-5\gamma\mU}\}\E\br{\normn{\vartheta_{k}-\theta_k}^2}   +\gamma^2 \DgZ \\ & \qquad +\gamma^{2}\lip\tilde{\MH}( 1-\gamma \mtt\br{2-\gamma\parentheseLigne{\mU+\tilde{\MH}}})^{k}\PE[\normn{\theta_0-\theta^{\star}}^{2}]
\\ & \qquad+ \gamma^3\parentheseLigne{1/\mU+5\gamma}\lip^{4}\expe{\normLigne{\vartheta_{k\gamma}-\thetas}^2}/2 \eqsp,
  \end{align*}
  where
  \begin{equation*}
    \DgZ = d\Ltt^2\parentheseLigne{1/\mU+5\gamma}\br{1+\gamma^{2}\lip^{2}/12}
    +\tBs+\frac{\lip\tMH\parentheseLigne{2d+\gamma\tBs}}{\mU\br{2-\gamma\parentheseLigne{\mU+\tMH}}}\eqsp.
  \end{equation*}
\end{lemma}
\begin{proof}
Let $k \in\nset$.
By \eqref{eq:def:sde_prop} and \eqref{eq:def:theta_continuous_pp_cv}, we have
\begin{multline*}
\vartheta_{\gamma(k+1)}-\theta_{k+1}=\vartheta_{\gamma k}-\theta_{k}-\gamma\br{\nabla\U\parentheseLigne{\vartheta_{\gamma k}}-\nabla\U\parentheseLigne{\theta_{k}}}\\
-\int_{0}^{\gamma}\br{\nabla\U\parentheseLigne{\vartheta_{\gamma k+s}}-\nabla\U\parentheseLigne{\vartheta_{\gamma k}}}\,\rmd s
+\gamma\sum_{i=1}^{b}\br{\tgradD_{i}\parentheseLigne{\theta_{k},X_{k+1}^{(i)}}-\nabla\U_{i}\parentheseLigne{\theta_{k}}}\eqsp.
\end{multline*}
Define the filtration $\parentheseLigne{\mathcal{F}_{\tilde{k}}}_{\tilde{k} \in \mathbb{N}}$ as $\mathcal{F}_{0}=\sigma\parentheseLigne{\vartheta_{0},\theta_{0}}$ and for $\tilde{k} \in \mathbb{N}^{*}$,
\begin{align*}
  \mathcal{F}_{\tilde{k}} &= \sigma\parentheseLigne{\vartheta_{0},\theta_{0},\parentheseLigne{X_{l}^{(1)} , \ldots, X_{l}^{(b)}}_{1\le l\le \tilde{k}},\parentheseLigne{B_{t}}_{0\le t\le \gamma \tilde{k}}}\eqsp.
\end{align*}

Note that since $\sequencet{\vartheta}[t][0]$ is a strong solution of \eqref{eq:def:sde_prop}, then is easy to see that $(\vartheta_{\gamma \tilde{k}}, \theta_{\tilde{k}})_{\tilde{k}\in\nset}$ is $(\mcf_{\tilde{k}})_{\tilde{k}\in\nset}$-adapted. 
Taking the squared norm and the conditional expectation with respect to $\mathcal{F}_{k}$, we obtain using \Cref{ass:tgrad:sharp}-\ref{ass:tgrad:sharp:2} that
\begin{align}
  &\nonumber\E^{\mathcal{F}_{k}}\br{\norm{\vartheta_{\gamma(k+1)}-\theta_{k+1}}^{2}}
    =\norm{\vartheta_{\gamma k}-\theta_{k}}^{2}
    -2\gamma\ps{\vartheta_{\gamma k}-\theta_{k}}{\nabla\U\parentheseLigne{\vartheta_{\gamma k}}-\nabla\U\parentheseLigne{\theta_{k}}}\\
    &\qquad\qquad\qquad\nonumber+2\gamma\int_{0}^{\gamma}\ps{\nabla\U\parentheseLigne{\vartheta_{\gamma k}}-\nabla\U\parentheseLigne{\theta_{k}}}{\E^{\mathcal{F}_{k}}\br{\nabla\U\parentheseLigne{\vartheta_{\gamma k+s}}-\nabla\U\parentheseLigne{\vartheta_{\gamma k}}}}\,\rmd s\\
    &\qquad\qquad\qquad\nonumber-2\int_{0}^{\gamma}\ps{\vartheta_{\gamma k}-\theta_{k}}{\E^{\mathcal{F}_{k}}\br{\nabla\U\parentheseLigne{\vartheta_{\gamma k+s}}-\nabla\U\parentheseLigne{\vartheta_{\gamma k}}}}\,\rmd s\\
    &\qquad\qquad\qquad\nonumber+\gamma^{2}\norm{\nabla\U\parentheseLigne{\vartheta_{\gamma k}}-\nabla\U\parentheseLigne{\theta_{k}}}^{2}\\
    &\qquad\qquad\qquad\nonumber+\E^{\mathcal{F}_{k}}\br{\norm{\int_{0}^{\gamma}\br{\nabla\U\parentheseLigne{\vartheta_{\gamma k+s}}-\nabla\U\parentheseLigne{\vartheta_{\gamma k}}}\,\rmd s}^{2}}\\
    &\qquad\qquad\qquad+\gamma^{2}\E^{\mathcal{F}_{k}}\br{\norm{\sum_{i=1}^{b}\tgradD_{i}\parentheseLigne{\theta_{k},X_{k+1}^{(i)}}-\nabla\U\parentheseLigne{\theta_{k}}}^{2}}\eqsp.\label{eq:eq:diff_cont_discret_pp_cv}
\end{align}
First, using Jensen inequality and the fact that for any $a,b\in\Rd$, $\abs{\ps{a}{b}}\le 2\norm{a}^{2}+2\norm{b}^{2}$, we get
\begin{align}
  \nonumber
  &\int_{0}^{\gamma}\ps{\nabla\U\parentheseLigne{\vartheta_{\gamma k}}-\nabla\U\parentheseLigne{\theta_{k}}}{\E^{\mathcal{F}_{k}}\br{\nabla\U\parentheseLigne{\vartheta_{\gamma k+s}}-\nabla\U\parentheseLigne{\vartheta_{\gamma k}}}}\,\rmd s \\
  &\qquad \le 2\gamma\norm{\nabla\U\parentheseLigne{\vartheta_{\gamma k}}-\nabla\U\parentheseLigne{\theta_{k}}}^{2}
  +2\int_{0}^{\gamma} \E^{\mathcal{F}_{k}}\br{\norm{\nabla\U\parentheseLigne{\vartheta_{\gamma k+s}}-\nabla\U\parentheseLigne{\vartheta_{\gamma k}}}^{2}}\,\rmd s\eqsp, \label{eq:bound:ps_un_pp_cv} \\
  \nonumber
  &\E^{\mathcal{F}_{k}}\br{\norm{\int_{0}^{\gamma}\br{\nabla\U\parentheseLigne{\vartheta_{\gamma k+s}}-\nabla\U\parentheseLigne{\vartheta_{\gamma k}}}\,\rmd s}^{2}}
  \le\gamma\int_{0}^{\gamma}\E^{\mathcal{F}_{k}}\br{\norm{\nabla\U\parentheseLigne{\vartheta_{\gamma k+s}}-\nabla\U\parentheseLigne{\vartheta_{\gamma k}}}^{2}}\,\rmd s\eqsp.
\end{align}
In addition, given that for any $\varepsilon>0,a,b\in\Rd$, $\lvert\ps{a}{b}\rvert\le \varepsilon\norm{a}^{2}+\parentheseLigne{4\varepsilon}^{-1}\norm{b}^{2}$, we get
\begin{multline}\label{eq:bound:ps_deux_pp_cv}
\abs{\int_{0}^{\gamma}\ps{\theta_{k}-\vartheta_{\gamma k}}{\E^{\mathcal{F}_{k}}\br{\nabla\U\parentheseLigne{\vartheta_{\gamma k+s}}-\nabla\U\parentheseLigne{\vartheta_{\gamma k}}}}\,\rmd s}
\le \gamma\varepsilon\norm{\vartheta_{\gamma k}-\theta_{k}}^{2}\\
+\parentheseLigne{4\varepsilon}^{-1}\int_{0}^{\gamma}\E^{\mathcal{F}_{k}}\br{\norm{\nabla\U\parentheseLigne{\vartheta_{\gamma k+s}}-\nabla\U\parentheseLigne{\vartheta_{\gamma k}}}^{2}}\,\rmd s\eqsp.
\end{multline}
By \Cref{ass:potential_U}, for $k\in\N$ we get by \eqref{eq:intro:Nesterov}
\begin{equation}\label{eq:bound:ps_{t}rois_pp_cv}
\norm{\nabla\U\parentheseLigne{\vartheta_{\gamma k}}-\nabla\U\parentheseLigne{\theta_{k}}}^{2}
\le \parentheseLigne{\mU+\lip}\ps{\vartheta_{\gamma k}-\theta_{k}}{\nabla\U\parentheseLigne{\vartheta_{\gamma k}}-\nabla\U\parentheseLigne{\theta_{k}}}-\mU\lip\norm{\vartheta_{\gamma k}-\theta_{k}}^2\eqsp.
\end{equation}
Lastly, \Cref{ass:tgrad:sharp}-\ref{ass:tgrad:sharp:1} yields
\begin{equation}\label{eq:bound:tgrad}
    \E^{\mcf_k}\br{\norm{\sum_{i=1}^{b}\tgradD_{i}\parentheseLigne{\theta_{k},X_{k+1}^{(i)}}-\nabla\U\parentheseLigne{\theta_{k}}}^{2}}
    \le \tilde{\MH}\ps{\theta_{k}-\theta^{\star}}{\nabla\U(\theta_{k})-\nabla\U(\theta^{\star})}
    +\tBs\eqsp.
\end{equation}
Combining \eqref{eq:bound:ps_un_pp_cv}, \eqref{eq:bound:ps_deux_pp_cv}, \eqref{eq:bound:ps_{t}rois_pp_cv} and \eqref{eq:bound:tgrad} into \eqref{eq:eq:diff_cont_discret_pp_cv}, for $k\in\N$ we get for any $\varepsilon >0$,
\begin{align*}
    \E^{\mathcal{F}_{k}}\br{\norm{\vartheta_{\gamma(k+1)}-\theta_{k+1}}^{2}}
    \nonumber
    &\le\parentheseLigne{1+2\gamma\varepsilon-5\gamma^2\mU\lip}\norm{\vartheta_{\gamma k}-\theta_{k}}^{2}\\
    \nonumber
    &-\gamma\br{2-5\gamma\parentheseLigne{\mU+\lip}}\ps{\vartheta_{\gamma k}-\theta_{k}}{\nabla\U\parentheseLigne{\vartheta_{\gamma k}}-\nabla\U\parentheseLigne{\theta_{k}}}\\
    \nonumber
    &+\parentheseLigne{5\gamma+(2\varepsilon)^{-1}}\int_{0}^{\gamma}\E^{\mathcal{F}_{k}}\br{\norm{\nabla\U\parentheseLigne{\vartheta_{\gamma k+s}}-\nabla\U\parentheseLigne{\vartheta_{\gamma k}}}^{2}}\,\rmd s\\
    &+\gamma^2\tilde{\MH}\ps{\theta_{k}-\theta^{\star}}{\nabla\U(\theta_{k})-\nabla\U(\theta^{\star})}
    +\gamma^{2}\tBs\eqsp.
\end{align*}
Next, we use that under \Cref{ass:potential_U},  $\psLigne{\vartheta_{\gamma k}-\theta_{k}}{\nabla\U\parentheseLigne{\vartheta_{\gamma k}}-\nabla\U\parentheseLigne{\theta_{k}}}\ge \mU \normLigne{\vartheta_{\gamma k}-\theta_{k}}^{2}$ and $\absLigne{\psLigne{\theta_{k}-\theta^{\star}}{\nabla\U(\theta_{k})-\nabla\U(\theta^{\star})}}\le \lip\normLigne{\theta_{k}-\theta^{\star}}^2$,
which implies taking $\varepsilon = \mtt /2$ and since $2 - 5\gamma(\mtt + \Ltt) \geq 0$,
\begin{align}
    \E^{\mathcal{F}_{k}}\br{\norm{\vartheta_{\gamma(k+1)}-\theta_{k+1}}^{2}}
    \nonumber
    &\le\parentheseLigne{1-\gamma\mU(1-5\gamma\mU)}\norm{\vartheta_{\gamma k}-\theta_{k}}^{2}\\
    \nonumber
    &+\parentheseLigne{5\gamma+\mtt^{-1}}\int_{0}^{\gamma}\E^{\mathcal{F}_{k}}\br{\norm{\nabla\U\parentheseLigne{\vartheta_{\gamma k+s}}-\nabla\U\parentheseLigne{\vartheta_{\gamma k}}}^{2}}\,\rmd s\\
    &+\gamma^2\tilde{\MH}\Ltt\norm{\theta_{k}-\theta^{\star}}^2
    +\gamma^{2}\tBs\eqsp.
    \label{main:eq:bound:diff_rec_cont_discrete_pp_cv}
\end{align}

Further, for any $s\in\R_+$, using \citet[Lemma 21]{durmus2018high} we have 
\begin{equation*}
\lip^{-2}\,\E^{\mathcal{F}_{k}}\br{\norm{\nabla\U\parentheseLigne{\vartheta_{\gamma k+s}}-\nabla\U\parentheseLigne{\vartheta_{\gamma k}}}^{2}}
\le d s\pr{2+s^{2}\lip^{2}/3}+3s^{2}\lip^{2}/2\norm{\vartheta_{\gamma k}-\theta^{\star}}^{2}\eqsp.
\end{equation*}
Integrating the previous inequality on $\br{0,\gamma}$, for $k\ge 0$ we obtain
\begin{equation*}
\lip^{-2}\int_{0}^{\gamma}\E^{\mathcal{F}_{k}}\br{\norm{\nabla\U\parentheseLigne{\vartheta_{\gamma k+s}}-\nabla\U\parentheseLigne{\vartheta_{\gamma k}}}^{2}}\,\rmd s
\le d\gamma^2+d\gamma^4\lip^{2}/12+\gamma^{3}\lip^{2}/2\norm{\vartheta_{\gamma k}-\theta^{\star}}^{2}\eqsp.
\end{equation*}
Plugging this bounds in \eqref{main:eq:bound:diff_rec_cont_discrete_pp_cv} and taking the expectation combined with \Cref{lem:bound:thetak_minustheta_star} conclude the proof.
\end{proof}


\subsection{Proof of \Cref{thm:QLSD}}

Based on \Cref{thm:bound:sec_continuous_vs_cv}, the next corollary explicits an upper bound in Wasserstein distance between $\pi$ and $\mu Q_{\gamma}^k$, where we consider $\prn{\theta_{k}}_{k\in\N}$ defined in \eqref{eq:def:recursion:theta} and starting from $\btheta$ following $\mu\in\mathcal{P}_2\prn{\Rd}$.

\begin{theorem}\label{cor:bound:sec_continuous_vs_cv}
  Assume \Cref{ass:potential_U}, \Cref{ass:compression}, \Cref{ass:stochastic_gradient} and \Cref{ass:A_k_supp}. 
  Then, for any probability measure
  $\mu\in\mathcal{P}_{2}\parentheseLigne{\Rd}$, any step size $\gamma\in\ocint{0,\bgamma}$ where $\bar{\gamma}$ is defined in \eqref{eq:bar_gamma}, any $k \in \N$, we have
  \begin{align*}
    \wass^{2}\parentheseLigne{\mu Q_{\gamma}^{k},\pi}
    \le (1-\gamma \mtt/2)^k\wass^{2}\parentheseLigne{\mu,\pi}
    +\gamma B_{\bar{\gamma}}
        +\gamma^{2}A_{\bgamma}(1-\mtt \gamma/2)^{k-1}k\int_{\Rd}\normn{\theta-\theta^{\star}}^{2}\mu\parentheseLigne{\rmd\theta}\eqsp,
  \end{align*}
  where $Q_{\gamma}$ is defined in \eqref{eq:def:intro:Q} and
  \begin{align}
      \nonumber
      B_{\bar{\gamma}} &= (2d\lip^2/\mtt)\pr{1/\mU+5\bar{\gamma}}\br{1+\bar{\gamma} \lip^2/(2\mU)+\bar{\gamma}^{2} \lip^2/12}+4[\sigmas^2 + (\Bs/b)\sum_{i=1}^b(1-p_i+\omega_i)/p_i]/\mtt \\
      &+ 8\lip\max_{i \in [b]}\{\MH_i(1+\omega_i)/p_i\})\br{d+\bar{\gamma}[\sigmas^2 + (\Bs/b)\sum_{i=1}^b(1-p_i+\omega_i)/p_i]}/\mU^2\label{eq:def:E_gamma} \\
      \nonumber
      A_{\bgamma} &= 2\lip\max_{i \in [b]}\{\MH_i(1+\omega_i)/p_i\})\eqsp.
  \end{align}
\end{theorem}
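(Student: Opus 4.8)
The plan is to obtain \Cref{cor:bound:sec_continuous_vs_cv} as a direct instantiation of the generalised bound of \Cref{thm:bound:sec_continuous_vs_cv}, so that all the analytic work (the synchronous coupling, \Cref{lem:bound:thetak_minustheta_star}, \Cref{lem:bound:sec_continuous_vs_cv}, and the invariance $\pi P_t = \pi$) is reused verbatim. First I would note that the \qlsd{} recursion \eqref{eq:def:recursion:theta} is exactly of the form \eqref{eq:def:recursion:theta_bis} with the surrogate oracles $\tgrad_i(\theta,(x^{(1)},x^{(2)})) = \up(\grad_i(\theta,x^{(1)}),x^{(2)})$ of \eqref{eq:def:tgrad}; consequently the Markov kernel $Q_\gamma$ of \eqref{eq:def:intro:Q} coincides with the generalised kernel $\tilde Q_\gamma$ of \eqref{eq:def:tQ} for the choice $\tilde\msx = \msx_1\times\msx_2$, $\tilde{\mathcal X} = \mathcal X_1\otimes\mathcal X_2$, $\tilde\nu = \nu_1\otimes\nu_2$ and $\{\tgradD_i\}_{i\in[b]} = \{\tgrad_i\}_{i\in[b]}$, and $\mu Q_\gamma^k$ is the law of the iterate $\theta_k$ of \eqref{eq:def:recursion:theta} started from $\mu$. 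Then \Cref{lem:intro:1} shows that, under \Cref{ass:compression} and \Cref{ass:stochastic_gradient}, this family satisfies \Cref{ass:tgrad:sharp} with the explicit constants $\tMtt = 2(\omega+1)\MH$ and $\tBs = 2(\omega\Bs+\sigmas^2)$.

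Second, I would check that the step-size threshold demanded by \Cref{thm:bound:sec_continuous_vs_cv}, namely $\bgamma$ of \eqref{eq:bar_gamma}, is precisely the one in the statement once $\tMtt$ there is read as $\tMtt = 2(\omega+1)\MH$; this requires nothing beyond matching notation. With these identifications in place, \Cref{ass:potential_U} and \Cref{ass:tgrad:sharp} hold, so \Cref{thm:bound:sec_continuous_vs_cv} applies and yields, for every $\mu\in\Pens_2(\Rd)$, every $\gamma\in\ocint{0,\bgamma}$ and every $k\in\N$,
\[
\wass^2(\mu Q_\gamma^k,\pi)\le (1-\gamma\mtt/2)^k\,\wass^2(\mu,\pi)+\gamma\,\tilde B_{\bgamma}+\gamma^2\,\tilde A_{\bgamma}\,(1-\mtt\gamma/2)^{k-1}k\int_{\Rd}\norm{\theta-\theta^\star}^2\mu(\rmd\theta),
\]
with $\tilde A_{\bgamma}=\lip\tMH$ and $\tilde B_{\bgamma}$ as in the display of \Cref{thm:bound:sec_continuous_vs_cv}.

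Third, I would substitute $\tMtt = 2(\omega+1)\MH$ and $\tBs = 2(\omega\Bs+\sigmas^2)$ into those two constants, using the harmless macro identifications $\mtt=\mU$ and $\lip=\Ltt$. For $\tilde A_{\bgamma}$ this gives at once $A_{\bgamma}=2(\omega+1)\lip\MH$. For $\tilde B_{\bgamma}$, the first summand $(2d/\mtt)(\mU^{-1}+5\bgamma)[1+\bgamma\lip^2/(2\mU)+\bgamma^2\lip^2/12]$ is unchanged, the term $2\tBs/\mtt$ becomes $4(\omega\Bs+\sigmas^2)/\mtt$, and the term $\lip\tMH(2d+\bgamma\tBs)/\mtt^2$ becomes $4(\omega+1)\lip\MH(d+\bgamma(\omega\Bs+\sigmas^2))/\mU^2$; adding these reproduces $B_{\bgamma}$ as in \eqref{eq:def:E_gamma}. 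I do not expect a genuine obstacle: the proof is entirely the reduction via \Cref{lem:intro:1} plus a substitution of constants, and the only point requiring care is the bookkeeping of the factors $2(\omega+1)$ and $2(\omega\Bs+\sigmas^2)$ as they propagate through $\tilde B_{\bgamma}$.
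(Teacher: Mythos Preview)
Your proposal is correct and follows exactly the same approach as the paper: invoke \Cref{lem:intro:1} to verify \Cref{ass:tgrad:sharp} with $\tMH = 2(\omega+1)\MH$ and $\tBs = 2(\omega\Bs+\sigmas^2)$, then apply \Cref{thm:bound:sec_continuous_vs_cv} and substitute these constants. Your bookkeeping of the factors through $\tilde B_{\bgamma}$ is more explicit than the paper's one-line proof, but the logic is identical.
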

\begin{proof}
  By \Cref{lem:intro:1}, the assumption \Cref{ass:tgrad:sharp} is satified for a choice of $\tilde{\MH}=2\max_{i \in [b]}\{\MH_i(1+\omega_i)/p_i\})$ and $\tBs=2[\sigmas^2 + (\Bs/b)\sum_{i=1}^b(1-p_i+\omega_i)/p_i]$. Therefore, applying \Cref{thm:bound:sec_continuous_vs_cv} completes the proof.
\end{proof}

\section{PROOF OF \Cref{thm:QLSD_VR}}\label{sec:cv_qlsdstar}


We assume here that $\{U_i\}_{i \in [b]}$ are defined, for any $i\in[b]$ and $\theta\in\Rd$, by
\begin{equation*}\label{eq:def:Uij}
	U_{i}\prn{\theta}=\sum_{j=1}^{N_i}U_{i,j}\prn{\theta}\eqsp, \quad N_i \in \N^*\eqsp.
\end{equation*}
      We consider the following set of assumptions on $\{U_i\}_{i \in [b]}$ and $\{U_{i,j} : j \in [N_i]\}_{i \in [b]}$.
      \begin{assumption}
        \label{ass:potential_fij}
         For any $i\in[b],j\in [N_i]$, $U_{i,j}$ is continuously differentiable and the following conditions hold. 
    \begin{enumerate}[wide, labelwidth=!, labelindent=0pt,label=(\roman*),noitemsep,nolistsep]
    \item 
    There exist $\{\MH_i > 0\}_{i \in [b]}$, such that for any $i \in [b]$, $\theta_{1},\theta_{2} \in \Rd$,
      \begin{equation*}      
        \norm{\nabla U_i(\theta_2) - \nabla U_i(\theta_2)}^2 \le  \MH_i \ps{\theta_{2}-\theta_{1}}{\nabla\U_{i}\prn{\theta_{2}}-\nabla\U_{i}\prn{\theta_{1}}}\eqsp.
      \end{equation*}
    \item   
There exists $\bMH\ge 0$ such that, for any $\theta_{1},\theta_{2}\in\Rd$,
  	\begin{equation*}
  		\norm{\nabla\U_{i,j}\prn{\theta_{2}}-\nabla\U_{i,j}\prn{\theta_{1}}}^2
  		\le \bMH \ps{\nabla\U_{i,j}\prn{\theta_{2}}-\nabla\U_{i,j}\prn{\theta_{1}}}{\theta_{2}-\theta_{1}}\eqsp.
              \end{equation*}
            \end{enumerate}
          \end{assumption}

In all this section, we assume for any $i \in [b]$ that $n_i \in \nsets$, $n_i \le N_i$ is fixed. 
For any $i \in [b]$, recall that  $\wp_{N_i}$ denotes the power set of $[N_i]$ and
\begin{equation*}
  \label{eq:2}
  \wp_{N_i,n_i} = \{ x \in \wp_{N_i} \, :\,  \card(x) = n_i\} \eqsp. 
\end{equation*}
We set in this section $\nu_1^{(i)}$ as the uniform distribution on $\wp_{N_i,n_i}$.
We consider the family of measurable functions $\acn{H^\star_i:\Rd \times \Rd\times \wp_{N_i}\to\Rd}_{i\in[b]}$, defined for any $i \in [b]$, $\theta\in\Rd$,  $x \in \wp_{N_i,n_i}$ by
\begin{equation}
	\label{eq:definition_Fi_star}	
    H^\star_i\prn{\theta,x}=\frac{N_i}{n_i}\sum_{j=1}^{N_i}\mathbf{1}_{x}(j)\br{\nabla\U_{i,j}(\theta)-\nabla\U_{i,j}(\thetas)}\eqsp.
    \end{equation}

Using this specific family of gradient estimators boils down to the \texttt{QLSD}$^\star$ algorithm detailed in \Cref{algo:QLSD-star}.

\begin{algorithm}[h]
   \caption{Variance-reduced Quantised Langevin Stochastic Dynamics (\texttt{QLSD}$^{\star}$)}
   \label{algo:QLSD-star}
  \begin{algorithmic}
     \State {\bfseries Input:} minibatch sizes $\{n_i\}_{i \in [b]}$, number of iterations $K$, compression operators $\{\mathscr{C}_{k+1}\}_{k\in\mathbb{N}^*}$, step-size $\gamma \in (0,\bar{\gamma}]$ with $\bar{\gamma} > 0$ and initial point $\theta_0$.
     \For{$k=0$ {\bfseries to} $K-1$}
     \For{$i \in \mathcal{A}_{k+1}$ \Comment{On active clients}}
        \State Draw $\mathcal{S}_{k+1}^{(i)} \sim \mathrm{Uniform}\pr{\wp_{N_i,n_i}}$.
        \State {\small Set $H_{k+1}^{(i)}(\theta_k) = (N_i/n_i)\sum_{j \in \mathcal{S}_{k+1}^{(i)}} [\nabla U_{i,j}(\theta_k) - \nabla U_{i,j}(\thetas)]$.}
        \State Compute $\textsl{g}_{i,k+1} = \mathscr{C}_{k+1}\pr{H_{k+1}^{(i)}(\theta_k)}$.
        \State Send $\textsl{g}_{i,k+1}$ to the central server.
     \EndFor
     \State \Comment{On the central server}
     \State Compute $\textsl{g}_{k+1} = \frac{b}{|\mathcal{A}_{k+1}|}\sum_{i \in \mathcal{A}_{k+1}}\textsl{g}_{i,k+1}$.
     \State Draw $Z_{k+1} \sim \mathrm{N}(0_d,\mathrm{I}_d)$.
     \State Compute $\theta_{k+1} = \theta_k - \gamma \textsl{g}_{k+1} + \sqrt{2\gamma}Z_{k+1}$.
     \State Send $\theta_{k+1}$ to the $b$ clients.
     \EndFor
     \State {\bfseries Output:} samples $\{\theta_k\}_{k=0}^{K}$.
  \end{algorithmic}
\end{algorithm}

    Let $(X_k^{(1,1)},\ldots,X_k^{(1,b)})_{k \in\nsets}$ and $(X_k^{(2,1)},\ldots,X_k^{(2,b)})_{k \in\nsets}$  be  two independent \iid~sequences with distribution $\otimes_{i=1}^b\nu_1^{(i)}$ and $\nu_2^{\otimes b}$. Let $(Z_k)_{k\in\nsets}$ be an \iid~sequence of $d$-dimensional standard Gaussian random variables independent of $(X_k^{(1,1)},\ldots,X_k^{(1,b)})_{k \in\nsets}$ and $(X_k^{(2,1)},\ldots,X_k^{(2,b)})_{k \in\nsets}$. 
    Similarly as before, we consider the partial device participation context where at each communication round $k\ge1$, each client has a probability $p_i \in (0,1]$ of participating, independently from other clients.
    In other words, there exists a sequence $(X_k^{(3,1)},\cdots,X_k^{(3,b)})_{k\in \mathbb{N}^*}$ of i.i.d.  random variables distributed according $\nu_3 = \mathrm{Uniform}((0,1])$, such that for any $k\ge1$ and $i\in[b]$, client $i$ is active at step $k$ if $X_k^{(3,i)} \le p_i$.
    We denote $\mathcal{A}_{k+1} = \{i \in [b]; X_{k+1}^{(3,i)} \le p_i\}$ the set of active clients at round $k$. 
    For ease of notation, denote for any $k \in\nsets$, $X_{k}^{(1)} = (X_k^{(1,1)},\ldots,X_k^{(1,b)})$, $X_k^{(2)}= (X_k^{(2,1)},\ldots,X_k^{(2,b)})$, $X_k^{(3)}= (X_k^{(3,1)},\ldots,X_k^{(3,b)})$ and $X_k = (X_k^{(1)},X_k^{(2)},X_k^{(3)})$.
    
Note that with this notation and under \Cref{ass:compression}, \texttt{QLSD}$^\star$ can be cast into the framework of the generalised {\qlsd}~scheme defined in \eqref{eq:def:recursion:theta} since the recursion associated to \texttt{QLSD}$^\star$ can be written as
  \begin{equation}\label{eq:def:recursion:tildetheta}
    \tilde{\theta}_{k+1} = \tilde{\theta}_{k}-\gamma \sum_{i=1}^{b}\mathscr{S}_i\br{\mathscr{C}_i \parenthese{H^{\star}_i\prn{\tilde{\theta}_{k},X_{k+1}^{(1,i)}}, X_{k+1}^{(2,i)}},X_{k+1}^{(3,i)}} + \sqrt{2\gamma}Z_{k+1}\eqsp,\qquad k\in\N\eqsp,
  \end{equation}
  where, for any $i \in [b]$,  $\mathscr{S}_i$ is defined in \eqref{def:PP}.
Therefore, we only need to verify that \Cref{ass:tgrad:sharp} is satisfied with $\msx^{(i)} = \tilde{\msx}^{(i)} = \msx_{1}^{(i)} \times \msx_{2}\times \msx_{3}$, $\mathcal{X}^{(i)}=\tilde{\mathcal{X}}^{(i)} = \mathcal{X}_{1}^{(i)} \otimes \mathcal{X}_{2}\otimes \mathcal{X}_{3}$, $\tilde{\nu}^{(i)} = \nu_{1}^{(i)} \otimes \nu_{2} \otimes \nu_{3}$ for $i \in [b]$ and $\{F_i\}_{i=1}^{b} = \{F_i^\star\}_{i=1}^{b} = \{\mathscr{S}_i \circ \mathscr{C}_i \circ H_i^\star\}_{i=1}^{b}$.
This is done in \Cref{subsubsec:QLSDs_lemmata}.

\subsection{Proof of \Cref{thm:QLSD_VR}}

The Markov kernel associated with
\eqref{eq:def:recursion:tildetheta} is given for any
$(\theta,\msa)\in\Rd\times \mathcal{B}(\Rd)$ by
\begin{equation}\label{eq:def:kernel_QLSDs}
 Q_{\ostar,\gamma}(\theta, \msa )
 = \parentheseLigne{4\uppi\gamma}^{-d/2}\int_{\msa \times\tilde{\msx}^{b}} \exp\pr{-\normn{\btheta-\theta+\gamma\sum_{i=1}^{b}F_{i}^\star\parentheseLigne{\theta,x^{(i)}}}^2/(4\gamma)}\rmd \tilde{\theta} \, \otimes_{i=1}^b\tilde{\nu}^{(i)}\parentheseLigne{\rmd x^{\parentheseLigne{i}}}\eqsp.
\end{equation}
Then, the following non-asymptotic convergence result holds for \texttt{QLSD}$^\star$.
\begin{theorem}\label{cor:bound:sec_continuous_vs_fp:1}
  Assume \Cref{ass:potential_U}, \Cref{ass:compression}, \Cref{ass:A_k_supp} and \Cref{ass:potential_fij}.
  Then, for any probability measure
  $\mu\in\mathcal{P}_{2}\parentheseLigne{\Rd}$, any step size $\gamma\in\ocint{0,\bgamma}$ where $\bar{\gamma}$ is defined in \eqref{eq:bar_gamma}, any $k \in \N$, we have
  \begin{align*}
    \wass^{2}\parentheseLigne{\mu Q_{\ostar,\gamma}^{k},\pi}
    \le (1-\gamma \mtt/2)^k\wass^{2}\parentheseLigne{\mu,\pi}
    + \gamma B_{\ostar,\bar{\gamma}} 
        +\gamma^{2}A_{\ostar,\bgamma}(1-\mtt \gamma/2)^{k-1}k\int_{\Rd}\normn{\theta-\theta^{\star}}^{2}\mu\parentheseLigne{\rmd\theta}\eqsp,
  \end{align*}
  where $Q_{\ostar,\gamma}$ is defined in \eqref{eq:def:kernel_QLSDs} and
  \begin{align}
      \label{B_gamma_QLSDs}
      B_{\ostar,\bar{\gamma}} &= (2d\lip^2/\mtt)\pr{1/\mU+5\bar{\gamma}}\br{1+\bar{\gamma} \lip^2/(2\mU)+\bar{\gamma}^{2} \lip^2/12} \\
      \nonumber
      &+ 4 \lip d\bMH\max_{i\in[b]}\bbr{\omega_iN_i + (\omega_i+1)(N_i[1-p_i]/p_i + A_{n_i,N_i})}/\mtt^2 \\
      \nonumber
      A_{\ostar,\bgamma} &= \lip\bMH\max_{i\in[b]}\bbr{\omega_iN_i + (\omega_i+1)(N_i[1-p_i]/p_i + A_{n_i,N_i})}\eqsp,
  \end{align}
  $A_{n_i,N_i}$ being defined in \eqref{eq:defAnN} for any $i \in [b]$.

\end{theorem}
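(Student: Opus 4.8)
The strategy is to place \qlsds~inside the ``generalised \qlsd'' framework and then invoke \Cref{thm:bound:sec_continuous_vs_cv}. As already observed around \eqref{eq:def:recursion:tildetheta}, the \qlsds~recursion is of the form \eqref{eq:def:recursion:theta_bis} with stochastic field $\tgradD_i(\theta,(x^{(1)},x^{(2)})) = \up\bigl(\Fs_i(\theta,x^{(1)}),x^{(2)}\bigr)$ and $\tilde{\nu} = \nu_1\otimes\nu_2$, so it suffices to check \Cref{ass:tgrad:sharp} for this field and to identify the constants $\tilde{\MH},\tBs$. The key structural observation --- the whole reason variance reduction removes the heterogeneity and gradient-noise terms --- is that $\Fs_i(\thetas,x)=0$ for every $i\in[b]$ and every $x\in\wpNn$, directly from \eqref{eq:definition_Fi_star}; as we will see this forces \Cref{ass:tgrad:sharp}-\ref{ass:tgrad:sharp:1} to hold with $\tBs=0$, which is exactly what kills the $\omega\Bs+\sigmas^2$ contribution present in \Cref{cor:bound:sec_continuous_vs_cv}.

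First I would verify the unbiasedness condition \Cref{ass:tgrad:sharp}-\ref{ass:tgrad:sharp:2}. Since $\nu_1$ is uniform on $\wpNn$ we have $\nu_1(\{x:j\in x\})=n/N$, whence $\int_{\wpNn}\Fs_i(\theta,x)\,\nu_1(\rmd x)=\nabla U_i(\theta)-\nabla U_i(\thetas)$; combining this with \Cref{ass:compression}-\ref{ass:compression:unbiased} and summing over $i\in[b]$ gives $\sum_{i=1}^b\int \tgradD_i(\theta,x)\,\tilde{\nu}(\rmd x)=\nabla U(\theta)-\nabla U(\thetas)=\nabla U(\theta)$, the last step because $\thetas$ minimises $U$.

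Next, for the variance condition \Cref{ass:tgrad:sharp}-\ref{ass:tgrad:sharp:1} I would reuse the two–step split from the proof of \Cref{lem:intro:1}: conditionally on the minibatches the compression noises are independent and mean zero, so
\begin{align*}
&\int_{\tilde{\msx}^b}\Bigl\|\textstyle\sum_{i=1}^b \tgradD_i(\theta,x^{(i)})-\nabla U(\theta)\Bigr\|^2 \tilde{\nu}^{\otimes b}(\rmd x^{(1:b)})\\
&\qquad = \sum_{i=1}^b\int\bigl\|\up(\Fs_i(\theta,x^{(1)}),x^{(2)})-\Fs_i(\theta,x^{(1)})\bigr\|^2\,\tilde{\nu}(\rmd x)
+ \int_{\msx_1^b}\Bigl\|\textstyle\sum_{i=1}^b\Fs_i(\theta,x^{(1,i)})-\nabla U(\theta)\Bigr\|^2\,\nu_1^{\otimes b}(\rmd x^{(1,1:b)})\eqsp.
\end{align*}
The first sum is $\le\omega\sum_{i=1}^b\int\|\Fs_i(\theta,x^{(1)})\|^2\nu_1(\rmd x^{(1)})$ by \Cref{ass:compression}-\ref{ass:compression:variance}, and the second equals $\sum_{i=1}^b\var{\Fs_i(\theta,\cdot)}$ since the batches are independent across clients and $\sum_i\int\Fs_i(\theta,\cdot)\,\nu_1=\nabla U(\theta)$. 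Writing $g_{i,j}=\nabla U_{i,j}(\theta)-\nabla U_{i,j}(\thetas)$, a standard sampling-without-replacement computation yields $\int\|\Fs_i(\theta,\cdot)\|^2\nu_1\le \|\nabla U_i(\theta)-\nabla U_i(\thetas)\|^2+A_{n,N}\sum_{j=1}^N\|g_{i,j}\|^2$ and $\var{\Fs_i(\theta,\cdot)}\le A_{n,N}\sum_{j=1}^N\|g_{i,j}\|^2$, with $A_{n,N}$ the combinatorial constant of \eqref{eq:defAnN}. Then co-coercivity \Cref{ass:potential_Ui}-\ref{ass:potential_Ui_coco} gives $\sum_{j}\|g_{i,j}\|^2\le\bMH\sum_j\ps{g_{i,j}}{\theta-\thetas}=\bMH\ps{\nabla U_i(\theta)-\nabla U_i(\thetas)}{\theta-\thetas}$, while Jensen plus the same inequality gives $\|\nabla U_i(\theta)-\nabla U_i(\thetas)\|^2\le N\bMH\ps{\nabla U_i(\theta)-\nabla U_i(\thetas)}{\theta-\thetas}$. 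Summing over $i$ (every summand is nonnegative) and using $\sum_i[\nabla U_i(\theta)-\nabla U_i(\thetas)]=\nabla U(\theta)-\nabla U(\thetas)$ delivers \Cref{ass:tgrad:sharp}-\ref{ass:tgrad:sharp:1} with $\tBs=0$ and $\tilde{\MH}=[\,\omega N+(\omega+1)A_{n,N}\,]\bMH$, which one further bounds via \Cref{ass:potential_Ui} to reach the displayed $A_{\ostar,\bgamma}$.

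Finally, I would apply \Cref{thm:bound:sec_continuous_vs_cv} to this field with $\tBs=0$ and the above $\tilde{\MH}$: the kernel $Q_{\ostar,\gamma}$ of \eqref{eq:def:kernel_QLSDs} is exactly the kernel $\tilde Q_\gamma$ of that theorem for this choice, and substituting $\tBs=0$, $\tilde{\MH}$ into the expressions for $\tilde B_{\bgamma},\tilde A_{\bgamma}$ there produces $B_{\ostar,\bgamma}$ and $A_{\ostar,\bgamma}$. I expect the only genuinely delicate point to be the without-replacement variance estimate: pinning down the exact constant $A_{n,N}$ from drawing $n$ indices uniformly without replacement, and correctly tracking the powers of $N$ when passing from the per-observation gradients $\nabla U_{i,j}$ to $\nabla U_i$ through co-coercivity; everything else is the bookkeeping of \Cref{lem:intro:1} carried out with the heterogeneity term $\Bs$ and the gradient-noise term $\sigmas^2$ both identically zero.
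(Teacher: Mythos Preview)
Your proposal is correct and follows essentially the same route as the paper: you verify \Cref{ass:tgrad:sharp} for the \qlsds~field with $\tBs=0$ and $\tilde{\MH}=[N\omega+(\omega+1)A_{n,N}]\bMH$ (which is exactly the content of \Cref{lem:bound:sec_continuous_vs_fp:end}, including the same compression/minibatch variance split, the sampling-without-replacement bound of \Cref{lem:var_grad_sto}, and the co-coercivity step), and then invoke \Cref{thm:bound:sec_continuous_vs_cv}. The paper's proof is literally one sentence citing that lemma and that theorem; your outline unpacks precisely those ingredients.
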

\begin{proof}
Using \Cref{lem:bound:sec_continuous_vs_fp:end}, \Cref{ass:tgrad:sharp} is satisfied and applying \Cref{thm:bound:sec_continuous_vs_cv} completes the proof.
\end{proof}

\subsection{Supporting Lemmata}
\label{subsubsec:QLSDs_lemmata}

In this subsection, we derive two key lemmata in order to prove \Cref{cor:bound:sec_continuous_vs_fp:1}. 

\begin{lemma}\label{lem:var_grad_sto}
  For any $i \in [b]$ and any sequence $\{a_j\}_{j=1}^{N_i}\in\prn{\Rd}^{\otimes N_i}$ where $N_i \ge 2$, we have 
  \begin{equation*}
     \int_{\msx_1^{(i)}}\norm{\sum_{j=1}^{N_i}\br{\mathbf{1}_{x^{(1)}}(j)-\frac{n_i}{N_i}} a_{j}}^2\nu_1^{(i)}\prn{\rmd x^{(1)}}
     \le\frac{n_i(N_i-n_i)}{N_i(N_i-1)}\sum_{j=1}^{N_i} \norm{a_j}^2\eqsp.
  \end{equation*}
\end{lemma}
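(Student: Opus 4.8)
Write $B_j = \mathbf{1}_{x^{(1)}}(j)$, so that under $\nu_1$ (the uniform law on $\wpNn$) the vector $(B_1,\dots,B_N)$ is a uniformly random $0/1$ vector with exactly $n$ ones. The plan is to expand the squared norm as a bilinear form in the $a_j$'s whose coefficients are the (co)variances of the $B_j$'s, compute those covariances by elementary counting, and then discard a nonpositive term. Concretely, since $\int_{\msx_1} B_j \, \nu_1(\rmd x^{(1)}) = n/N$ for every $j$, we have
\begin{equation*}
  \int_{\msx_1}\norm{\sum_{j=1}^N \Bigl(B_j - \tfrac nN\Bigr) a_j}^2 \nu_1(\rmd x^{(1)})
  = \sum_{j=1}^N \sum_{k=1}^N c_{j,k}\,\ps{a_j}{a_k}\eqsp,
  \qquad c_{j,k} = \cov_{\nu_1}(B_j,B_k)\eqsp.
\end{equation*}

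First I would compute the coefficients. For the diagonal, $B_j$ is Bernoulli with mean $n/N$, so $c_{j,j} = \tfrac nN\bigl(1-\tfrac nN\bigr) = \tfrac{n(N-n)}{N^2}$. For $j\neq k$, the probability that both $j$ and $k$ lie in a uniformly chosen $n$-subset is $\binom{N-2}{n-2}/\binom{N}{n} = \tfrac{n(n-1)}{N(N-1)}$, hence
\begin{equation*}
  c_{j,k} = \frac{n(n-1)}{N(N-1)} - \frac{n^2}{N^2} = -\frac{n(N-n)}{N^2(N-1)}\eqsp.
\end{equation*}
This step — getting the sign and exact size of the off-diagonal covariance right — is the only place where care is needed; everything else is bookkeeping.

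Then I would substitute these values and use the identity $\sum_{j\neq k}\ps{a_j}{a_k} = \norm{\sum_{j=1}^N a_j}^2 - \sum_{j=1}^N\norm{a_j}^2$ to get
\begin{equation*}
  \sum_{j,k} c_{j,k}\ps{a_j}{a_k}
  = \frac{n(N-n)}{N^2}\sum_{j=1}^N\norm{a_j}^2 - \frac{n(N-n)}{N^2(N-1)}\Bigl(\norm{\textstyle\sum_{j=1}^N a_j}^2 - \sum_{j=1}^N\norm{a_j}^2\Bigr)\eqsp.
\end{equation*}
Collecting the $\sum_j\norm{a_j}^2$ terms gives the coefficient $\tfrac{n(N-n)}{N^2}\bigl(1 + \tfrac1{N-1}\bigr) = \tfrac{n(N-n)}{N(N-1)}$, while the remaining term $-\tfrac{n(N-n)}{N^2(N-1)}\norm{\sum_j a_j}^2$ is $\le 0$ (as $0\le n\le N$), so dropping it yields exactly the claimed bound. (Since the statement restricts to $N\ge 2$, no division by zero occurs; the case $n=N$ trivially gives $0$ on both sides.) I do not expect any genuine obstacle here.
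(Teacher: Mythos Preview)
Your proof is correct and follows essentially the same approach as the paper: expand the squared norm into diagonal and off-diagonal covariance terms of the sampling indicators, compute these covariances (the paper derives $\mathbb{E}[\mathbf{1}_{x^{(1)}}(j)\mathbf{1}_{x^{(1)}}(j')]$ from squaring the constraint $\sum_j \mathbf{1}_{x^{(1)}}(j)=n$ rather than from a binomial ratio, but arrives at the same value), and obtain the exact identity $\tfrac{n(N-n)}{N^2(N-1)}\bigl[N\sum_j\norm{a_j}^2 - \norm{\sum_j a_j}^2\bigr]$ before dropping the nonpositive term. No gaps.
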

\begin{proof}
  Let $i \in [b]$ and $X^{(1,i)}$ distributed according to $\nu_1^{(i)}$.
  Since $\sum_{j=1}^{N_i}\mathbf{1}_{X^{(1,i)}}(j) = n_i$, we have 
  \begin{equation*}
    \sum_{l=1}^{N_i}\mathbf{1}_{X^{(1,i)}}(l)+\sum_{j\ne j^\prime}\mathbf{1}_{X^{(1,i)}}(j)\mathbf{1}_{X^{(1,i)}}(j^\prime)=n_i^2\eqsp.
  \end{equation*}
  Integrating this equality over $\msx_1^{(i)}$ gives
  \begin{equation*}
    N_i \times\frac{n_i}{N_i}+N_i(N_i-1)\times\int_{\msx_1^{(i)}}\br{\mathbf{1}_{x^{(1,i)}}(1)\mathbf{1}_{x^{(1,i)}}(2)}\nu_1^{(i)}\prn{\rmd x^{(1,i)}}=n_i^2\eqsp.
  \end{equation*}
  Thus, we deduce that $\int_{\msx_1^{(i)}}\brn{\mathbf{1}_{x^{(1,i)}}(1)\mathbf{1}_{x^{(1,i)}}(2)}\nu_1^{(i)}\prn{\rmd x^{(1,i)}}=n_i(n_i-1)\br{N_i(N_i-1)}^{-1}$.
  In addition, using that
  \begin{equation*}
    \int_{\msx_1^{(i)}}\pr{\mathbf{1}_{x^{(1,i)}}(j)-\frac{n_i}{N_i}}\pr{\mathbf{1}_{x^{(1,i)}}(j^\prime)-\frac{n_i}{N_i}}\nu_1^{(i)}\prn{\rmd x^{(1,i)}}
    =\int_{\msx_1^{(i)}}\brn{\mathbf{1}_{x^{(1,i)}}(1)\mathbf{1}_{x^{(1,i)}}(2)}\nu_1^{(i)}\prn{\rmd x^{(1,i)}}
    -\frac{n_i^2}{N_i^2}\eqsp,
  \end{equation*}
  we obtain
  \begin{align*}
    \int_{\msx_1^{(i)}}{\norm{\sum_{j=1}^{N_i}\br{\mathbf{1}_{x^{(1,i)}}(j)-\frac{n_i}{N_i}} a_{j}}^2}\nu_1^{(i)}\prn{\rmd x^{(1,i)}}
    &=\frac{n_i(N_i-n_i)}{N_i^2}\br{\sum_{l=1}^{N_i} \norm{a_l}^2
    -\sum_{j\neq j^\prime}\frac{\ps{a_j}{a_{j^\prime}}}{N_i-1}}\\
    &=\frac{n_i(N_i-n_i)}{N_i^2(N_i-1)}\br{N_i\sum_{l=1}^{N_i} \norm{a_l}^2
    -\norm{\sum_{l=1}^{N_i} a_l}^2}\eqsp.
  \end{align*}
\end{proof}
For any $i\in [b]$, denote
\begin{equation}
  \label{eq:defAnN}
  A_{n_i,N_i} = \frac{N_i(N_i-n_i)}{n_i(N_i-1)}\eqsp.
\end{equation}
The next lemma aims at controlling the variance of the global stochastic gradient considered in \texttt{QLSD}$^{\star}$, required to apply \Cref{thm:bound:sec_continuous_vs_cv}.
\begin{lemma}\label{lem:bound:sec_continuous_vs_fp:end}
	Assume \Cref{ass:compression}, \Cref{ass:A_k_supp} and \Cref{ass:potential_fij}. 
  Then, for any $\theta\in\Rd$, we have
    \begin{multline*}
      \int_{\msx^{(1:b)}}\norm{\textstyle\sum_{i=1}^{b}\mathscr{S}_i\br{\up_i\pr{H_{i}^\star\prn{\theta,x^{(1,i)}},x^{(2,i)}},x^{(3,i)}}-\nabla\U\prn{\theta}}^{2}\otimes_{i=1}^b\nu^{(i)}\prn{\rmd x^{(i)}}\\
      \le \bMH\max_{i\in[b]}\bbr{\omega_iN_i + (\omega_i+1)(N_i[1-p_i]/p_i + A_{n_i,N_i})}\ps{\theta-\theta^{\star}}{\nabla\U\prn{\theta}-\nabla\U\prn{\theta^{\star}}}\eqsp,
    \end{multline*}
    where $\{H_i^\star\}_{i \in [b]}$ and $\{A_{n_i,N_i}\}_{i \in [b]}$ are defined in \eqref{eq:definition_Fi_star} and \eqref{eq:defAnN}, respectively.
    Hence \Cref{ass:tgrad:sharp} is satified with $\tBs=0$ and $\tMH = \bMH\max_{i\in[b]}\bbr{\omega_iN_i + (\omega_i+1)(N_i[1-p_i]/p_i + A_{n_i,N_i})}$.
\end{lemma}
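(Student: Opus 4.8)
The plan is to verify the two items of \Cref{ass:tgrad:sharp} for the family $\{\Fs_i\}_{i\in[b]}$ defined in \eqref{eq:definition_Fi_star}, with the constants $\tBs = 0$ and $\tMH = [N\omega+(\omega+1)A_{n,N}]\bMH$; the displayed inequality in the statement is exactly the variance condition \Cref{ass:tgrad:sharp}-\ref{ass:tgrad:sharp:1} with these constants. The unbiasedness condition \Cref{ass:tgrad:sharp}-\ref{ass:tgrad:sharp:2} is immediate: since $\nu_1$ is uniform on $\wpNn$ one has $\int_{\msx_1}\Fs_i(\theta,x^{(1)})\nu_1(\rmd x^{(1)}) = \nabla U_i(\theta) - \nabla U_i(\thetas)$, whence by \Cref{ass:compression}-\ref{ass:compression:unbiased} and Fubini $\sum_{i=1}^b\int_{\msx}\up(\Fs_i(\theta,x^{(1)}),x^{(2)})\,\nu(\rmd x) = \nabla U(\theta) - \nabla U(\thetas) = \nabla U(\theta)$, using $\nabla U(\thetas) = 0$.

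For the variance bound I would first split the error into a compression part and a subsampling part. Writing $\E$ for integration against $\nu^{\otimes b}$ and $Y_i = \up(\Fs_i(\theta,X^{(1,i)}),X^{(2,i)})$, condition first on $(X^{(1,1)},\dots,X^{(1,b)})$: by \Cref{ass:compression}-\ref{ass:compression:unbiased} the variables $Y_i - \Fs_i(\theta,X^{(1,i)})$ are conditionally centred, and by independence of the compression noises across clients they are conditionally uncorrelated, so
\begin{align*}
  \E\Bigl[\norm{\textstyle\sum_{i=1}^b Y_i - \nabla U(\theta)}^2\Bigr]
  &= \sum_{i=1}^b \E\bigl[\norm{Y_i - \Fs_i(\theta,X^{(1,i)})}^2\bigr] \\
  &\quad + \E\Bigl[\norm{\textstyle\sum_{i=1}^b \Fs_i(\theta,X^{(1,i)}) - \nabla U(\theta)}^2\Bigr].
\end{align*}
By \Cref{ass:compression}-\ref{ass:compression:variance} the first sum is at most $\omega\sum_{i=1}^b\int_{\msx_1}\norm{\Fs_i(\theta,x^{(1)})}^2\nu_1(\rmd x^{(1)})$; by independence of the $X^{(1,i)}$ across $i$ together with \Cref{ass:tgrad:sharp}-\ref{ass:tgrad:sharp:2}, the second term equals $\sum_{i=1}^b\operatorname{Var}_{\nu_1}(\Fs_i(\theta,\cdot))$.

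The key step is the within-client decomposition $\Fs_i(\theta,x^{(1)}) = \frac{N}{n}\sum_{j=1}^N [\mathbf{1}_{x^{(1)}}(j) - \frac{n}{N}]\,a_{i,j} + (\nabla U_i(\theta) - \nabla U_i(\thetas))$ with $a_{i,j} = \nabla U_{i,j}(\theta) - \nabla U_{i,j}(\thetas)$; the two summands are $\nu_1$-orthogonal because $\int_{\msx_1}[\mathbf{1}_{x^{(1)}}(j)-\frac{n}{N}]\nu_1(\rmd x^{(1)}) = 0$. Applying \Cref{lem:var_grad_sto} to the fluctuation term and simplifying $\frac{N^2}{n^2}\cdot\frac{n(N-n)}{N(N-1)} = A_{n,N}$ gives $\operatorname{Var}_{\nu_1}(\Fs_i(\theta,\cdot)) \le A_{n,N}\sum_j\norm{a_{i,j}}^2$ and $\int_{\msx_1}\norm{\Fs_i(\theta,x^{(1)})}^2\nu_1(\rmd x^{(1)}) \le A_{n,N}\sum_j\norm{a_{i,j}}^2 + \norm{\nabla U_i(\theta)-\nabla U_i(\thetas)}^2$. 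I would then invoke the co-coercivity \Cref{ass:potential_Ui}-\ref{ass:potential_Ui_coco}, summed over $j$, namely $\sum_j\norm{a_{i,j}}^2 \le \bMH\ps{\theta-\thetas}{\nabla U_i(\theta)-\nabla U_i(\thetas)}$, and combine it with Jensen's inequality $\norm{\sum_j a_{i,j}}^2 \le N\sum_j\norm{a_{i,j}}^2$ to obtain $\norm{\nabla U_i(\theta)-\nabla U_i(\thetas)}^2 \le N\bMH\ps{\theta-\thetas}{\nabla U_i(\theta)-\nabla U_i(\thetas)}$. Summing over $i\in[b]$ then yields $\sum_i\operatorname{Var}_{\nu_1}(\Fs_i) \le A_{n,N}\bMH\ps{\theta-\thetas}{\nabla U(\theta)-\nabla U(\thetas)}$ and $\omega\sum_i\int_{\msx_1}\norm{\Fs_i}^2\nu_1 \le \omega(N+A_{n,N})\bMH\ps{\theta-\thetas}{\nabla U(\theta)-\nabla U(\thetas)}$; adding the two reproduces the claimed constant $N\omega+(\omega+1)A_{n,N}$ times $\bMH$, which is \Cref{ass:tgrad:sharp}-\ref{ass:tgrad:sharp:1} with $\tBs = 0$.

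The anticipated obstacle is purely bookkeeping rather than conceptual: one must carry out the two nested orthogonal decompositions—compression versus subsampling, and then the fluctuation of $\Fs_i$ versus its $\nu_1$-mean—in the right conditioning order so that every cross term vanishes, and one must apply co-coercivity at the level of the individual gradients $\nabla U_{i,j}$ so as to correctly produce the factor $N$ multiplying the ``mean'' contribution (as opposed to the factor $A_{n,N}$ multiplying the ``variance'' contribution). Once \Cref{lem:var_grad_sto} is available there is no further difficulty, and the final line of the lemma ($\tBs=0$, $\tMH=[N\omega+(\omega+1)A_{n,N}]\bMH$) follows by reading off the constants.
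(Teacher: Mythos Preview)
Your proposal is correct and follows essentially the same approach as the paper: split the error into the compression noise and the subsampling variance, decompose each $\Fs_i$ orthogonally into its $\nu_1$-mean and fluctuation, bound the fluctuation via \Cref{lem:var_grad_sto}, and bound the mean via Jensen plus co-coercivity \Cref{ass:potential_Ui}-\ref{ass:potential_Ui_coco}. The only cosmetic difference is that the paper does not separately record the unbiasedness verification, and your mid-proof reference to ``\Cref{ass:tgrad:sharp}-\ref{ass:tgrad:sharp:2}'' for the variance splitting is slightly awkward since that is precisely what you have just established in the previous paragraph---but the logic is sound.
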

\begin{proof}
Let $\theta\in\Rd$, using \Cref{ass:compression} gives
\begin{align}
  \nonumber
	&\int_{\msx^{(1:b)}}\norm{\textstyle\sum_{i=1}^{b}\mathscr{S}_i\br{\up_i\pr{H_{i}^\star\prn{\theta,x^{(1,i)}},x^{(2,i)}},x^{(3,i)}}-\nabla\U\prn{\theta}}^{2}\otimes_{i=1}^b\nu^{(i)}\prn{\rmd x^{(i)}}\\
  \nonumber
  &=\int_{\msx^{(1:b)}}\norm{\textstyle\sum_{i=1}^{b}\mathscr{S}_i\br{\up_i\pr{H_{i}^\star\prn{\theta,x^{(1,i)}},x^{(2,i)}},x^{(3,i)}}-\up_i\pr{H_{i}^\star\prn{\theta,x^{(1,i)}},x^{(2,i)}}}^{2}\otimes_{i=1}^b\nu^{(i)}\prn{\rmd x^{(i)}} \\
  \nonumber
  &+\int_{\msx_1^{(1:b)} \times \msx_2^b}\norm{\textstyle\sum_{i=1}^{b}\up_i\pr{H_{i}^\star\prn{\theta,x^{(1,i)}},x^{(2,i)}}-\nabla\U\prn{\theta}}^{2}\nu_2^{\otimes b}\prn{\rmd x^{(2,1:b)}}\otimes_{i=1}^b\nu_1^{(i)}\prn{\rmd x^{(1,i)}} \\
  \nonumber
  &\le\sum_{i=1}^{b} \pr{\frac{1-p_i}{p_i}}(\omega_i + 1) \int_{\msx_1^{(i)}}\norm{\grad_{i}^\star\parentheseLigne{\theta,x^{\parentheseLigne{1,i}}}}^2\nu_1^{(i)}\parentheseLigne{\rmd x^{(1,i)}} \\
  \nonumber
  &+\int_{\msx_1^{(1:b)} \times \msx_2^b}\norm{\textstyle\sum_{i=1}^{b}\up_i\pr{H_{i}^\star\prn{\theta,x^{(1,i)}},x^{(2,i)}}-\nabla\U\prn{\theta}}^{2}\nu_2^{\otimes b}\prn{\rmd x^{(2,1:b)}}\otimes_{i=1}^b\nu_1^{(i)}\prn{\rmd x^{(1,i)}} \\
  \nonumber
  &\le\bar{\MH} \sum_{i=1}^{b} \pr{\frac{1-p_i}{p_i}}(\omega_i + 1)N_i\ps{\theta-\theta^{\star}}{\nabla\U_i\prn{\theta}-\nabla\U_i\prn{\theta^{\star}}} \\
  &+\int_{\msx_1^{(1:b)} \times \msx_2^b}\norm{\textstyle\sum_{i=1}^{b}\up_i\pr{H_{i}^\star\prn{\theta,x^{(1,i)}},x^{(2,i)}}-\nabla\U\prn{\theta}}^{2}\nu_2^{\otimes b}\prn{\rmd x^{(2,1:b)}}\otimes_{i=1}^b\nu_1^{(i)}\prn{\rmd x^{(1,i)}} \eqsp. \label{eq:max5}
\end{align}

Again using \Cref{ass:compression}, it follows that
\begin{align}
  \nonumber
  &\int_{\msx_1^{(1:b)} \times \msx_2^b}\norm{\textstyle\sum_{i=1}^{b}\up_i\pr{H_{i}^\star\prn{\theta,x^{(1,i)}},x^{(2,i)}}-\nabla\U\prn{\theta}}^{2}\nu_2^{\otimes b}\prn{\rmd x^{(2,1:b)}}\otimes_{i=1}^b\nu_1^{(i)}\prn{\rmd x^{(1,i)}} \\
  \nonumber
  &=\int_{\msx_1^{(1:b)} \times \msx_2^b}\Bigg\|\sum_{i=1}^{b}\up_i\pr{\frac{N_i}{n_i}\sum_{j=1}^{N_i}\mathbf{1}_{x^{(1,i)}}(j)\br{\nabla\U_{i,j}\prn{\theta}-\nabla\U_{i,j}(\theta^{\star})}, x^{(2,i)}}\\
  \nonumber
  &-\sum_{i=1}^{b}\frac{N_i}{n_i}\sum_{j=1}^{N_i}\mathbf{1}_{x^{(1,i)}}(j)\br{\nabla\U_{i,j}\prn{\theta}-\nabla\U_{i,j}(\theta^{\star})}\Bigg\|^2  \\
  \nonumber
  &+ \int_{\msx_1^{(1:b)}}\norm{\sum_{i=1}^{b}\frac{N_i}{n_i}\sum_{j=1}^{N_i}\pr{\mathbf{1}_{x^{(1,i)}}(j)-\frac{n_i}{N_i}}\br{\nabla\U_{i,j}\prn{\theta}-\nabla\U_{i,j}(\theta^{\star})}}^2\otimes_{i=1}^b\nu_1^{(i)}\prn{\rmd x^{(1,i)}}\\
  \nonumber
  &\le \sum_{i=1}^{b}\omega_i\pr{\frac{N_i}{n_i}}^2\int_{\msx_{1}^{(i)}}\norm{\sum_{j=1}^{N_i}\mathbf{1}_{x^{(1,i)}}(j)\br{\nabla\U_{i,j}\prn{\theta}-\nabla\U_{i,j}(\theta^{\star})}}^2\nu_{1}^{(i)}\prn{\rmd x^{(1,i)}}\\
  \nonumber
  &+\sum_{i=1}^{b}\pr{\frac{N_i}{n_i}}^2\int_{\msx_{1}^{(i)}}\norm{\sum_{j=1}^{N}\pr{\mathbf{1}_{x^{(1,i)}}(j)-\frac{n_i}{N_i}}\br{\nabla\U_{i,j}\prn{\theta}-\nabla\U_{i,j}(\theta^{\star})}}^2\nu_{1}^{(i)}\prn{\rmd x^{(1,i)}}\\
  \nonumber
  &=\sum_{i=1}^{b}\omega_i\norm{\nabla\U_{i}\prn{\theta}-\nabla\U_{i}(\theta^{\star})}^2 \\
  &+ \sum_{i=1}^{b}\prn{\omega_i+1}\pr{\frac{N_i}{n_i}}^2\int_{\msx_{1}^{(i)}}\norm{\sum_{j=1}^{N_i}\pr{\mathbf{1}_{x^{(1,i)}}(j)-\frac{n_i}{N_i}}\br{\nabla\U_{i,j}\prn{\theta}-\nabla\U_{i,j}(\theta^{\star})}}^2\nu_{1}^{(i)}\prn{\rmd x^{(1,i)}}\eqsp. \label{eq:bound:fp:1}
\end{align}

Using \Cref{lem:var_grad_sto} combined with \Cref{ass:potential_fij} yields, for any $i \in [b]$,
\begin{multline}\label{eq:bound:fp:3}
	\int_{\msx_{1}^{(i)}}\norm{\textstyle\sum_{j=1}^{N_i}\pr{\mathbf{1}_{x^{(1,i)}}(j)-n_i/N_i}\br{\nabla\U_{i,j}\prn{\theta}-\nabla\U_{i,j}(\theta^{\star})}}^2\nu_{1}^{(i)}\prn{\rmd x^{(1,i)}}\\
	\le \frac{n_i(N_i-n_i)}{N_i(N_i-1)}\bMH\ps{\theta-\theta^{\star}}{\nabla\U_{i}\prn{\theta}-\nabla\U_{i}(\theta^{\star})}\eqsp.
\end{multline}
In addition, Jensen inequality implies, for any $i\in[b]$, that 
$$
\normn{\nabla\U_{i}\prn{\theta}-\nabla\U_{i}\prn{\theta^{\star}}}^2\le N_i\sum_{j=1}^{N_i}\\\norm{\nabla\U_{i,j}\prn{\theta}-\nabla\U_{i,j}\prn{\theta^{\star}}}^2\eqsp,
$$
and therefore, using \Cref{ass:potential_fij}, we have for any $i \in [b]$,
\begin{equation}\label{eq:bound:potential_coco}
	\normn{\nabla\U_{i}\prn{\theta}-\nabla\U_{i}\prn{\theta^{\star}}}^2
	\le \bMH N_i\ps{\nabla\U_{i}\prn{\theta}-\nabla\U_{i}\prn{\theta^{\star}}}{\theta-\theta^{\star}}\eqsp.
\end{equation}
Injecting \eqref{eq:bound:fp:3} and \eqref{eq:bound:potential_coco} into \eqref{eq:bound:fp:1} and using \eqref{eq:max5} conclude the proof.
\end{proof}


\section{PROOF OF \Cref{theorem_QLSDpp}}


\subsection{Problem formulation.}

We assume here that $U$ is still of the form \eqref{eq:target_density}  and that there exist $\{N_i \in \N^*\}_{i \in [b]}$ such that for any $i\in[b]$, there exist $N_i$ functions $\acn{U_{i,j}:\theta\in\Rd\to\R}_{j\in[N_i]}$ such that for any $\theta\in\Rd$,
\begin{equation*}\label{eq:def:Uij}
	U_{i}\prn{\theta}=\sum_{j=1}^{N_i}U_{i,j}\prn{\theta}\eqsp.
\end{equation*}
In all this section, we assume for any $i \in [b]$ that $n_i \in \nsets$, $n_i \le N_i$ is fixed. 
Recall that  $\wpN$ denotes the power set of $[N]$ and
\begin{equation*}
  \label{eq:2}
  \wpNn = \{ x \in \wpN \, :\,  \card(x) = n\} \eqsp. 
\end{equation*}
In addition, we set in this section $\nu_1^{(i)}$ as the uniform distribution on $\wp_{N_i,n_i}$.
We consider the family of measurable functions $\acn{G_{i}:\Rd \times \Rd\times \wp_{N_i}\to\Rd}_{i\in[b]}$, defined for any $i \in [b]$, $\theta\in\Rd$, $\zeta\in\Rd$, $x \in \wp_{N_i,n_i}$ by
\begin{equation}
	\label{eq:definition_Gi_QLSDpp}	G_{i}\prn{\theta,\zeta;x}=\frac{N_i}{n_i}\sum_{j=1}^{N_i}\mathbf{1}_{x}(j)\br{\nabla\U_{i,j}(\theta)-\nabla\U_{i,j}(\zeta)} + \nabla\U_{i}(\zeta)\eqsp.
\end{equation}
For ease of reading, we formalise more precisely  the recursion associated with \texttt{QLSD}$^{++}$ under \Cref{ass:compression}. 
Let $(X_k^{(1,1)},\ldots,X_k^{(1,b)})_{k \in\nsets}$ and $(X_k^{(2,1)},\ldots,X_k^{(2,b)})_{k \in\nsets}$  be  two independent \iid~sequences with distribution $\otimes_{i=1}^b\nu_1^{(i)}$ and $\nu_2^{\otimes b}$. Let $(Z_k)_{k\in\nsets}$ be an \iid~sequence of $d$-dimensional standard Gaussian random variables independent of $(X_k^{(1,1)},\ldots,X_k^{(1,b)})_{k \in\nsets}$ and $(X_k^{(2,1)},\ldots,X_k^{(2,b)})_{k \in\nsets}$. 
Similarly as before, we consider the partial device participation context where at each communication round $k\ge1$, each client has a probability $p_i \in (0,1]$ of participating, independently from other clients.
In other words, there exists a sequence $(X_k^{(3,1)},\cdots,X_k^{(3,b)})_{k\in \mathbb{N}^*}$ of i.i.d.  random variables distributed according $\nu_3 = \mathrm{Uniform}((0,1])$, such that for any $k\ge1$ and $i\in[b]$, client $i$ is active at step $k$ if $X_k^{(3,i)} \le p_i$.
We denote $\mathcal{A}_{k+1} = \{i \in [b]; X_{k+1}^{(3,i)} \le p_i\}$ the set of active clients at round $k$. 
For ease of notation, denote for any $k \in\nsets$, $X_{k}^{(1)} = (X_k^{(1,1)},\ldots,X_k^{(1,b)})$, $X_k^{(2)}= (X_k^{(2,1)},\ldots,X_k^{(2,b)})$, $X_k^{(3)}= (X_k^{(3,1)},\ldots,X_k^{(3,b)})$ and $X_k = (X_k^{(1)},X_k^{(2)},X_k^{(3)})$. 
Let $l \in \mathbb{N}^*$, $\gamma \in (0,\bar{\gamma}]$ and $\alpha \in (0,\bar{\alpha}]$ for $\bar{\gamma},\bar{\alpha} > 0$.
Given $\Theta_0 = (\theta_0,\zeta_0,\{\eta_0^{(i)}\}_{i \in [b]}) \in \rset^d \times \rset^d \times \rset^{db}$, with $\zeta_0 = \theta_0$, we recursively define the sequence $(\Theta_k)_{k \in \mathbb{N}} = (\theta_k,\zeta_k,\{\eta_k^{(i)}\}_{i \in [b]})_{k \in \mathbb{N}}$, for any $k \in \mathbb{N}$ as
\begin{equation}
	\label{eq:recursion_theta_QLSDpp}
	\theta_{k+1} = \theta_{k} - \gamma \tilde{G}(\Theta_{k};X_{k+1}) + \sqrt{2\gamma}Z_{k+1}\eqsp,
\end{equation}
where 
\begin{equation}
	\label{eq:QLSDpp_gradsto}
	\tilde{G}(\Theta_{k};X_{k+1}) = \sum_{i=1}^b\br{\mathscr{S}_i\pr{\mathscr{C}_i\left\{G_{i}\pr{\theta_{k},\zeta_{k};X_{k+1}^{(1,i)}} - \eta^{(i)}_{k};X_{k+1}^{(2,i)}\right\},X_{k+1}^{(3,i)}} + \eta^{(i)}_{k}},
\end{equation}
\begin{equation}
	\zeta_{k+1} = \begin{cases}
					\theta_{k+1} \eqsp, \text{ if } k+1 \equiv 0 \pmod l \eqsp,\\
					\zeta_{k}\eqsp , \text{ otherwise}\eqsp,\label{eq:QLSDpp_zeta}
				\end{cases}
\end{equation}
and for any $i \in [b]$,
\begin{equation}
	\label{eq:recursion_eta_QLSDpp}
	\eta^{(i)}_{k+1} = \eta^{(i)}_{k} + \alpha \mathscr{S}_i\pr{\mathscr{C}_i\left\{G_{i}\pr{\theta_{k},\zeta_{k};X_{k+1}^{(1,i)}} - \eta^{(i)}_{k};X_{k+1}^{(2,i)}\right\},X_{k+1}^{(3,i)}} \eqsp.
\end{equation}
Since \texttt{QLSD}$^{++}$ involves auxiliary variables gathered with $(\theta_k)_{k\in\nset}$ in $(\Theta_k)_{k \in \N}$, we cannot follow the same proof as for \texttt{QLSD}$^\star$ by verifying \Cref{ass:tgrad:sharp} and then applying  \Cref{thm:bound:sec_continuous_vs_cv}. 
Instead, we will adapt the proof \Cref{thm:bound:sec_continuous_vs_cv} and in particular \Cref{lem:bound:thetak_minustheta_star}  and bound the variance associated to the stochastic gradient defined in \eqref{eq:QLSDpp_gradsto}.
Once this variance term will be tackled, the proof of \Cref{theorem_QLSDpp} will follow the same lines as the proof of \Cref{thm:bound:sec_continuous_vs_cv} upon using specific moment estimates for \qlsdpp. In the next section, we focus on these two goals: we provide uniform bounds in the number of iterations $k$  on the variance of the sequence of stochastic gradients associated with \qlsdpp, $(\PE[\normLigne{\tilde{G}_i(\Theta_k,X_{k+1}) - \nabla U(\theta_k)}^2])_{k \in\nset}$ for any $i \in [b]$, and $(\PE[\normLigne{\theta_k-\thetas}^2])_{k \in\nset}$, see \Cref{proposition:varianceQLSDpp_final} and \Cref{coro:lyapunov}. To this end, a key ingredient is the design of an appropriate Lyapunov function defined in \eqref{eq:lyapunov}. 

\subsection{Uniform bounds on the stochastic gradients and moment estimates for \texttt{QLSD}$^{++}$}
Consider the filtration associated with $(\Theta_k)_{k\in\nset}$ defined by $\mcg_0 = \sigma(\Theta_0)$ and for $k \in\nsets$, 
\begin{equation*}
	\label{filtration:QLSDpp}
	\mcg_{k} = \sigma (\Theta_0, (X_{\tilde{k}})_{\tilde{k} \le k}, (Z_{\tilde{k}})_{\tilde{k} \le k})\eqsp.
\end{equation*}
We denote for any $i \in [b]$, $\theta, \zeta \in \Rd$, 
	\begin{equation}
		\label{eq:defDelta_i}
		\Delta_i(\theta,\zeta) = \nabla U_i(\theta) - \nabla U_i(\zeta)\eqsp.
	\end{equation}
	Similarly, we consider, for any $i \in [b]$, $j \in [N]$, $\theta, \zeta \in \Rd$,
	\begin{equation}
		\label{eq:defDelta_ij}
		\Delta_{i,j}(\theta,\zeta) = \nabla U_{i,j}(\theta) - \nabla U_{i,j}(\zeta)\eqsp.
	\end{equation}
	The following lemma provides a first upper bound on the variance of the stochastic gradients used in \qlsdpp.
\begin{lemma}
	\label{lemma:varianceQLSDpp}
	Assume \Cref{ass:potential_U}, \Cref{ass:compression}, \Cref{ass:A_k_supp} and \Cref{ass:potential_fij} and let $\gamma \in (0,\bar{\gamma}]$, $\alpha \in (0,\bar{\alpha}]$ for some $\bar{\gamma},\bar{\alpha} > 0$.
	Then, for any $s \in \nset$, $r \in \{0,\ldots,l-1\}$, we have
	\begin{align*}
		&\mathbb{E}^{\mathcal{G}_{sl+r}}\br{\norm{\tilde{G}(\Theta_{sl+r};X_{sl+r+1}) - \nabla U(\theta_{sl+r})}^2} \le \br{2\sum_{i=1}^b \frac{\mathtt{M}_i^2}{p_i}(\omega_i+1-p_i) + \pr{\frac{\omega_i+1}{p_i}}A_{n_i,N_i}\barM \mathtt{M}_i}\norm{\theta_{sl+r} - \theta^\star}^2 \\ 
		&+ \br{2\sum_{i=1}^b(\omega_i+1-p_i)/p_i} \norm{\nabla U_{i}(\theta^\star) - \eta^{(i)}_{sl+r}}^2 
		+ 2 \barM \sum_{i=1}^b\br{\pr{\frac{\omega_i+1}{p_i}}A_{n_i,N_i}\mathtt{M}_i}\norm{\theta_{sl} - \theta^\star}^2\eqsp,
	\end{align*}
	where $(\Theta_{\tilde{k}})_{\tilde{k} \in\nset} = (\theta_{\tilde{k}},\zeta_{\tilde{k}},\{\eta^{(i)}_{\tilde{k}}\}_{i \in [b]})_{\tilde{k} \in\nset}$, $\tilde{G}$ and $A_{n,N}$ are defined in \eqref{eq:recursion_theta_QLSDpp}, \eqref{eq:QLSDpp_zeta}, \eqref{eq:recursion_eta_QLSDpp}, \eqref{eq:QLSDpp_gradsto} and \eqref{eq:defAnN}, respectively.

\end{lemma}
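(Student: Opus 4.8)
The plan is to condition on $\mathcal{G}_{sl+r}$, abbreviate $k=sl+r$, and use that $\theta_k$, $\zeta_k$ and $\{\eta^{(i)}_k\}_{i\in[b]}$ are $\mathcal{G}_k$-measurable while $X_{k+1}=(X_{k+1}^{(1)},X_{k+1}^{(2)})$ is independent of $\mathcal{G}_k$. First I would record that, by \eqref{eq:QLSDpp_zeta}, the control variate is refreshed only at multiples of $l$ and frozen in between, so $\zeta_{sl+r}=\theta_{sl}$ for every $r\in\{0,\dots,l-1\}$. Writing $H_i=G_i(\theta_k,\zeta_k;X_{k+1}^{(1,i)})$ and noting from \eqref{eq:definition_Gi_QLSDpp} that $\E^{\mathcal{G}_k}[H_i]=\nabla U_i(\theta_k)$ (since $\E[(N/n)\mathbf{1}_{X_{k+1}^{(1,i)}}(j)]=1$ and $\nabla U_i=\sum_{j=1}^N\nabla U_{i,j}$), I decompose the error into compression noise and subsampling noise:
\[
\tilde{G}(\Theta_k;X_{k+1})-\nabla U(\theta_k)=\sum_{i=1}^b A_i+\sum_{i=1}^b B_i,\qquad A_i=\mathscr{C}\big(H_i-\eta^{(i)}_k;X_{k+1}^{(2,i)}\big)-\big(H_i-\eta^{(i)}_k\big),\quad B_i=H_i-\nabla U_i(\theta_k).
\]

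The first step is to kill all cross-terms. Conditioning further on $\mathcal{H}=\sigma\big(\mathcal{G}_k,(X_{k+1}^{(1,i)})_{i\in[b]}\big)$, each $A_i$ has $\mathcal{H}$-conditional mean zero by \Cref{ass:compression}-\ref{ass:compression:unbiased} (the compression acts on the $\mathcal{H}$-measurable vector $H_i-\eta^{(i)}_k$ through $X_{k+1}^{(2,i)}$, independent of $\mathcal{H}$), each $B_i$ is $\mathcal{H}$-measurable, and $A_1,\dots,A_b$ are $\mathcal{H}$-conditionally independent because $(X_{k+1}^{(2,i)})_{i\in[b]}$ are independent; hence $\E^{\mathcal{G}_k}[\ps{A_i}{B_j}]=0$ for all $i,j$ and $\E^{\mathcal{G}_k}[\ps{A_i}{A_j}]=0$ for $i\neq j$. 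Since $B_1,\dots,B_b$ are $\mathcal{G}_k$-conditionally independent and centred (as $(X_{k+1}^{(1,i)})_{i\in[b]}\sim\nu_1^{\otimes b}$), also $\E^{\mathcal{G}_k}[\ps{B_i}{B_j}]=0$ for $i\neq j$, so that
\[
\E^{\mathcal{G}_k}\big[\norm{\tilde{G}(\Theta_k;X_{k+1})-\nabla U(\theta_k)}^2\big]=\sum_{i=1}^b\E^{\mathcal{G}_k}[\norm{A_i}^2]+\sum_{i=1}^b\E^{\mathcal{G}_k}[\norm{B_i}^2].
\]

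Next I would bound the two sums. For the compression term, \Cref{ass:compression}-\ref{ass:compression:variance} gives $\E^{\mathcal{H}}[\norm{A_i}^2]\le\omega\norm{H_i-\eta^{(i)}_k}^2$; since $H_i-\eta^{(i)}_k=B_i+(\nabla U_i(\theta_k)-\eta^{(i)}_k)$ with $\E^{\mathcal{G}_k}[B_i]=0$, taking $\E^{\mathcal{G}_k}$ yields $\E^{\mathcal{G}_k}[\norm{A_i}^2]\le\omega\big(\E^{\mathcal{G}_k}[\norm{B_i}^2]+\norm{\nabla U_i(\theta_k)-\eta^{(i)}_k}^2\big)$, and $\norm{a+b}^2\le2\norm{a}^2+2\norm{b}^2$ together with the $\MH$-Lipschitz continuity of $\nabla U_i$ (from \Cref{ass:potential_Ui}(i) and Cauchy--Schwarz) bounds $\norm{\nabla U_i(\theta_k)-\eta^{(i)}_k}^2$ by $2\MH^2\norm{\theta_k-\thetas}^2+2\norm{\nabla U_i(\thetas)-\eta^{(i)}_k}^2$. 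For the subsampling term, \eqref{eq:definition_Gi_QLSDpp} and $\nabla U_i=\sum_j\nabla U_{i,j}$ give $B_i=(N/n)\sum_{j=1}^N\big(\mathbf{1}_{X_{k+1}^{(1,i)}}(j)-n/N\big)\Delta_{i,j}(\theta_k,\zeta_k)$, so \Cref{lem:var_grad_sto} yields $\E^{\mathcal{G}_k}[\norm{B_i}^2]\le A_{n,N}\sum_{j=1}^N\norm{\Delta_{i,j}(\theta_k,\zeta_k)}^2$; summing the cocoercivity bound \Cref{ass:potential_Ui}-\ref{ass:potential_Ui_coco} over $j$ and then over $i$ gives $\sum_i\sum_j\norm{\Delta_{i,j}(\theta_k,\zeta_k)}^2\le\barM\ps{\nabla U(\theta_k)-\nabla U(\zeta_k)}{\theta_k-\zeta_k}\le\barM\lip\norm{\theta_k-\zeta_k}^2\le 2\barM\lip\big(\norm{\theta_k-\thetas}^2+\norm{\zeta_k-\thetas}^2\big)$, the middle inequality combining Cauchy--Schwarz with the $\lip$-Lipschitz continuity of $\nabla U$ from \Cref{ass:potential_U}-\ref{ass:potential_U:2}. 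Assembling $\sum_i\E^{\mathcal{G}_k}[\norm{A_i}^2]+\sum_i\E^{\mathcal{G}_k}[\norm{B_i}^2]=(\omega+1)\sum_i\E^{\mathcal{G}_k}[\norm{B_i}^2]+\omega\sum_i\norm{\nabla U_i(\theta_k)-\eta^{(i)}_k}^2$ and substituting $\zeta_k=\theta_{sl}$, $k=sl+r$, produces exactly the claimed bound.

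The main obstacle is organising the two-level conditioning so that all cross-terms vanish — isolating the compression randomness $(X_{k+1}^{(2,i)})_{i\in[b]}$ from the subsampling randomness $(X_{k+1}^{(1,i)})_{i\in[b]}$ via the intermediate $\sigma$-algebra $\mathcal{H}$ — and the bookkeeping needed to re-centre the deviations at $\thetas$ and $\theta_{sl}$ rather than at $\theta_k$ and $\zeta_k$; after that, the remaining estimates are routine applications of \Cref{lem:var_grad_sto}, cocoercivity and Young's inequality.
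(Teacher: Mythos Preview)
Your proof is correct and follows essentially the same route as the paper's: the same decomposition into compression noise $A_i$ and subsampling noise $B_i$, the same use of \Cref{ass:compression} to get $(\omega+1)\sum_i\E^{\mathcal{G}_k}[\norm{B_i}^2]+\omega\sum_i\norm{\nabla U_i(\theta_k)-\eta^{(i)}_k}^2$, the same application of \Cref{lem:var_grad_sto} together with cocoercivity \Cref{ass:potential_Ui}-\ref{ass:potential_Ui_coco} for the $B_i$-term, and the same re-centring at $\thetas$ via Young's inequality and the $\MH$-Lipschitz bound. The only difference is presentational: you make the two-level conditioning on $\mathcal{H}=\sigma(\mathcal{G}_k,(X_{k+1}^{(1,i)})_{i\in[b]})$ explicit to justify the vanishing of cross-terms, whereas the paper compresses this into a single displayed inequality.
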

\begin{proof}
	Let $s \in \N$ and $r \in \{0,\ldots,l-1\}$. 
	Using \Cref{ass:compression}, \eqref{eq:defDelta_i} and \eqref{eq:defDelta_ij}, we have
	\begin{align*}
		&\mathbb{E}^{\mathcal{G}_{sl+r}}\br{\norm{\tilde{G}(\Theta_{sl+r};X_{sl+r+1}) - \nabla U(\theta_{sl+r})}^2} \\
		&= \sum_{i=1}^b\mathbb{E}^{\mathcal{G}_{sl+r}}\big[\big\|\mathscr{S}_i\pr{\mathscr{C}_i\left\{G_{i}\pr{\theta_{sl+r},\zeta_{sl+r};X_{sl+r+1}^{(1,i)}} - \eta^{(i)}_{sl+r};X_{sl+r+1}^{(2,i)}\right\},X_{sl+r+1}^{(3,i)}} \\
		&- \mathscr{C}_i\left\{G_{i}\pr{\theta_{sl+r},\zeta_{sl+r};X_{sl+r+1}^{(1,i)}} - \eta^{(i)}_{sl+r};X_{sl+r+1}^{(2,i)}\right\}\big\|^2\big] \\
		&+ \sum_{i=1}^b\mathbb{E}^{\mathcal{G}_{sl+r}}\br{\norm{\mathscr{C}_i\left\{G_{i}\pr{\theta_{sl+r},\zeta_{sl+r};X_{sl+r+1}^{(1,i)}} - \eta^{(i)}_{sl+r};X_{sl+r+1}^{(2,i)}\right\} + \eta_{sl+r}^{(i)} - \nabla U_i(\theta_{sl+r})}^2} \\
		&\le \sum_{i=1}^b\pr{\frac{1-p_i}{p_i}}\mathbb{E}^{\mathcal{G}_{sl+r}}\br{\norm{\mathscr{C}_i\left\{G_{i}\pr{\theta_{sl+r},\zeta_{sl+r};X_{sl+r+1}^{(1,i)}} - \eta^{(i)}_{sl+r};X_{sl+r+1}^{(2,i)}\right\}}^2} \\
		&+ \sum_{i=1}^b\omega_i\mathbb{E}^{\mathcal{G}_{sl+r}}\br{\norm{G_{i}\pr{\theta_{sl+r},\zeta_{sl+r};X_{sl+r+1}^{(1,i)}}-\eta_{sl+r}^{(i)}}}
		+ \sum_{i=1}^b\mathbb{E}^{\mathcal{G}_{sl+r}}\br{\norm{G_{i}\pr{\theta_{sl+r},\zeta_{k};X_{sl+r+1}^{(1,i)}} - \nabla U_i(\theta_{sl+r})}^2} \\
		&\le \sum_{i=1}^b\pr{\frac{\omega_i+1-p_i}{p_i}}\mathbb{E}^{\mathcal{G}_{sl+r}}\br{\norm{G_{i}\pr{\theta_{sl+r},\zeta_{sl+r};X_{sl+r+1}^{(1,i)}} - \eta^{(i)}_{sl+r}}^2} \\
		&+ \sum_{i=1}^b\mathbb{E}^{\mathcal{G}_{sl+r}}\br{\norm{G_{i}\pr{\theta_{sl+r},\zeta_{sl+r};X_{sl+r+1}^{(1,i)}} - \nabla U_i(\theta_{sl+r})}^2} \\
		&\le \sum_{i=1}^b\pr{\frac{\omega_i+1}{p_i}}\mathbb{E}^{\mathcal{G}_{sl+r}}\br{\norm{\frac{N_i}{n_i}\sum_{j=1}^{N_i}\bbr{\mathbf{1}_{X_{sl+r+1}^{(1,i)}}(j) \Delta_{i,j}(\theta_{sl+r},\zeta_{sl+r})} - \Delta_{i}(\theta_{sl+r},\zeta_{sl+r})}^2} \\
		&+ \sum_{i=1}^b\pr{\frac{\omega_i+1-p_i}{p_i}}\mathbb{E}^{\mathcal{G}_{sl+r}}\br{\norm{\nabla U_i(\theta_{sl+r}) - \eta_{sl+r}^{(i)}}^2} \\
		&\le \sum_{i=1}^b\pr{\frac{\omega_i+1}{p_i}}\frac{N_i(N_i-n_i)}{n_i(N_i-1)}\barM\langle\theta_{sl+r}-\zeta_{sl+r},\nabla U_i(\theta_{sl+r}) - \nabla U_i(\zeta_{sl+r})\rangle \\
		&+ \sum_{i=1}^b\pr{\frac{\omega_i+1-p_i}{p_i}}\mathbb{E}^{\mathcal{G}_{sl+r}}\br{\norm{\nabla U_i(\theta_{sl+r}) - \eta_{sl+r}^{(i)}}^2} \eqsp,
	\end{align*}
	where the last line follows from \Cref{ass:potential_fij} and \Cref{lem:var_grad_sto}.
	The proof is concluded by using the Cauchy-Schwarz inequality, \Cref{ass:potential_U} and $\zeta_{sl+r} = \theta_{sl}$.
\end{proof}
The two following lemmas aim at controlling the terms that appear in \Cref{lemma:varianceQLSDpp}.


\begin{lemma}
	\label{lemma:QLSDpp_theta_thetastar}
	Assume \Cref{ass:potential_U}, \Cref{ass:compression}, \Cref{ass:A_k_supp} and \Cref{ass:potential_fij}, and let $\gamma \in (0,\bar{\gamma}]$, $\alpha \in (0,\bar{\alpha}]$ for some $\bar{\gamma},\bar{\alpha} > 0$.
	Then, for any $s \in \nset$ and $r \in [l]$, we have
	\begin{align*}
		\mathbb{E}^{\mathcal{G}_{sl+r-1}}&\br{\norm{\theta_{sl+r}-\theta^{\star}}^2}
		\le \pr{1 -2\gamma m +  \gamma^2B_{\mathbf{n},\mathbf{N}}}\norm{\theta_{sl+r-1}-\theta^\star}^2 \\
		&+ \gamma^2\br{2\sum_{i=1}^b(\omega_i+1-p_i)/p_i} \norm{\nabla U_{i}(\theta^\star) - \eta^{(i)}_{sl+r-1}}^2 
		+ 2 \barM \gamma^2 \sum_{i=1}^b\br{\pr{\frac{\omega_i+1}{p_i}}A_{n_i,N_i}\mathtt{M}_i}\norm{\theta_{sl} - \theta^\star}^2 + 2\gamma d\eqsp,
	\end{align*}
	where
	\begin{equation}
		\label{eq:def_B_n_N}
		B_{\mathbf{n},\mathbf{N}}= 2\sum_{i=1}^b \bbr{\frac{\mathtt{M}_i^2}{p_i}(\omega_i+1-p_i) + \pr{\frac{\omega_i+1}{p_i}}A_{n_i,N_i}\barM \mathtt{M}_i} + \lip^2 \eqsp,
	\end{equation}
	$(\Theta_{\tilde{k}})_{\tilde{k} \in\nset} = (\theta_{\tilde{k}},\zeta_{\tilde{k}},\{\eta^i_{\tilde{k}}\}_{i \in [b]})_{\tilde{k} \in\nset}$ and $A_{n,N}$ are defined in \eqref{eq:recursion_theta_QLSDpp}, \eqref{eq:QLSDpp_zeta}, \eqref{eq:recursion_eta_QLSDpp} and \eqref{eq:defAnN} respectively.
\end{lemma}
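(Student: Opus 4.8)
The plan is to expand the squared norm coming from the recursion \eqref{eq:recursion_theta_QLSDpp} and take the conditional expectation with respect to $\mathcal{G}_{sl+r-1}$. Write $\theta_{sl+r}-\thetas = (\theta_{sl+r-1}-\thetas) - \gamma\tilde{G}(\Theta_{sl+r-1};X_{sl+r}) + \sqrt{2\gamma}Z_{sl+r}$; since $Z_{sl+r}$ is centred, independent of $\mathcal{G}_{sl+r-1}$ with $\E[\|Z_{sl+r}\|^2]=d$, the Gaussian cross term vanishes and one is left with
$$\E^{\mathcal{G}_{sl+r-1}}\br{\|\theta_{sl+r}-\thetas\|^2} = \|\theta_{sl+r-1}-\thetas\|^2 - 2\gamma\ps{\theta_{sl+r-1}-\thetas}{\E^{\mathcal{G}_{sl+r-1}}[\tilde{G}]} + \gamma^2\E^{\mathcal{G}_{sl+r-1}}\br{\|\tilde{G}\|^2} + 2\gamma d\eqsp.$$
The key preliminary observation is that $\tilde{G}(\Theta_{sl+r-1};X_{sl+r})$ defined in \eqref{eq:QLSDpp_gradsto} is conditionally unbiased for $\nabla U(\theta_{sl+r-1})$: by \Cref{ass:compression}-\ref{ass:compression:unbiased} the compression step disappears in expectation, and since $X^{(1,i)}_{sl+r}$ is uniform on $\wpNn$ we have $\E[\mathbf{1}_{X^{(1,i)}_{sl+r}}(j)]=n/N$, so each $G_i$ averages to $\nabla U_i$, whence $\E^{\mathcal{G}_{sl+r-1}}[\tilde{G}(\Theta_{sl+r-1};X_{sl+r})]=\nabla U(\theta_{sl+r-1})$.

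Next I would split $\E^{\mathcal{G}_{sl+r-1}}[\|\tilde{G}\|^2] = \E^{\mathcal{G}_{sl+r-1}}[\|\tilde{G}-\nabla U(\theta_{sl+r-1})\|^2] + \|\nabla U(\theta_{sl+r-1})\|^2$. The variance term is bounded directly by \Cref{lemma:varianceQLSDpp} applied at the block index $sl+r-1$, which is legitimate because $r\in[l]$ forces $r-1\in\{0,\ldots,l-1\}$ (and then $\zeta_{sl+r-1}=\theta_{sl}$ is indeed the relevant control variate by \eqref{eq:QLSDpp_zeta}); this produces $[2(\omega+1)A_{n,N}\barM\lip+2\omega b\Mtt^2]\|\theta_{sl+r-1}-\thetas\|^2$, $2\omega\sum_{i=1}^b\|\nabla U_i(\thetas)-\eta^{(i)}_{sl+r-1}\|^2$ and $2(\omega+1)A_{n,N}\barM\lip\|\theta_{sl}-\thetas\|^2$. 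For the remaining gradient term, use $\nabla U(\thetas)=0$ and \Cref{ass:potential_U}-\ref{ass:potential_U:2} to get $\|\nabla U(\theta_{sl+r-1})\|^2\le\lip^2\|\theta_{sl+r-1}-\thetas\|^2$, which supplies the last $2\lip^2$ inside $B_{n,N}$ of \eqref{eq:def_B_n_N}. Finally, for the cross term, $\nabla U(\thetas)=0$ and strong convexity \Cref{ass:potential_U}-\ref{ass:potential_U:1} give $\ps{\theta_{sl+r-1}-\thetas}{\nabla U(\theta_{sl+r-1})}\ge\mtt\|\theta_{sl+r-1}-\thetas\|^2$, hence $-2\gamma\ps{\cdot}{\cdot}\le -2\gamma\mtt\|\theta_{sl+r-1}-\thetas\|^2$.

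Collecting the contributions, the coefficient of $\|\theta_{sl+r-1}-\thetas\|^2$ becomes exactly $1-2\gamma\mtt+\gamma^2 B_{n,N}$, that of $\sum_{i=1}^b\|\nabla U_i(\thetas)-\eta^{(i)}_{sl+r-1}\|^2$ is $2\gamma^2\omega$, that of $\|\theta_{sl}-\thetas\|^2$ is $2\gamma^2(\omega+1)A_{n,N}\barM\lip$, and the additive constant is $2\gamma d$, which is the announced inequality. There is no genuine obstacle here beyond bookkeeping: the only points needing care are invoking \Cref{lemma:varianceQLSDpp} at the right index so that $\zeta_{sl+r-1}=\theta_{sl}$, and keeping straight which quantities are $\mathcal{G}_{sl+r-1}$-measurable when conditioning.
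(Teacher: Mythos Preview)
Your proposal is correct and follows essentially the same route as the paper: expand the recursion, use conditional unbiasedness of $\tilde{G}$, split the second moment into variance plus squared mean, bound the cross term by strong convexity, and use $\zeta_{sl+r-1}=\theta_{sl}$. The only cosmetic difference is that you invoke \Cref{lemma:varianceQLSDpp} for the variance of $\tilde{G}$, whereas the paper re-derives that bound inline; both give the same estimate. One small remark: the Lipschitz bound $\|\nabla U(\theta_{sl+r-1})\|^2\le \lip^2\|\theta_{sl+r-1}-\thetas\|^2$ yields only $\lip^2$, not $2\lip^2$, so your coefficient is actually slightly tighter than $B_{n,N}$ as defined in \eqref{eq:def_B_n_N}; the stated inequality then follows a fortiori (the paper's proof has the same harmless slack).
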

\begin{proof}
	Let $s \in \N$ and $r \in [l]$.
	Using \eqref{eq:recursion_theta_QLSDpp} and \Cref{ass:compression}, it follows 
	\begin{multline}\label{eq:proofQLSDpp1}
		\mathbb{E}^{\mathcal{G}_{sl+r-1}}\br{\norm{\theta_{sl+r} - \theta^\star}^2}
		= \norm{\theta_{sl+r-1} - \theta^\star}^2 + 2\gamma d - 2\gamma \langle\nabla U(\theta_{sl+r-1}),\theta_{sl+r-1}-\theta^\star \rangle \\
		+ \gamma^2\mathbb{E}^{\mathcal{G}_{sl+r-1}}\br{\norm{\tilde{G}(\Theta_{sl+r-1};X_{sl+r})}^2}\eqsp.
	\end{multline}

	Using \Cref{ass:compression} and \eqref{eq:definition_Gi_QLSDpp}-\eqref{eq:QLSDpp_gradsto}, we have
	\begin{align}
		\nonumber
		&\mathbb{E}^{\mathcal{G}_{sl+r-1}}\br{\norm{\tilde{G}(\Theta_{sl+r-1};X_{sl+r})}^2} \\
		\nonumber
		&= \sum_{i=1}^b\mathbb{E}^{\mathcal{G}_{sl+r-1}}\Bigg[\Bigg\|\mathscr{S}_i\pr{\mathscr{C}_i\left\{G_{i}\pr{\theta_{sl+r-1},\zeta_{sl+r-1};X_{sl+r}^{(1,i)}} - \eta^{(i)}_{sl+r-1};X_{sl+r}^{(2,i)}\right\},X_{sl+r}^{(3,i)}} \\ 
		\nonumber
		&- \mathscr{C}_i\left\{G_{i}\pr{\theta_{sl+r-1},\zeta_{sl+r-1};X_{sl+r}^{(1,i)}} - \eta^{(i)}_{sl+r-1};X_{sl+r}^{(2,i)}\right\}\Bigg\|^2\Bigg] \\
		\nonumber
		&+ \mathbb{E}^{\mathcal{G}_{sl+r-1}}\br{\norm{\sum_{i=1}^b\mathscr{C}_i\left\{G_{i}\pr{\theta_{sl+r-1},\zeta_{sl+r-1};X_{sl+r}^{(1,i)}} - \eta^{(i)}_{sl+r-1};X_{sl+r}^{(2,i)}\right\} + \eta_{sl+r-1}^{(i)}}^2} \\
		\nonumber
		&\le \sum_{i=1}^b\pr{\frac{\omega_i+1-p_i}{p_i}}\mathbb{E}^{\mathcal{G}_{sl+r-1}}\br{\norm{G_{i}\pr{\theta_{sl+r-1},\zeta_{sl+r-1};X_{sl+r}^{(1,i)}} - \eta^{(i)}_{sl+r-1}}^2} \\
		\nonumber
		&+ \sum_{i=1}^b\mathbb{E}^{\mathcal{G}_{sl+r-1}}\br{\norm{\frac{N_i}{n_i}\sum_{j=1}^{N_i}\bbr{\mathbf{1}_{X_{sl+r}^{(1,i)}}(j) \Delta_{i,j}(\theta_{sl+r-1},\zeta_{sl+r-1})} - \Delta_{i}(\theta_{sl+r-1},\zeta_{sl+r-1})}^2}
		+ \norm{\nabla U(\theta_{sl+r-1})}^2 \\
		\nonumber
		&= \sum_{i=1}^b\pr{\frac{\omega_i+1}{p_i}}\mathbb{E}^{\mathcal{G}_{sl+r-1}}\br{\norm{\frac{N_i}{n_i}\sum_{j=1}^{N_i}\bbr{\mathbf{1}_{X_{sl+r}^{(1,i)}}(j) \Delta_{i,j}(\theta_{sl+r-1},\zeta_{sl+r-1})} - \Delta_{i}(\theta_{sl+r-1},\zeta_{sl+r-1})}^2} \\
		\nonumber
		&+ \sum_{i=1}^b\pr{\frac{\omega_i+1-p_i}{p_i}}\mathbb{E}^{\mathcal{G}_{sl+r-1}}\br{\norm{\nabla U_i(\theta_{sl+r-1}) - \eta^{(i)}_{sl+r-1}}^2} 
		+ \norm{\nabla U(\theta_{sl+r-1})}^2 \\
		\nonumber
		&\le \sum_{i=1}^b\pr{\frac{\omega_i+1}{p_i}}\frac{N_i(N_i-n_i)}{n_i(N_i-1)}\barM\langle\theta_{sl+r-1}-\zeta_{sl+r-1},\nabla U_i(\theta_{sl+r-1}) - \nabla U_i(\zeta_{sl+r-1})\rangle \\
		&+ \sum_{i=1}^b\pr{\frac{\omega_i+1-p_i}{p_i}}\norm{\nabla U_i(\theta_{sl+r-1}) - \eta_{sl+r-1}^{(i)}}^2 + \norm{\nabla U(\theta_{sl+r-1})}^2 \eqsp, \label{eq:proofQLSDpp2}
	\end{align}
	where the last line follows from \Cref{ass:potential_fij} and \Cref{lem:var_grad_sto}.
	The proof is concluded by injecting \eqref{eq:proofQLSDpp2} into \eqref{eq:proofQLSDpp1}, using the Cauchy-Schwarz inequality, $\nabla U(\thetas) = 0$, \Cref{ass:potential_U} and $\zeta_{sl+r-1} = \theta_{sl}$.
\end{proof}

\begin{lemma}
	\label{lemma:QLSDpp_thetastar_eta}
	Assume \Cref{ass:potential_U}, \Cref{ass:compression}, \Cref{ass:A_k_supp} and \Cref{ass:potential_fij}. 
	Let $\gamma \in (0,\bar{\gamma}]$ for some $\bar{\gamma}>0$ and $\alpha \in (0,1/(\max_{i\in[b]}\omega_i+1)]$.
	Then, for any $s\in \N$ and $r \in [l]$, we have
	\begin{multline*}
	\sum_{i=1}^b\mathbb{E}^{\mathcal{G}_{sl+r-1}}\br{\norm{\nabla U_{i}(\theta^{\star}) - \eta^{(i)}_{sl+r}}^2} 
	\le (1-\alpha)\sum_{i=1}^b\norm{\nabla U_{i}(\theta^{\star})- \eta^{(i)}_{sl+r-1}}^2 \\
	\qquad \qquad + \alpha C_{\mathbf{n},\mathbf{N}}\norm{\theta_{sl+r-1} - \theta^\star}^2 + 2\alpha \br{\sum_{i=1}^bA_{n_i,N_i}\barM \mathtt{M}_i}\norm{\theta_{sl} - \theta^\star}^2\eqsp,
	\end{multline*}
	where
        \begin{equation}
          \label{eq:def_C_n_N}
						C_{\mathbf{n},\mathbf{N}} =           
						2\sum_{i=1}^b \bbr{A_{n_i,N_i}\barM \mathtt{M}_i + \mathtt{M}_i^2} \eqsp,
        \end{equation}
        $(\Theta_{\tilde{k}})_{\tilde{k} \in\nset} = (\theta_{\tilde{k}},\zeta_{\tilde{k}},\{\eta^i_{\tilde{k}}\}_{i \in [b]})_{\tilde{k} \in\nset}$ and $A_{n,N}$ are defined in \eqref{eq:recursion_theta_QLSDpp}, \eqref{eq:QLSDpp_zeta}, \eqref{eq:recursion_eta_QLSDpp} and \eqref{eq:defAnN}, respectively.
\end{lemma}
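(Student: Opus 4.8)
The plan is to advance the memory recursion \eqref{eq:recursion_eta_QLSDpp} by one step, conditionally on $\mcg_{sl+r-1}$, and exploit the unbiasedness and variance control of the compression operator. Fix $s\in\N$, $r\in[l]$ and set $k=sl+r-1$; since the control variate $\zeta$ is refreshed only at iteration indices divisible by $l$, we have $\zeta_k=\theta_{sl}$. Write $\hat g_i=\mathscr{C}\{G_i(\theta_k,\zeta_k;X_{k+1}^{(1,i)})-\eta^{(i)}_k;X_{k+1}^{(2,i)}\}$, so that $\eta^{(i)}_{k+1}=\eta^{(i)}_k+\alpha\hat g_i$, and abbreviate $u^{(i)}=\nabla U_i(\thetas)-\eta^{(i)}_k$, $\Delta_i=\nabla U_i(\theta_k)-\nabla U_i(\thetas)$ and $w^{(i)}=u^{(i)}+\Delta_i$. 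Conditioning first on $X_{k+1}^{(1,i)}$ and using \Cref{ass:compression}-\ref{ass:compression:unbiased}, then integrating over $X_{k+1}^{(1,i)}\sim\nu_1$ and noting that $\int_{\wpNn}\mathbf{1}_{x}(j)\,\nu_1(\rmd x)=n/N$ makes $G_i(\theta_k,\zeta_k;\cdot)$ an unbiased estimator of $\nabla U_i(\theta_k)$, one gets $\mathbb{E}^{\mcg_k}[\hat g_i]=\nabla U_i(\theta_k)-\eta^{(i)}_k=w^{(i)}$. Hence
\[
\mathbb{E}^{\mcg_k}\norm{\nabla U_i(\thetas)-\eta^{(i)}_{k+1}}^2=\norm{u^{(i)}}^2-2\alpha\langle u^{(i)},w^{(i)}\rangle+\alpha^2\,\mathbb{E}^{\mcg_k}\norm{\hat g_i}^2 .
\]

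For the second-moment term, conditioning on $X_{k+1}^{(1,i)}$ and applying \Cref{ass:compression}-\ref{ass:compression:variance} gives $\mathbb{E}^{\mcg_k}\norm{\hat g_i}^2\le(\omega+1)\,\mathbb{E}^{\mcg_k}\norm{G_i(\theta_k,\zeta_k;X_{k+1}^{(1,i)})-\eta^{(i)}_k}^2$, and the bias--variance decomposition around $\nabla U_i(\theta_k)$ turns the right-hand side into $(\omega+1)\big[\mathbb{E}^{\mcg_k}\norm{G_i(\theta_k,\zeta_k;X_{k+1}^{(1,i)})-\nabla U_i(\theta_k)}^2+\norm{w^{(i)}}^2\big]$. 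Substituting this back, the purely quadratic part is $\norm{u^{(i)}}^2-2\alpha\langle u^{(i)},w^{(i)}\rangle+\alpha^2(\omega+1)\norm{w^{(i)}}^2$, and the constraint $\alpha\le 1/(\omega+1)$ is used twice: first to replace $\alpha^2(\omega+1)\norm{w^{(i)}}^2$ by $\alpha\norm{w^{(i)}}^2$, after which the elementary identity $\norm{u^{(i)}}^2-2\alpha\langle u^{(i)},w^{(i)}\rangle+\alpha\norm{w^{(i)}}^2=(1-\alpha)\norm{u^{(i)}}^2+\alpha\norm{w^{(i)}-u^{(i)}}^2=(1-\alpha)\norm{u^{(i)}}^2+\alpha\norm{\Delta_i}^2$ produces the contraction factor; second to bound $\alpha^2(\omega+1)\,\mathbb{E}^{\mcg_k}\norm{G_i(\theta_k,\zeta_k;X_{k+1}^{(1,i)})-\nabla U_i(\theta_k)}^2$ by $\alpha\,\mathbb{E}^{\mcg_k}\norm{G_i(\theta_k,\zeta_k;X_{k+1}^{(1,i)})-\nabla U_i(\theta_k)}^2$.

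It then remains to sum over $i\in[b]$ and control the two residual terms. Since \Cref{ass:potential_Ui} implies, via Cauchy--Schwarz, that each $\nabla U_i$ is $\mathtt{M}$-Lipschitz, we have $\sum_{i=1}^b\norm{\Delta_i}^2\le b\mathtt{M}^2\norm{\theta_k-\thetas}^2$. For the minibatch variance, write $G_i(\theta_k,\zeta_k;X_{k+1}^{(1,i)})-\nabla U_i(\theta_k)=(N/n)\sum_{j=1}^N(\mathbf{1}_{X_{k+1}^{(1,i)}}(j)-n/N)\,\Delta_{i,j}(\theta_k,\zeta_k)$, apply \Cref{lem:var_grad_sto} to obtain the factor $A_{n,N}\sum_j\norm{\Delta_{i,j}(\theta_k,\zeta_k)}^2$, and then use the co-coercivity \Cref{ass:potential_Ui}-\ref{ass:potential_Ui_coco} summed over $j$, giving $\mathbb{E}^{\mcg_k}\norm{G_i(\theta_k,\zeta_k;X_{k+1}^{(1,i)})-\nabla U_i(\theta_k)}^2\le A_{n,N}\barM\langle\Delta_i(\theta_k,\zeta_k),\theta_k-\zeta_k\rangle$, where $\Delta_i(\theta_k,\zeta_k)=\nabla U_i(\theta_k)-\nabla U_i(\zeta_k)$. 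Summing over $i$ and using that $\nabla U$ is $\lip$-Lipschitz together with Cauchy--Schwarz gives $\sum_i\mathbb{E}^{\mcg_k}\norm{G_i(\theta_k,\zeta_k;X_{k+1}^{(1,i)})-\nabla U_i(\theta_k)}^2\le A_{n,N}\barM\lip\norm{\theta_k-\zeta_k}^2$; finally $\zeta_k=\theta_{sl}$ and $\norm{a-b}^2\le 2\norm{a-\thetas}^2+2\norm{b-\thetas}^2$ yield the bound $2A_{n,N}\barM\lip(\norm{\theta_k-\thetas}^2+\norm{\theta_{sl}-\thetas}^2)$. Collecting all contributions with $k=sl+r-1$ gives precisely the stated inequality with $C_{n,N}=2A_{n,N}\barM\lip+b\mathtt{M}^2$.

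The main obstacle is the algebra of the second paragraph: one must resist expanding $\norm{w^{(i)}}^2$ into $\norm{u^{(i)}}^2$, $\langle u^{(i)},\Delta_i\rangle$ and $\norm{\Delta_i}^2$, keeping $w^{(i)}$ grouped and using $\alpha(\omega+1)\le 1$ \emph{exactly}, so that the cross term is absorbed cleanly into the contraction; a naive expansion would cost a factor $2$ in front of $\langle u^{(i)},w^{(i)}\rangle$ and force the stronger restriction $\alpha\le 1/(2(\omega+1))$.
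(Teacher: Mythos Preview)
Your proof is correct and follows essentially the same route as the paper: both expand the update $\eta^{(i)}_{k+1}=\eta^{(i)}_k+\alpha\hat g_i$, use \Cref{ass:compression} to bound $\alpha^2\mathbb{E}\norm{\hat g_i}^2$ by $\alpha^2(\omega+1)\mathbb{E}\norm{G_i-\eta^{(i)}_k}^2$, invoke $\alpha(\omega+1)\le 1$, and then apply the same polarisation identity to extract the $(1-\alpha)$ contraction, before controlling the residual via \Cref{lem:var_grad_sto}, \Cref{ass:potential_Ui} and $\zeta_k=\theta_{sl}$. The only cosmetic difference is the order of the bias--variance split: the paper keeps the random $G_i$ inside the identity $\norm{a-c}^2+2\langle a-c,c-b\rangle=\norm{a-b}^2-\norm{c-b}^2$ (with $b=\nabla U_i(\thetas)$) and only afterwards decomposes $\mathbb{E}\norm{G_i-\nabla U_i(\thetas)}^2$ around $\nabla U_i(\theta_k)$, whereas you first split $\mathbb{E}\norm{G_i-\eta^{(i)}_k}^2$ around $\nabla U_i(\theta_k)$ and then apply the identity to the deterministic quantities $u^{(i)},w^{(i)}$; the resulting bounds are identical.
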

\begin{proof}
	Let $s \in \N$ and $r \in [l]$. 
	Then, it follows
	\begin{align}
		\nonumber
		&\sum_{i=1}^b\mathbb{E}^{\mathcal{G}_{sl+r-1}}\br{\norm{\nabla U_{i}(\theta^{\star}) - \eta^{(i)}_{sl+r}}^2} = \sum_{i=1}^b\norm{\nabla U_{i}(\theta^{\star})- \eta^{(i)}_{sl+r-1}}^2 \\
		&+ \sum_{i=1}^b \mathbb{E}^{\mathcal{G}_{sl+r-1}}\br{\norm{\eta^{(i)}_{sl+r} - \eta^{(i)}_{sl+r-1}}^2} 
		+ 2\sum_{i=1}^b\langle\mathbb{E}^{\mathcal{G}_{sl+r-1}}\br{\eta^{(i)}_{sl+r} - \eta^{(i)}_{sl+r-1}},\eta^{(i)}_{sl+r-1} - \nabla U_{i}(\theta^{\star})\rangle\eqsp.\label{eq:QLSDpp1}
	\end{align}
	Using \eqref{eq:recursion_eta_QLSDpp} and \Cref{ass:compression}, we have for any $i \in [b]$,
	\begin{align}
		\nonumber
		\mathbb{E}^{\mathcal{G}_{sl+r-1}}&\br{\norm{\eta^{(i)}_{sl+r} - \eta^{(i)}_{sl+r-1}}^2} \\
		&\le \alpha^2(\omega_i+1)\mathbb{E}^{\mathcal{G}_{sl+r-1}}\br{\norm{G_{i}\pr{\theta_{sl+r-1},\zeta_{sl+r-1};X_{sl+r}^{(1,i)}} - \eta^{(i)}_{sl+r-1}}^2}\eqsp, \label{eq:QLSDpp2}
	\end{align}
	\begin{align}
		\mathbb{E}^{\mathcal{G}_{sl+r-1}}\br{\eta^{(i)}_{sl+r} - \eta^{(i)}_{sl+r-1}} &= \alpha\mathbb{E}^{\mathcal{G}_{sl+r-1}}\br{G_{i}\pr{\theta_{sl+r-1},\zeta_{sl+r-1};X_{sl+r}^{(1,i)}} - \eta^{(i)}_{sl+r-1}}\eqsp.\label{eq:QLSDpp3}
	\end{align}
	Plugging \eqref{eq:QLSDpp2} and \eqref{eq:QLSDpp3} into \eqref{eq:QLSDpp1} yields
	\begin{align*}
		\sum_{i=1}^b&\mathbb{E}^{\mathcal{G}_{sl+r-1}}\br{\norm{\nabla U_{i}(\theta^{\star}) - \eta^{(i)}_{sl+r}}^2}
		\le \sum_{i=1}^b\norm{\nabla U_{i}(\theta^{\star})- \eta^{(i)}_{sl+r-1}}^2 \\
		&\qquad \qquad+ \alpha^2\sum_{i=1}^b(\omega_i+1)\mathbb{E}^{\mathcal{G}_{sl+r-1}}\br{\norm{G_{i}\pr{\theta_{sl+r-1},\zeta_{sl+r-1};X_{sl+r}^{(1,i)}} - \eta^{(i)}_{sl+r-1}}^2} \\
		&\qquad \qquad+ 2\alpha\sum_{i=1}^b\langle\mathbb{E}^{\mathcal{G}_{sl+r-1}}\br{G_{i}\pr{\theta_{sl+r-1},\zeta_{sl+r-1};X_{sl+r}^{(1,i)}} - \eta^{(i)}_{sl+r-1}},\eta^{(i)}_{sl+r-1} - \nabla U_{i}(\theta^{\star})\rangle \eqsp.
	\end{align*}
	Using for any $i\in [b]$ $\alpha(1+\omega_i) \le 1$ and the fact, for any $a,b,c \in \Rd$, that $\norm{a-c}^2 + 2\langle (a-c),(c-b)\rangle = \norm{a-b}^2 - \norm{c-b}^2$, we have
	\begin{align}
		\nonumber
		&\sum_{i=1}^b\mathbb{E}^{\mathcal{G}_{sl+r-1}}\br{\norm{\nabla U_{i}(\theta^{\star}) - \eta^{(i)}_{sl+r}}^2}
		\le (1-\alpha)\sum_{i=1}^b\norm{\nabla U_{i}(\theta^{\star})- \eta^{(i)}_{sl+r-1}}^2 \\
		&\qquad + \alpha\sum_{i=1}^b\mathbb{E}^{\mathcal{G}_{sl+r-1}}\br{\norm{G_{i}\pr{\theta_{sl+r-1},\zeta_{sl+r};X_{sl+r}^{(1,i)}} - \nabla U_{i}(\theta^{\star})}^2}\eqsp.\label{eq:QLSDpp4}
	\end{align}
	Using \eqref{eq:definition_Gi_QLSDpp}, \Cref{ass:potential_fij} and \Cref{lem:var_grad_sto}, it follows
	\begin{align}
		\nonumber
		\sum_{i=1}^b
		&\mathbb{E}^{\mathcal{G}_{sl+r-1}}\br{\norm{G_{i}\pr{\theta_{sl+r-1},\zeta_{sl+r-1};X_{sl+r}^{(1,i)}} - \nabla U_{i}(\theta^{\star})}^2} \\
		\nonumber
		&\le \sum_{i=1}^b\frac{N_i(N_i-n_i)}{n_i(N_i-1)}\barM \langle\theta_{sl+r-1}-\zeta_{sl+r-1},\nabla U_i(\theta_{sl+r-1})-\nabla U_i(\zeta_{sl+r-1})\rangle \\
		&\qquad \qquad+ \sum_{i=1}^b\norm{\nabla U_{i}(\theta_{sl+r-1}) - \nabla U_{i}(\theta^{\star})}^2\eqsp.\label{eq:QLSDpp5}
	\end{align}
	The proof is concluded by plugging \eqref{eq:QLSDpp5} into \eqref{eq:QLSDpp4}, using the Cauchy-Schwarz inequality, \Cref{ass:potential_U} and $\zeta_{sl+r-1} = \theta_{sl}$.
\end{proof}

\Cref{lemma:QLSDpp_theta_thetastar} and \Cref{lemma:QLSDpp_thetastar_eta} involve two dependent terms which prevents us from using a straightforward induction.
To cope with this issue, we consider a Lyapunov function $\psi: \Rd \times \mathbb{R}^{bd} \to \mathbb{R}$ defined, for any  $\theta \in \Rd$ and $\eta = (\eta^{(1)},\ldots,\eta^{(b)})^{\top} \in \R^{bd}$ by
\begin{equation}
	\label{eq:lyapunov}
	\psi(\theta,\eta) = \norm{\theta - \theta^{\star}}^2 + (3/\alpha) \max_{i \in [b]}\{(\omega_i+1-p_i)/p_i\} \gamma^2\sum_{i=1}^b \norm{\nabla U_{i}(\theta^{\star})- \eta^{(i)}}^2\eqsp.
\end{equation}
The following lemma provides an upper bound on this Lyapunov function.
Define for $\alpha >0$,
\begin{equation}
  \label{eq:def_bgamma_1}
  \bgamma_{\alpha,1} = \mathtt{m}^{-1}[\{\mtt^2(B_{\mathbf{n},\mathbf{N}} + 3 \omega C_{\mathbf{n},\mathbf{N}})^{-1}\}\wedge\{ \alpha/3\}] \eqsp,
\end{equation}
where $B_{\mathbf{n},\mathbf{N}}$ and $C_{\mathbf{n},\mathbf{N}}$ are defined in \eqref{eq:def_B_n_N} and \eqref{eq:def_C_n_N} respectively.
\begin{lemma}
	\label{lemma:QLSDpp_{i}nnerrecursionLyapunov}
	Assume \Cref{ass:potential_U}, \Cref{ass:compression}, \Cref{ass:A_k_supp} and \Cref{ass:potential_fij}. 
	Let $\alpha \in (0,1/(1+\max_{i\in[b]}\omega_i)]$,  $\gamma \in (0,\bgamma_{\alpha,1}]$.
	Then, for any $s \in \N$ and $r \in [l]$, we have
	\begin{align*}
		\mathbb{E}^{\mathcal{G}_{sl+r-1}}\br{\psi(\theta_{sl+r},\eta_{sl+r})} &\le \pr{1 - \gamma \mathtt{m}}\psi(\theta_{sl+r-1},\eta_{sl+r-1}) \\
		&\qquad \qquad+8\barM \gamma^2\max_{i \in [b]}\{(\omega_i+1)/p_i\} \sum_{i=1}^b A_{n_i,N_i} \mathtt{M}_i\norm{\theta_{sl}- \theta^{\star}}^2
		+2\gamma d\eqsp,
	\end{align*}
	where $\psi$ is defined in \eqref{eq:lyapunov} and  $(\Theta_{\tilde{k}})_{\tilde{k} \in\nset} = (\theta_{\tilde{k}},\zeta_{\tilde{k}},\{\eta^i_{\tilde{k}}\}_{i \in [b]})_{\tilde{k} \in\nset}$ and $A_{n,N}$ are defined in \eqref{eq:recursion_theta_QLSDpp}, \eqref{eq:QLSDpp_zeta}, \eqref{eq:recursion_eta_QLSDpp} and \eqref{eq:defAnN}, respectively.
\end{lemma}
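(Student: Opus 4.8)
The plan is to feed the one-step bounds of \Cref{lemma:QLSDpp_theta_thetastar} and \Cref{lemma:QLSDpp_thetastar_eta}, both taken conditionally on $\mcg_{sl+r-1}$, into the definition \eqref{eq:lyapunov} of $\psi$ with the weight $3\omega/\alpha$ already prescribed there. Writing $V_k = \sum_{i=1}^b \norm{\nabla U_i(\theta^\star) - \eta^{(i)}_k}^2$, so that $\psi(\theta_k,\eta_k) = \norm{\theta_k - \theta^\star}^2 + (3\omega/\alpha)\gamma^2 V_k$, linearity of conditional expectation gives $\mathbb{E}^{\mcg_{sl+r-1}}[\psi(\theta_{sl+r},\eta_{sl+r})] = \mathbb{E}^{\mcg_{sl+r-1}}[\norm{\theta_{sl+r}-\theta^\star}^2] + (3\omega/\alpha)\gamma^2\,\mathbb{E}^{\mcg_{sl+r-1}}[V_{sl+r}]$, and I would bound the first summand by \Cref{lemma:QLSDpp_theta_thetastar} and the second by $(3\omega/\alpha)\gamma^2$ times \Cref{lemma:QLSDpp_thetastar_eta}.

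Next I would collect, on the resulting right-hand side, the coefficients of the three $\mcg_{sl+r-1}$-measurable quantities $\norm{\theta_{sl+r-1}-\theta^\star}^2$, $\gamma^2 V_{sl+r-1}$ and $\norm{\theta_{sl}-\theta^\star}^2$, plus the additive constant. For $\norm{\theta_{sl+r-1}-\theta^\star}^2$ the coefficient is $1 - 2\gamma\mtt + \gamma^2(B_{n,N} + 3\omega C_{n,N})$; for $\gamma^2 V_{sl+r-1}$ it is $\omega(3/\alpha - 1)$, arising from the $2\gamma^2\omega V_{sl+r-1}$ term of \Cref{lemma:QLSDpp_theta_thetastar} together with $(3\omega/\alpha)(1-\alpha)\gamma^2 V_{sl+r-1}$; for $\norm{\theta_{sl}-\theta^\star}^2$ it is $2\gamma^2 A_{n,N}\barM\lip[(\omega+1) + 3\omega] = 2\gamma^2 A_{n,N}\barM\lip(4\omega+1)$; and the constant is $2\gamma d$. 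I would then observe that $1 - 2\gamma\mtt + \gamma^2(B_{n,N}+3\omega C_{n,N}) \le 1-\gamma\mtt$ as soon as $\gamma \le \mtt/(B_{n,N}+3\omega C_{n,N})$, and that $\omega(3/\alpha - 1) \le (1-\gamma\mtt)(3\omega/\alpha)$ as soon as $\gamma \le \alpha/(3\mtt)$; both constraints follow from $\gamma \le \bgamma_{\alpha,1}$ since $\bgamma_{\alpha,1} = [\mtt/(B_{n,N}+3\omega C_{n,N})] \wedge [\alpha/(3\mtt)]$, while $\alpha \le 1/(\omega+1)$ is exactly the hypothesis needed to invoke \Cref{lemma:QLSDpp_thetastar_eta}. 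Recombining the two contributions carrying the factor $1-\gamma\mtt$ into $(1-\gamma\mtt)\psi(\theta_{sl+r-1},\eta_{sl+r-1})$ then yields the stated inequality.

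The only delicate point is that the weight $3\omega/\alpha$ in $\psi$ must be threaded through two competing requirements: it must be large enough to absorb the memory-error term $2\gamma^2\omega V_{sl+r-1}$ produced by \Cref{lemma:QLSDpp_theta_thetastar} (this asks for a weight $w$ with $w(\alpha - \gamma\mtt) \ge 2\omega$, so $w = 3\omega/\alpha$ works precisely because $\gamma\mtt \le \alpha/3$), yet small enough that the back-reaction $w\alpha C_{n,N} = 3\omega C_{n,N}$ on $\norm{\theta-\theta^\star}^2$ still fits inside the strong-convexity contraction, which is exactly what the first branch of $\bgamma_{\alpha,1}$ secures. Beyond this bookkeeping no genuinely new estimate is required; in particular the cross term $\norm{\theta_{sl}-\theta^\star}^2$, which appears because $\zeta_{sl+r-1} = \theta_{sl}$ throughout the block, is simply carried along to produce the $2A_{n,N}\barM\lip\gamma^2(4\omega+1)\norm{\theta_{sl}-\theta^\star}^2$ term of the statement.
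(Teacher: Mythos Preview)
Your proposal is correct and follows essentially the same approach as the paper: combine \Cref{lemma:QLSDpp_theta_thetastar} with $(3\omega/\alpha)\gamma^2$ times \Cref{lemma:QLSDpp_thetastar_eta}, collect coefficients, and use the two branches of $\bgamma_{\alpha,1}$ to dominate each by $1-\gamma\mtt$. Your coefficient $\omega(3/\alpha-1)$ for the $\gamma^2 V_{sl+r-1}$ term is in fact the clean form of what the paper writes; the inequality $\omega(3/\alpha-1)\le(1-\gamma\mtt)(3\omega/\alpha)$ reduces exactly to $\gamma\le\alpha/(3\mtt)$ as you state.
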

\begin{proof}
	Let $s \in \N$ and $r \in [l]$.
	Using \Cref{lemma:QLSDpp_theta_thetastar} and \Cref{lemma:QLSDpp_thetastar_eta}, we have
	\begin{align*}
		\mathbb{E}^{\mathcal{G}_{sl+r-1}}&\br{\psi(\theta_{sl+r},\eta_{sl+r})} \\
		&\le\pr{1 - 2\gamma \mathtt{m} + \gamma^2\br{B_{\mathbf{n},\mathbf{N}} + 3 \omega C_{\mathbf{n},\mathbf{N}}}}\norm{\theta_{sl+r-1} - \theta^{\star}}^2 \\
		&\qquad \qquad+\br{		(2/3)\alpha  + (1-\alpha)} (3\gamma^2/\alpha)\max_{i \in [b]}\{(\omega_i+1-p_i)/p_i\} \sum_{i=1}^b \norm{\nabla U_{i}(\theta^{\star})- \eta^{(i)}_{sl+r-1}}^2 \\
		&\qquad \qquad+8\barM \gamma^2\max_{i \in [b]}\{(\omega_i+1)/p_i\} \sum_{i=1}^b A_{n_i,N_i} \mathtt{M}_i\norm{\theta_{sl}- \theta^{\star}}^2
		+2\gamma d\eqsp.
	\end{align*}
	Since $\gamma \le \bgamma_{\alpha,1}$ with $\bgamma_{\alpha,1}$ given in \eqref{eq:def_bgamma_1}, it follows that 
	\begin{align*}
		1 - 2\gamma \mathtt{m} + \gamma^2\br{B_{\mathbf{n},\mathbf{N}} + 3 \omega C_{\mathbf{n},\mathbf{N}}} &\le 1 - \gamma \mathtt{m} \\
		(2/3)\alpha  + (1-\alpha) &\le 1 - \gamma \mathtt{m}\eqsp.
	\end{align*}
	Therefore, we have
	\begin{align*}
		\mathbb{E}^{\mathcal{G}_{sl+r-1}}\br{\psi(\theta_{sl+r},\eta_{sl+r})} &\le \pr{1 - \gamma \mathtt{m}}\psi(\theta_{sl+r-1},\eta_{sl+r-1}) \\
		&\qquad \qquad+8\barM \gamma^2\max_{i \in [b]}\{(\omega_i+1)/p_i\} \sum_{i=1}^b A_{n_i,N_i} \mathtt{M}_i\norm{\theta_{sl}- \theta^{\star}}^2
		+2\gamma d\eqsp.
	\end{align*}

\end{proof}


\begin{lemma}
	\label{lemma:gammacondition_QLSDpp}
	Let $j \in \N^*$ and fix $\gamma > 0$ such that 
	$$
	\gamma \le \frac{\mathtt{m}}{16j\barM \gamma^2\max_{i \in [b]}\{(\omega_i+1)/p_i\} \sum_{i=1}^b A_{n_i,N_i} \mathtt{M}_i} \wedge \frac{1}{\mathtt{m}}\eqsp.
	$$
	Then, $$\pr{1 - \gamma \mathtt{m}}^{j} + 8j\gamma^2\barM \max_{i \in [b]}\{(\omega_i+1)/p_i\} \sum_{i=1}^b A_{n_i,N_i} \mathtt{M}_i \le 1 - \gamma \mathtt{m} /2 \eqsp,$$ where  $A_{n,N}$ is defined in  \eqref{eq:defAnN}.
\end{lemma} 
\begin{proof}
	The proof is straightforward using $(1-\gamma \mtt)^j \le 1 - \gamma \mtt$.
\end{proof}

We have the following corollary regarding the Lyapunov function defined in \eqref{eq:lyapunov}.

Denote for $\alpha >0$,
\begin{equation}
  \label{eq:def_bgamma_2}
  \bgamma_{\alpha,2} = \bgamma_{\alpha,1} \wedge [\mathtt{m}/\{{16 l \barM \max_{i \in [b]}\{(\omega_i+1)/p_i\} \textstyle\sum_{i=1}^b A_{n_i,N_i} \mathtt{M}_i}\}]^{1/3} \eqsp,
\end{equation}
where $\bgamma_{\alpha,1}$ is given in \eqref{eq:def_bgamma_1}. 
\begin{corollary}
	\label{coro:lyapunov}
	Assume \Cref{ass:potential_U}, \Cref{ass:compression}, \Cref{ass:A_k_supp} and \Cref{ass:potential_fij}. 
	Let $\alpha \in (0,1/(1+\max_{i\in[b]}\omega_i)]$ and $\gamma \in\ocint{0,\bgamma_{\alpha,2}}$. 
	Then, for any $s \in \N$ and $r \in \{0,\ldots,l-1\}$ we have 
	\begin{align*}
		\mathbb{E}^{\mathcal{G}_{sl}}\br{\psi(\theta_{(s+1)l - r},\eta_{(s+1)l-r}} &\le \pr{1-\gamma\mathtt{m}/2}\psi(\theta_{sl},\eta_{sl}) +2 \gamma (l-r)d\eqsp,
	\end{align*}
	where $\psi$ is defined in \eqref{eq:lyapunov} and $(\Theta_{\tilde{k}})_{\tilde{k} \in\nset} = (\theta_{\tilde{k}},\zeta_{\tilde{k}},\{\eta^i_{\tilde{k}}\}_{i \in [b]})_{\tilde{k} \in\nset}$ is defined in \eqref{eq:recursion_theta_QLSDpp}, \eqref{eq:QLSDpp_zeta}, \eqref{eq:recursion_eta_QLSDpp}.
\end{corollary}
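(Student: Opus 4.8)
The plan is to iterate, inside the $s$-th update block, the one-step Lyapunov contraction supplied by \Cref{lemma:QLSDpp_{i}nnerrecursionLyapunov}. The crucial structural observation is that for $k\in\{sl+1,\dots,(s+1)l\}$ the control variate satisfies $\zeta_k=\theta_{sl}$ by \eqref{eq:QLSDpp_zeta}, so the inhomogeneous "anchor" term $2A_{n,N}\barM\lip\gamma^2(4\omega+1)\norm{\theta_{sl}-\theta^\star}^2$ appearing in \Cref{lemma:QLSDpp_{i}nnerrecursionLyapunov} is the \emph{same} $\mathcal{G}_{sl}$-measurable quantity at every step of the block. Hence, after conditioning everything on $\mathcal{G}_{sl}$, the recursion becomes a genuine scalar affine recursion and can be unrolled explicitly.

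Concretely, I would fix $s\in\N$, $r\in\{0,\dots,l-1\}$, set $a_{r'}=\mathbb{E}^{\mathcal{G}_{sl}}\br{\psi(\theta_{sl+r'},\eta_{sl+r'})}$ for $r'\in\{0,\dots,l\}$ (so $a_0=\psi(\theta_{sl},\eta_{sl})$ since $\theta_{sl},\eta_{sl}$ are $\mathcal{G}_{sl}$-measurable), and $c=2A_{n,N}\barM\lip\gamma^2(4\omega+1)\norm{\theta_{sl}-\theta^\star}^2+2\gamma d$. Applying $\mathbb{E}^{\mathcal{G}_{sl}}$ to the bound of \Cref{lemma:QLSDpp_{i}nnerrecursionLyapunov} and using the tower property together with the $\mathcal{G}_{sl}$-measurability of $c$ gives $a_{r'}\le(1-\gamma\mathtt{m})a_{r'-1}+c$ for $r'\in[l]$. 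A straightforward induction then yields $a_{l-r}\le(1-\gamma\mathtt{m})^{l-r}\psi(\theta_{sl},\eta_{sl})+c\sum_{j=0}^{l-r-1}(1-\gamma\mathtt{m})^j$, and since $\gamma\le\bgamma_{\alpha,2}\le\bgamma_{\alpha,1}\le 1/\mathtt{m}$ (the $\alpha/3\le 1$ branch of \eqref{eq:def_bgamma_1}), we have $0\le 1-\gamma\mathtt{m}<1$, so the geometric sum is bounded by $l-r$, giving $a_{l-r}\le(1-\gamma\mathtt{m})^{l-r}\psi(\theta_{sl},\eta_{sl})+(l-r)c$.

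To finish, I would use $\norm{\theta_{sl}-\theta^\star}^2\le\psi(\theta_{sl},\eta_{sl})$ (immediate from the definition \eqref{eq:lyapunov}, the second summand being nonnegative) to collect the $\psi(\theta_{sl},\eta_{sl})$ terms into
$$
\br{(1-\gamma\mathtt{m})^{l-r}+2A_{n,N}\barM\lip\gamma^2(4\omega+1)(l-r)}\psi(\theta_{sl},\eta_{sl})+2\gamma(l-r)d ,
$$
and then invoke \Cref{lemma:gammacondition_QLSDpp} with $p=l-r$: this is legitimate because $p=l-r\le l$ and $\gamma\le\bgamma_{\alpha,2}\le\mathtt{m}/\{4A_{n,N}\barM\lip\,l(4\omega+1)\}\le\mathtt{m}/\{4A_{n,N}\barM\lip\,(l-r)(4\omega+1)\}$ together with $\gamma\le 1/\mathtt{m}$, so the bracketed factor is $\le 1-\gamma\mathtt{m}/2$. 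This produces exactly the claimed inequality. There is no serious obstacle here; the only points requiring care are (i) checking the $\mathcal{G}_{sl}$-measurability so that the anchor term survives the conditioning unchanged, and (ii) verifying that the single threshold $\bgamma_{\alpha,2}$ in \eqref{eq:def_bgamma_2}, tuned to $p=l$, uniformly validates the application of \Cref{lemma:gammacondition_QLSDpp} for every $p\le l$.
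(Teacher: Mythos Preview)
Your proposal is correct and follows exactly the approach the paper indicates: a straightforward induction of \Cref{lemma:QLSDpp_{i}nnerrecursionLyapunov} inside the block $\{sl,\dots,(s+1)l\}$, followed by an application of \Cref{lemma:gammacondition_QLSDpp}. Your write-up simply makes explicit the two points the paper leaves implicit, namely the $\mathcal{G}_{sl}$-measurability of the anchor term and the fact that the threshold $\bgamma_{\alpha,2}$ tuned to $p=l$ also covers every $p=l-r\le l$.
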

\begin{proof}
	The proof follows from a straightforward induction of \Cref{lemma:QLSDpp_{i}nnerrecursionLyapunov} combined with \Cref{lemma:gammacondition_QLSDpp}.
\end{proof}

We are now ready to control explicitly the variance of the stochastic gradient defined in \eqref{eq:QLSDpp_gradsto}.

\begin{proposition}
	\label{proposition:varianceQLSDpp_final}
	Assume \Cref{ass:potential_U}, \Cref{ass:compression}, \Cref{ass:A_k_supp} and \Cref{ass:potential_fij}. 
	Let $\alpha \in (0,1/(1+\max_{i \in [b]}\omega_i)]$ and $\gamma \in\ocint{0,\bgamma_{\alpha,2}}$, where $\bgamma_{\alpha,2}$ is defined in
\eqref{eq:def_bgamma_2}.
	Then, for any $k = sl + r$ with $s \in\nset$, $r \in\{0,\ldots,l-1\}$, $\theta_0 \in \Rd$ and $\eta_0 = (\eta_0^{(1)},\ldots,\eta_0^{(b)})^{\top} \in \R^{db}$, we have
	\begin{align*}
		\mathbb{E}\br{\norm{\tilde{G}(\Theta_{sl+r};X_{sl+r+1}) - \nabla U(\theta_{k})}^2} &\le (1-\gamma \mathtt{m}/2)^sD_{\mathbf{n},\mathbf{N}}\psi(\theta_0,\eta_0) + 4ldD_{\mathbf{n},\mathbf{N}}/\mtt\\
		&\qquad \qquad+ \br{2\sum_{i=1}^b(\omega_i+1-p_i)/p_i}(1-\alpha)^k\sum_{i=1}^b \mathbb{E}\br{\norm{\nabla U_{i}(\theta^\star) - \eta^{(i)}_{0}}^2}\eqsp,
	\end{align*}
where
\begin{equation}
	\label{eq:D_n_N}
	D_{\mathbf{n},\mathbf{N}} = \br{2\sum_{i=1}^b \frac{\mathtt{M}_i^2}{p_i}(\omega_i+1-p_i) + \pr{\frac{\omega_i+1}{p_i}}A_{n_i,N_i}\barM \mathtt{M}_i} + 2 \barM \sum_{i=1}^b\br{\pr{\frac{\omega_i+1}{p_i}}A_{n_i,N_i}\mathtt{M}_i} + 4 C_{\mathbf{n},\mathbf{N}} \sum_{i=1}^b(\omega_i+1-p_i)/p_i \eqsp,
\end{equation}
 $A_{n,N}$ and $C_{n,N}$ are defined in \eqref{eq:defAnN} and \eqref{eq:def_C_n_N} respectively, $\psi$ is defined in \eqref{eq:lyapunov}, and $(\Theta_{\tilde{k}})_{\tilde{k} \in\nset} = (\theta_{\tilde{k}},\zeta_{\tilde{k}},\{\eta^i_{\tilde{k}}\}_{i \in [b]})_{\tilde{k} \in\nset}$ is defined in \eqref{eq:recursion_theta_QLSDpp}, \eqref{eq:QLSDpp_zeta}, \eqref{eq:recursion_eta_QLSDpp}.

\end{proposition}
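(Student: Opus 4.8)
The plan is to combine the three facts already in hand: the one-step variance estimate of \Cref{lemma:varianceQLSDpp}, the epoch-wise contraction of the Lyapunov functional $\psi$ from \eqref{eq:lyapunov} given by \Cref{coro:lyapunov}, and the per-step $(1-\alpha)$-contraction of the memory error $\sum_{i=1}^b\|\nabla U_i(\theta^\star)-\eta^{(i)}_k\|^2$ from \Cref{lemma:QLSDpp_thetastar_eta}. First I would take the full expectation in \Cref{lemma:varianceQLSDpp}: for $k=sl+r$ with $r\in\{0,\dots,l-1\}$,
\begin{align*}
  \mathbb{E}\big[\|\tilde{G}(\Theta_{k};X_{k+1})-\nabla U(\theta_{k})\|^{2}\big]
  &\le \big[2(\omega+1)A_{n,N}\barM\lip+2\omega b\mathtt{M}^{2}\big]\,\mathbb{E}\big[\|\theta_{k}-\theta^{\star}\|^{2}\big]\\
  &\quad+2\omega\sum_{i=1}^{b}\mathbb{E}\big[\|\nabla U_{i}(\theta^{\star})-\eta^{(i)}_{k}\|^{2}\big]
  +2(\omega+1)A_{n,N}\barM\lip\,\mathbb{E}\big[\|\theta_{sl}-\theta^{\star}\|^{2}\big].
\end{align*}
It then remains to control the three expectations on the right-hand side.

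For the two terms involving $\|\theta_{\cdot}-\theta^{\star}\|^{2}$ I would use $\|\theta-\theta^{\star}\|^{2}\le\psi(\theta,\eta)$ together with \Cref{coro:lyapunov}. Iterating \Cref{coro:lyapunov} over epochs (with $r=0$) gives $\mathbb{E}[\psi(\theta_{sl},\eta_{sl})]\le(1-\gamma\mtt/2)^{s}\psi(\theta_{0},\eta_{0})+4ld/\mtt$, and one further application (using $r\le l$) yields $\mathbb{E}[\psi(\theta_{sl+r},\eta_{sl+r})]\le(1-\gamma\mtt/2)^{s}\psi(\theta_{0},\eta_{0})+4ld/\mtt+2\gamma l d$, the last remainder being absorbed into the bias via $\gamma\le\mtt^{-1}$. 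This produces exactly the $(1-\gamma\mtt/2)^{s}D_{n,N}\psi(\theta_{0},\eta_{0})$ and $4ldD_{n,N}/\mtt$ contributions once the prefactors $A_{n,N},\mathtt{M},\barM,\lip$ are bounded by $D_{n,N}$ from \eqref{eq:D_n_N}. For the memory-error term, I would iterate the recursion of \Cref{lemma:QLSDpp_thetastar_eta} from $0$ to $k$; since $\zeta$ is frozen within each epoch by \eqref{eq:QLSDpp_zeta}, the lemma applies at every step, giving
\[
  \sum_{i=1}^{b}\mathbb{E}\big[\|\nabla U_{i}(\theta^{\star})-\eta^{(i)}_{k}\|^{2}\big]
  \le (1-\alpha)^{k}\sum_{i=1}^{b}\mathbb{E}\big[\|\nabla U_{i}(\theta^{\star})-\eta^{(i)}_{0}\|^{2}\big]
  +\sum_{j=0}^{k-1}(1-\alpha)^{k-1-j}\alpha\Big(C_{n,N}\mathbb{E}[\|\theta_{j}-\theta^{\star}\|^{2}]+2A_{n,N}\barM\lip\,\mathbb{E}[\|\theta_{l\lfloor j/l\rfloor}-\theta^{\star}\|^{2}]\Big),
\]
whose first term is precisely the announced $(1-\alpha)^{k}$-contracting contribution, and whose cross sum I would again bound by inserting the $\psi$-moment estimates of the previous step.

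The main obstacle will be this last cross sum: it mixes the per-step rate $1-\alpha$ with the per-epoch rate $1-\gamma\mtt/2$, and a crude bound either destroys the contraction (replacing $(1-\gamma\mtt/2)^{s}$ by $1$) or introduces a spurious $\alpha^{-1}$ or $\gamma^{-1}$ factor. The resolution is to exploit the definition of $\bgamma_{\alpha,2}$ in \eqref{eq:def_bgamma_2}: since $\gamma\le\bgamma_{\alpha,1}\le\mtt^{-1}\alpha/3$ one has $1-\gamma\mtt/2>1-\alpha\ge(1-\alpha)^{l}$, so the geometric double sum $\sum_{t}(1-\alpha)^{(s-1-t)l}(1-\gamma\mtt/2)^{t}$ telescopes to a quantity of order $\alpha^{-1}(1-\gamma\mtt/2)^{s}$, which is exactly cancelled by the intra-epoch weight factor $1-(1-\alpha)^{l}\le\alpha l$; this keeps every constant finite and bundled into $D_{n,N}$ and $4ldD_{n,N}/\mtt$. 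Collecting the three contributions and absorbing all $A_{n,N},C_{n,N},\mathtt{M},\barM,\lip$-dependent prefactors into $D_{n,N}$ as defined in \eqref{eq:D_n_N} then yields the stated bound.
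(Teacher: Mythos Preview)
Your plan matches the paper's: start from \Cref{lemma:varianceQLSDpp}, bound $\mathbb{E}[\|\theta_{k}-\theta^{\star}\|^{2}]$ and $\mathbb{E}[\|\theta_{sl}-\theta^{\star}\|^{2}]$ by iterating \Cref{coro:lyapunov}, and control the memory error by unrolling \Cref{lemma:QLSDpp_thetastar_eta} with those moment estimates plugged in. The paper compresses the step you rightly single out as the main obstacle---the mixed $(1-\alpha)$/$(1-\gamma\mtt/2)$ geometric sum---into the phrase ``a straightforward induction''; the inequality making it work is indeed $\gamma\mtt/2\le\alpha/6$ from $\bgamma_{\alpha,1}$, which gives the clean ratio bound $\bigl(1-(1-\alpha)^{l}\bigr)\big/\bigl[(1-\gamma\mtt/2)-(1-\alpha)^{l}\bigr]\le 6/5$ (slightly sharper than the $\alpha l\cdot\alpha^{-1}$ cancellation you describe, which would leave a spurious factor $l$).
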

\begin{proof} 
Let $k \in \mathbb{N}$ and write $k= sl + r$ with $s \in\nset$, $r \in\{0,\ldots,l-1\}$
Then, using \Cref{lemma:varianceQLSDpp}, we have
	\begin{align}
		\nonumber
		&\mathbb{E}\br{\norm{\tilde{G}(\Theta_{sl+r};X_{sl+r+1}) - \nabla U(\theta_{k})}^2} \\
		\nonumber
		&\le \br{2\sum_{i=1}^b \frac{\mathtt{M}_i^2}{p_i}(\omega_i+1-p_i) + \pr{\frac{\omega_i+1}{p_i}}A_{n_i,N_i}\barM \mathtt{M}_i}\mathbb{E}\br{\norm{\theta_{k} - \theta^\star}^2} \\ 
		&+ \br{2\sum_{i=1}^b(\omega_i+1-p_i)/p_i} \mathbb{E}\br{\norm{\nabla U_{i}(\theta^\star) - \eta^{(i)}_{k}}^2}
		+ 2 \barM \sum_{i=1}^b\br{\pr{\frac{\omega_i+1}{p_i}}A_{n_i,N_i}\mathtt{M}_i}\mathbb{E}\br{\norm{\theta_{sl} - \theta^\star}^2}\eqsp.\label{eq:variance1}
	\end{align}
	We now use our previous results to upper bound the three expectations at the right-hand side of \eqref{eq:variance1}.
	First, using \Cref{coro:lyapunov} and a straightforward induction gives
	\begin{align}
		\nonumber
		\mathbb{E}\br{\norm{\theta_{sl} - \theta^\star}^2} &\le (1-\gamma\mathtt{m}/2)^s\psi(\theta_{0},\eta_{0}) +2 \gamma l d\sum_{j=0}^{s-1}(1-\gamma\mathtt{m}/2)^j \\
		&\le (1-\gamma\mathtt{m}/2)^s\psi(\theta_{0},\eta_{0}) +4 l d / \mtt\eqsp.\label{eq:propQLSDpp1}
	\end{align}
	Similarly, we have
	\begin{align}
		\nonumber
		\mathbb{E}\br{\norm{\theta_{k} - \theta^\star}^2} &\le (1-\gamma\mathtt{m}/2)^{s+1}\psi(\theta_{0},\eta_{0}) + 2 \gamma ld\sum_{j=0}^{s}(1-\gamma\mathtt{m}/2)^j \\
		&\le (1-\gamma\mathtt{m}/2)^{s}\psi(\theta_{0},\eta_{0}) + 4 ld/\mtt\eqsp.\label{eq:propQLSDpp2}
	\end{align}
	Finally, using \Cref{lemma:QLSDpp_thetastar_eta} combined with \eqref{eq:propQLSDpp1} and \eqref{eq:propQLSDpp2}, we obtain
	\begin{align*}
		\sum_{i=1}^b \mathbb{E}\br{\norm{\nabla U_{i}(\theta^\star) - \eta^{(i)}_{k}}^2} &\le (1-\alpha)\sum_{i=1}^b \mathbb{E}\br{\norm{\nabla U_{i}(\theta^\star) - \eta^{(i)}_{k-1}}^2} \\
		&\qquad \qquad+ 2\alpha C_{\mathbf{n},\mathbf{N}} (1-\gamma \mathtt{m}/2)^s\psi(\theta_0,\eta_0) + 8 ld \alpha C_{\mathbf{n},\mathbf{N}}/\mtt \eqsp.
	\end{align*}
	Then, a straightforward induction leads to
	\begin{align}
		\nonumber
          \sum_{i=1}^b \mathbb{E}\br{\norm{\nabla U_{i}(\theta^\star) - \eta^{(i)}_{k}}^2} &\le (1-\alpha)^k\sum_{i=1}^b \norm{\nabla U_{i}(\theta^\star) - \eta^{(i)}_{0}}^2 \\
          &\qquad \qquad+ 2 C_{\mathbf{n},\mathbf{N}} (1-\gamma \mathtt{m}/2)^s\psi(\theta_0,\eta_0) + 8 ld  C_{\mathbf{n},\mathbf{N}}/\mtt \eqsp.\label{eq:propQLSDpp3}
	\end{align}
Combining \eqref{eq:propQLSDpp1}, \eqref{eq:propQLSDpp2} and \eqref{eq:propQLSDpp3} in \eqref{eq:variance1} concludes the proof.
\end{proof}

\subsection{Proof of \Cref{theorem_QLSDpp}}
\label{sec:quant-bounds-textttq}

Note that $\gamma \in\ocint{0,\bgamma}$, $\alpha \in\ocint{0,\balpha}$ and $l \in\nsets$,          $(\Theta_{\tilde{k}})_{\tilde{k} \in\nset} = (\theta_{\tilde{k}},\zeta_{\tilde{k}},\{\eta^{(i)}_{\tilde{k}}\}_{i \in [b]})_{\tilde{k} \in\nset}$  defined in \eqref{eq:recursion_theta_QLSDpp}, \eqref{eq:QLSDpp_zeta}, \eqref{eq:recursion_eta_QLSDpp}
is a inhomogeneous Markov chain associated with the sequence of Markov kernel $(Q_{\gamma,\alpha,l}^{(k)})_{k\in\nset}$ defined by as follows.
Define for any $(\theta,\zeta,\eta) \in \rset^d \times \rset^d \times \rset^{d}$, and $x^{(1)} \in\wp_{N_i,n_i}$, $x^{(2)} \in \msx_2$ and $x^{(3)} \in \msx_3$,
\begin{align*}
  \mathscr{F}_i ((\theta,\zeta,\eta);(x^{(1)},x^{(2)},x^{(3)})) &=   \mathscr{S}_i\pr{\mathscr{C}_i\left\{G_{i}\pr{\theta,\zeta;x^{(1)}} - \eta;x^{(2)}\right\};x^{(3)}}\\
  \mathscr{G}_i ((\theta,\zeta,\eta);(x^{(1)},x^{(2)},x^{(3)})) &=   \eta + \alpha \mathscr{F}_i ((\theta,\zeta,\eta);(x^{(1)},x^{(2)},x^{(3)}))\eqsp. 
\end{align*}
and for $\ttheta \in\rset^d$, $\{\eta^{(i)}\}_{i=1}^b \in \rset^{db}$, $\{x^{(1,i)}\}_{i=1}^b \in \otimes_{i=1}^b\wp_{N_i,n_i}$, $\{x^{(2,i)}\}_{i=1}^b \in \msx_2^b$, $\{x^{(3,i)}\}_{i=1}^b \in \msx_3^b$,  setting $x^{(1:b)} = \{(x^{(1,i)},x^{(2,i)},x^{(3,i)})\}_{i=1}^b$,
\begin{equation*}
	\varphi_{\gamma}((\ttheta,\theta,\zeta,\{\eta^{(i)}\}_{i=1}^b);x^{(1:b)})
	= (4\uppi\gamma)^{-d/2}\exp\pr{-\normn{\btheta-\theta+\gamma\sum_{i=1}^{b}\mathscr{F}_i ((\theta,\zeta,\eta^{(i)});x^{(i)})}^2/(4\gamma)} \eqsp. 
\end{equation*}
Denote $\tilde{\msx}^{(i)} = \wp_{N_i,n_i} \times \msx_2 \times \msx_3$ and $\tilde{\nu}^{(i)} = \nu_1^{(i)} \times \nu_2 \times \nu_3$. 
Set $Q_{\gamma,\alpha,l}^{(0)} = \Id$ and for $k \geq 0$, $k=ls +r$, $s \in \nset$,  $r\in\{0,\ldots,l-1\}$, $(\theta,\zeta,\eta) \in \rset^d \times \rset^d \times \rset^{db}$ and $\msa \in \mcb(\rset^d \times \rset^d \times \rset^{db})$, 
\begin{equation*}
  \label{eq:def_q_k_l_gamma_alpha_l}
  \begin{aligned}
		& \text{ if $r=0$} \\
		& Q_{\gamma,\alpha,l}^{(k+1)}((\theta,\zeta,\eta) ,\msa) =\\
		& \txts\int_{\otimes_{i=1}^b\tilde{\msx}^{(i)}} \mathbf{1}_{\msa}(\ttheta,\tilde{\zeta},\tilde{\eta})   \varphi_{\gamma}((\ttheta,\theta,\zeta,\{\eta^{(i)}\}_{i=1}^b);x^{(1:b)})\{\prod_{i=1}^b \updelta_{  \mathscr{G}_i ((\theta,\zeta,\eta);x^{(i)})} (\rmd \tilde{\eta}^{(i)}) \}\updelta_{\theta}(\rmd \tilde{\zeta})  \,\rmd \tilde{\theta} \, \otimes_{i=1}^b\tilde{\nu}^{(i)}\parentheseLigne{\rmd x^{\parentheseLigne{i}}} \\
		& \text{otherwise} \\
		& Q_{\gamma,\alpha,l}^{(k+1)}((\theta,\zeta,\eta) ,\msa) =\\
		& \txts \int_{\otimes_{i=1}^b\tilde{\msx}^{(i)} } \mathbf{1}_{\msa}(\ttheta,\tilde{\zeta},\tilde{\eta})   \varphi_{\gamma}((\ttheta,\theta,\zeta,\{\eta^{(i)}\}_{i=1}^b);x^{(1:b)})\{\prod_{i=1}^b \updelta_{  \mathscr{G}_i ((\theta,\zeta,\eta);x^{(i)})} (\rmd \tilde{\eta}^{(i)}) \}\updelta_{\zeta}(\rmd \tilde{\zeta})  \,\rmd \tilde{\theta} \, \otimes_{i=1}^b\tilde{\nu}^{(i)}\parentheseLigne{\rmd x^{\parentheseLigne{i}}}\eqsp.
	\end{aligned}
\end{equation*}
Consider then, the Markov kernel on $\rset^d \times \mcb(\rset^d)$,
\begin{equation}
  \label{eq:def_r_k_l_gamma_alpha_l}
  R_{\gamma,\alpha,l,\eta_0}^{(k)}(\theta_0,\msa) = Q_{\gamma,\alpha,l}^{(k)}((\theta_0,\theta_0,\eta_0),\msa \times \rset^d \times \rset^{db}) \eqsp.
\end{equation}

Define
\begin{equation}
	\label{eq:gamma_alpha_QLSDpp}
\bgamma_{\alpha} = \bgamma_{\alpha,2}  \wedge \bgamma_4 \eqsp, \qquad \bgamma_4 = 1/(10 \mtt)\eqsp,
\end{equation}
where $\bgamma_{\alpha,2}$ is defined in \eqref{eq:def_bgamma_2}.
The following theorem provides a non-asymptotic convergence bound for the \texttt{QLSD}$^{++}$ kernel.
\begin{theorem}\label{theorem_QLSDpp_supp}
  Assume \Cref{ass:potential_U}, \Cref{ass:compression}, \Cref{ass:A_k_supp} and \Cref{ass:potential_fij}. 
and let $l \in \N^*$.
Then, for any probability measure $\mu\in\mathcal{P}_{2}\parentheseLigne{\Rd
}$, $\eta_0 \in \rset^{db}$, $\alpha \in \ocint{0,1/(1+\max_{i\in[b]}\omega_i)}$, $\gamma \in \ocint{0,\bgamma_{\alpha}}$, and $k = sl+r \in \N$ with $s \in\nset$, $r\in\{0,\ldots,l-1\}$, we have
  \begin{align*}
    \wass^{2}\parentheseLigne{\mu   R_{\gamma,\alpha,l,\eta_0}^{(k)},\pi}
    &\le (1-\gamma \mtt/2)^k\wass^{2}\parentheseLigne{\mu,\pi} + (2\gamma/\mathtt{m})(1-\gamma\mathtt{m}/2)^sD_{\mathbf{n},\mathbf{N}}\int_{\rset^d}\psi(\theta_0,\eta_0) \rmd \mu(\theta_0) \\
    &\qquad \qquad+ (4\gamma/\mathtt{m})\br{\sum_{i=1}^b(\omega_i+1-p_i)/p_i}(1-\alpha)^k\sum_{i=1}^b\norm{\nabla U_i(\theta^\star) - \eta^{(i)}_0}^2 + \gamma\mathrm{B}_{\oplus,\bgamma_{\alpha}}\eqsp,
  \end{align*}
  where $ R_{\gamma,\alpha,l,\eta_0}^{(k)}$ is defined in \eqref{eq:def_r_k_l_gamma_alpha_l}, $\psi$ is defined in \eqref{eq:lyapunov}, $D_{n,N}$ in \eqref{eq:D_n_N} 
  and
     \begin{align}
    	\label{eq:def:D_gamma_QLSDpp}
    	\mathrm{B}_{\oplus,\bgamma_{\alpha}} 
    	&= 2d\lip^2\parentheseLigne{1/\mU+5\bgamma_{\alpha}}\br{1+\bgamma_{\alpha}\lip^2/(2\mU)+\bgamma_{\alpha}^{2}\lip^{2}/12}/\mtt + 96ld\pr{\sum_{i=1}^b \Mtt_i(\omega_i+1)(\Mtt_i + \barM A_{n_i,N_i})/p_i}/\mtt^2\eqsp.
    \end{align}
\end{theorem}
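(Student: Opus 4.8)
The plan is to reproduce the synchronous coupling argument behind \Cref{thm:bound:sec_continuous_vs_cv}, but to feed into it the uniform-in-$k$ control of the stochastic-gradient variance provided by \Cref{proposition:varianceQLSDpp_final} rather than the one-step bound of \Cref{ass:tgrad:sharp}. The first observation is that the auxiliary coordinates $(\zeta_k,\{\eta^{(i)}_k\}_{i\in[b]})$ influence the recursion \eqref{eq:recursion_theta_QLSDpp} for $\theta_k$ only through $\tilde{G}(\Theta_k;X_{k+1})$, and that by \Cref{ass:compression} and the definition \eqref{eq:definition_Gi_QLSDpp} one has $\E^{\mathcal{G}_k}[\tilde{G}(\Theta_k;X_{k+1})]=\nabla U(\theta_k)$; hence, exactly as in \Cref{thm:bound:sec_continuous_vs_cv}, the coupling is built only at the level of $\theta$, the measure $\mu R_{\gamma,\alpha,l,\eta_0}^{(k)}$ being the law of $\theta_k$ for the inhomogeneous chain $(\Theta_k)$ started from $\theta_0\sim\mu$, $\zeta_0=\theta_0$ and the given $\eta_0$. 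First I would fix $\mu\in\mathcal{P}_2(\Rd)$ and $\eta_0\in\rset^{db}$, let $(\vartheta_0,\theta_0)$ be an optimal transference plan between $\pi$ and $\mu$ — so $\E[\|\vartheta_0-\theta_0\|^2]^{1/2}=\wass(\mu,\pi)$, using \cite[Theorem 4.1]{Villani2008} and $\pi\in\mathcal{P}_2(\Rd)$ (a consequence of \Cref{ass:potential_U}) — set $\zeta_0=\theta_0$, run $(\vartheta_t)_{t\ge0}$ as the Langevin diffusion \eqref{eq:def:sde_prop} started from $\vartheta_0$ with Brownian motion $(B_t)_{t\ge0}$, and run $(\Theta_k)_{k\in\N}$ from $\Theta_0=(\theta_0,\theta_0,\{\eta^{(i)}_0\}_i)$ following \eqref{eq:recursion_theta_QLSDpp} with the common increments $\sqrt{2}(B_{\gamma(k+1)}-B_{\gamma k})$ in place of $\sqrt{2\gamma}Z_{k+1}$. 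Since $\vartheta_0\sim\pi$ and $\pi P_t=\pi$ for all $t$ by \cite[Theorem 2.1]{Roberts1996}, \cite[Proposition 1]{durmus2018high} gives $\E[\|\vartheta_{\gamma k}-\theta^\star\|^2]\le d/\mathtt{m}$ for every $k$, while $(\vartheta_{\gamma k},\theta_k)$ couples $\pi$ with $\mu R_{\gamma,\alpha,l,\eta_0}^{(k)}$, so it suffices to bound $u_k:=\E[\|\vartheta_{\gamma k}-\theta_k\|^2]$ and use $\wass^2(\mu R_{\gamma,\alpha,l,\eta_0}^{(k)},\pi)\le u_k$.

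Next I would redo, line for line, the one-step computation from the proof of \Cref{lem:bound:sec_continuous_vs_cv}: decompose $\vartheta_{\gamma(k+1)}-\theta_{k+1}$ into a contraction term, a diffusion-drift term and a (now centred) stochastic-gradient term, take the conditional squared norm, apply \eqref{eq:intro:Nesterov} and \Cref{ass:potential_U}, choose $\varepsilon=\mathtt{m}/2$, and bound the drift term by \cite[Lemma~21]{durmus2018high}; the only change is that the contribution $\gamma^2\E^{\mathcal{G}_k}[\|\tilde{G}(\Theta_k;X_{k+1})-\nabla U(\theta_k)\|^2]$ is carried along without being bounded through \Cref{ass:tgrad:sharp}. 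Taking total expectations and using $\E[\|\vartheta_{\gamma k}-\theta^\star\|^2]\le d/\mathtt{m}$, this yields, with $\kappa_\gamma=1-\gamma\mathtt{m}(1-5\gamma\mathtt{m})$ and $V_k:=\E[\|\tilde{G}(\Theta_k;X_{k+1})-\nabla U(\theta_k)\|^2]$,
\begin{equation*}
u_{k+1}\le\kappa_\gamma u_k+\gamma^2 V_k+(\mathtt{m}^{-1}+5\gamma)[\,d\gamma^2+d\gamma^4\lip^2/12+\gamma^3\lip^2 d/(2\mathtt{m})\,]\eqsp,
\end{equation*}
for $\gamma$ below the threshold \eqref{eq:gamma_alpha_QLSDpp} (which, through $\bgamma_4=1/(10\mathtt{m})$, guarantees $\kappa_\gamma\le1-\gamma\mathtt{m}/2$ and the admissibility of the drift and variance estimates). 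I would then iterate this recursion and substitute, for $\gamma\le\bgamma_\alpha\le\bgamma_{\alpha,2}$ and $\alpha\le1/(1+\omega)$, the bound of \Cref{proposition:varianceQLSDpp_final},
\begin{equation*}
V_j\le(1-\gamma\mathtt{m}/2)^{\floor{j/l}}D_{n,N}\int_{\Rd}\psi(\theta_0,\eta_0)\,\mu(\rmd\theta_0)+\frac{4ldD_{n,N}}{\mathtt{m}}+2\omega(1-\alpha)^j\sum_{i=1}^b\|\nabla U_i(\theta^\star)-\eta^{(i)}_0\|^2\eqsp,
\end{equation*}
which leaves the sum $\sum_{j=0}^{k-1}\kappa_\gamma^{k-1-j}V_j$ to be controlled term by term.

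The estimation of these geometric sums is where the real work lies, and I expect it to be the main obstacle. The constant parts are multiplied by $\sum_{j=0}^{k-1}\kappa_\gamma^{k-1-j}\le2/(\gamma\mathtt{m})$ and, combined with the diffusion-drift contribution and the inequality $D_{n,N}\le(4A_{n,N}\barM\lip+2b\mathtt{M}^2)(3\omega+1)$ coming from \eqref{eq:D_n_N} and \eqref{eq:def_C_n_N}, assemble into the term $\gamma\mathrm{B}_{\oplus,\bgamma_\alpha}$ with $\mathrm{B}_{\oplus,\bgamma_\alpha}$ as in \eqref{eq:def:D_gamma_QLSDpp}. For the $(1-\gamma\mathtt{m}/2)^{\floor{j/l}}$ part I would split the sum over the blocks on which $\floor{j/l}$ is constant and use $\kappa_\gamma\le1-\gamma\mathtt{m}/2$, arriving at a bound of order $(1/(\gamma\mathtt{m}))(1-\gamma\mathtt{m}/2)^{\floor{k/l}}$; multiplied by $\gamma^2D_{n,N}\int\psi\,\rmd\mu$ this produces the term $(2\gamma/\mathtt{m})(1-\gamma\mathtt{m}/2)^{\floor{k/l}}D_{n,N}\int_{\Rd}\psi(\theta_0,\eta_0)\,\mu(\rmd\theta_0)$. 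For the memory part, the crucial fact is that $\gamma\le\bgamma_\alpha\le\bgamma_{\alpha,1}\le\alpha/(3\mathtt{m})$ forces $1-\alpha\le\kappa_\gamma$ and $\kappa_\gamma-(1-\alpha)\ge\alpha-\gamma\mathtt{m}\ge2\alpha/3$, so that $\sum_{j=0}^{k-1}\kappa_\gamma^{k-1-j}(1-\alpha)^j\le\kappa_\gamma^k/(\kappa_\gamma-(1-\alpha))\le3\kappa_\gamma^k/(2\alpha)$; multiplying by $2\gamma^2\omega\sum_i\|\nabla U_i(\theta^\star)-\eta^{(i)}_0\|^2$ and using $\gamma^2/\alpha\le\gamma/(3\mathtt{m})$ then yields, after comparing with the $(1-\gamma\mathtt{m}/2)^{\floor{k/l}}$ decay already carried by the $\psi$-term, the memory term $(4\gamma/\mathtt{m})\omega(1-\alpha)^k\sum_i\|\nabla U_i(\theta^\star)-\eta^{(i)}_0\|^2$; the same substitution $\gamma^2/\alpha\le\gamma/(3\mathtt{m})$ also renders $\alpha$-free the $\eta_0$-dependent part hidden inside $\psi$ through its explicit $\gamma^2/\alpha$ prefactor in \eqref{eq:lyapunov}. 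Finally $\kappa_\gamma^k u_0\le(1-\gamma\mathtt{m}/2)^k\wass^2(\mu,\pi)$, and gathering the four pieces gives the claimed inequality.

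The genuinely delicate point throughout is that the Wasserstein contraction rate $\kappa_\gamma$, the per-block rate $1-\gamma\mathtt{m}/2$ carried by \Cref{proposition:varianceQLSDpp_final}, and the memory rate $1-\alpha$ are all of the form $1-\mathrm{O}(\gamma\mathtt{m})$, so the geometric series above are nearly critical and a careless estimate would lose a factor growing in $k$; the precise shape of $\bgamma_\alpha$ in \eqref{eq:gamma_alpha_QLSDpp}, \eqref{eq:def_bgamma_2} and \eqref{eq:def_bgamma_1} — especially the constraint $\gamma\mathtt{m}\le\alpha/3$, which separates $1-\alpha$ from $\kappa_\gamma$ and lets every $\alpha^{-1}$ be absorbed as $\mathrm{O}(\gamma/\mathtt{m})$ — is exactly what makes the bookkeeping go through.
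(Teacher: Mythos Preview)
Your approach is essentially the paper's: the synchronous coupling of \Cref{lem:bound:sec_continuous_vs_cv} is rerun verbatim, with the only change being that the one-step variance bound coming from \Cref{ass:tgrad:sharp} is replaced by the uniform-in-$k$ estimate of \Cref{proposition:varianceQLSDpp_final}; the paper's proof does exactly this and then stops at the one-step recursion without spelling out the geometric sums, so your write-up is in fact more detailed than the original on that final step.

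One point deserves a closer look. In your handling of the memory contribution you bound $\sum_{j=0}^{k-1}\kappa_\gamma^{k-1-j}(1-\alpha)^j\le 3\kappa_\gamma^k/(2\alpha)$, which after multiplying by $2\gamma^2\omega$ and using $\gamma^2/\alpha\le\gamma/(3\mathtt{m})$ yields a term of size $(\gamma\omega/\mathtt{m})\kappa_\gamma^k$, hence decaying like $(1-\gamma\mathtt{m}/2)^k$, \emph{not} like $(1-\alpha)^k$. Since $\gamma\le\bgamma_{\alpha,1}$ forces $1-\alpha<\kappa_\gamma$, the sum is genuinely dominated by $\kappa_\gamma^k$ and you cannot extract the faster rate $(1-\alpha)^k$ from it; the phrase ``after comparing with the $(1-\gamma\mathtt{m}/2)^{\floor{k/l}}$ decay already carried by the $\psi$-term'' does not bridge this gap. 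The paper's own proof is silent on this iteration, so this is not a divergence from the paper's argument, but if you want the stated $(1-\alpha)^k$ factor you would need either a different bookkeeping of the memory sequence (bypassing the use of \Cref{proposition:varianceQLSDpp_final} for that part) or to accept the bound with $(1-\gamma\mathtt{m}/2)^k$ in place of $(1-\alpha)^k$, which is consistent with the $\min$ appearing in the main-text version of the theorem.
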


\begin{proof}

	Let $k \in\nset$.
	The proof follows from the same lines as \Cref{cor:bound:sec_continuous_vs_cv}.
	By \eqref{eq:def:sde_prop} and \eqref{eq:recursion_theta_QLSDpp}, we have
	\begin{multline*}
	\vartheta_{\gamma(k+1)}-\theta_{k+1}=\vartheta_{\gamma k}-\theta_{k}-\gamma\br{\nabla\U\parentheseLigne{\vartheta_{\gamma k}}-\nabla\U\parentheseLigne{\theta_{k}}}\\
	-\int_{0}^{\gamma}\br{\nabla\U\parentheseLigne{\vartheta_{\gamma k+s}}-\nabla\U\parentheseLigne{\vartheta_{\gamma k}}}\,\rmd s
	+\gamma\br{\tilde{G}(\Theta_{k};X_{k+1})-\nabla\U\parentheseLigne{\theta_{k}}}\eqsp.
	\end{multline*}
	Define the filtration $\parentheseLigne{\mathcal{H}_{\tilde{k}}}_{\tilde{k} \in \mathbb{N}}$ as $\mathcal{H}_{0}=\sigma\parentheseLigne{\vartheta_{0},\Theta_{0}}$ and for $\tilde{k} \in \mathbb{N}^{*}$,
	\begin{align*}
	  \mathcal{H}_{\tilde{k}} &= \sigma\parentheseLigne{\vartheta_{0},\Theta_{0},\parentheseLigne{X_{l}^{(1)} , \ldots, X_{l}^{(b)}}_{1\le l\le \tilde{k}},\parentheseLigne{B_{t}}_{0\le t\le \gamma \tilde{k}}}\eqsp.
	\end{align*}

Note that since $\sequencet{\vartheta}[t][0]$ is a strong solution of \eqref{eq:def:sde_prop}, then is easy to see that $(\vartheta_{\gamma \tilde{k}}, \Theta_{\tilde{k}})_{\tilde{k}\in\nset}$ is $(\mathcal{H}_{\tilde{k}})_{\tilde{k}\in\nset}$-adapted. 
Taking the squared norm and the conditional expectation with respect to $\mathcal{H}_{k}$, we obtain using \Cref{ass:tgrad:sharp}-\ref{ass:tgrad:sharp:2} that
\begin{align}
	&\nonumber\E^{\mathcal{H}_{k}}\br{\norm{\vartheta_{\gamma(k+1)}-\theta_{k+1}}^{2}}
    =\norm{\vartheta_{\gamma k}-\theta_{k}}^{2}
    -2\gamma\ps{\vartheta_{\gamma k}-\theta_{k}}{\nabla\U\parentheseLigne{\vartheta_{\gamma k}}-\nabla\U\parentheseLigne{\theta_{k}}}\\
    &\qquad\qquad\qquad\nonumber+2\gamma\int_{0}^{\gamma}\ps{\nabla\U\parentheseLigne{\vartheta_{\gamma k}}-\nabla\U\parentheseLigne{\theta_{k}}}{\E^{\mathcal{H}_{k}}\br{\nabla\U\parentheseLigne{\vartheta_{\gamma k+u}}-\nabla\U\parentheseLigne{\vartheta_{\gamma k}}}}\,\rmd u\\
    &\qquad\qquad\qquad\nonumber-2\int_{0}^{\gamma}\ps{\vartheta_{\gamma k}-\theta_{k}}{\E^{\mathcal{H}_{k}}\br{\nabla\U\parentheseLigne{\vartheta_{\gamma k+u}}-\nabla\U\parentheseLigne{\vartheta_{\gamma k}}}}\,\rmd u\\
    &\qquad\qquad\qquad\nonumber+\gamma^{2}\norm{\nabla\U\parentheseLigne{\vartheta_{\gamma k}}-\nabla\U\parentheseLigne{\theta_{k}}}^{2}\\
    &\qquad\qquad\qquad\nonumber+\E^{\mathcal{H}_{k}}\br{\norm{\int_{0}^{\gamma}\br{\nabla\U\parentheseLigne{\vartheta_{\gamma k+u}}-\nabla\U\parentheseLigne{\vartheta_{\gamma k}}}\,\rmd u}^{2}}\\
    &\qquad\qquad\qquad+\gamma^{2}\E^{\mathcal{H}_{k}}\br{\norm{\tilde{G}(\Theta_{k};X_{k+1})-\nabla\U\parentheseLigne{\theta_{k}}}^{2}}\eqsp.\label{eq:proof_thm_QLSDpp_1}
\end{align}

Using \Cref{proposition:varianceQLSDpp_final}, we obtain
\begin{align}
	\nonumber
	\mathbb{E}\br{\norm{\tilde{G}(\Theta_{k};X_{k+1}) - \nabla U(\theta_{k})}^2} &\le (1-\gamma \mathtt{m}/2)^{\floor{k/l}}D_{\mathbf{n},\mathbf{N}}\psi(\theta_0,\eta_0) + 4ldD_{\mathbf{n},\mathbf{N}}/\mtt\\
	&\qquad \qquad+ \br{2\sum_{i=1}^b(\omega_i+1-p_i)/p_i}(1-\alpha)^k\sum_{i=1}^b \mathbb{E}\br{\norm{\nabla U_{i}(\theta^\star) - \eta^{(i)}_{0}}^2}\eqsp.
	\label{eq:proof_thm_QLSDpp_2}
	\end{align}
Then, we control the remaining terms in \eqref{eq:proof_thm_QLSDpp_1} using \eqref{eq:bound:ps_un_pp_cv}, \eqref{eq:bound:ps_deux_pp_cv} and \eqref{eq:bound:ps_{t}rois_pp_cv}.
Combining these bounds and \eqref{eq:proof_thm_QLSDpp_2} into \eqref{eq:proof_thm_QLSDpp_1}, for any $\varepsilon >0$, yields

\begin{align*}
	\nonumber
    \E\br{\norm{\vartheta_{\gamma(k+1)}-\theta_{k+1}}^{2}}
    &\le\parentheseLigne{1+2\gamma\varepsilon-5\gamma^2\mU\lip}\E\br{\norm{\vartheta_{\gamma k}-\theta_{k}}^{2}}\\
    \nonumber
	&-\gamma\br{2-5\gamma\parentheseLigne{\mU+\lip}}\E\br{\ps{\vartheta_{\gamma k}-\theta_{k}}{\nabla\U\parentheseLigne{\vartheta_{\gamma k}}-\nabla\U\parentheseLigne{\theta_{k}}}}\\
    \nonumber
	&+\parentheseLigne{5\gamma+(2\varepsilon)^{-1}}\int_{0}^{\gamma}\E\br{\norm{\nabla\U\parentheseLigne{\vartheta_{\gamma k+u}}-\nabla\U\parentheseLigne{\vartheta_{\gamma k}}}^{2}}\,\rmd u\\
    \nonumber
	&+\gamma^2(1-\gamma \mathtt{m}/2)^{\floor{k/l}}D_{\mathbf{n},\mathbf{N}}\E\br{\psi(\theta_0,\eta_0)} + 4ldD_{\mathbf{n},\mathbf{N}}/\mtt\\
		&+ 2\gamma^2\br{\sum_{i=1}^b(\omega_i+1-p_i)/p_i}(1-\alpha)^k\sum_{i=1}^b \norm{\nabla U_{i}(\theta^\star) - \eta^{(i)}_{0}}^2\eqsp.
\end{align*}
Next, we use that under \Cref{ass:potential_U},  $\psLigne{\vartheta_{\gamma k}-\theta_{k}}{\nabla\U\parentheseLigne{\vartheta_{\gamma k}}-\nabla\U\parentheseLigne{\theta_{k}}}\ge \mU \normLigne{\vartheta_{\gamma k}-\theta_{k}}^{2}$ and $\absLigne{\psLigne{\theta_{k}-\theta^{\star}}{\nabla\U(\theta_{k})-\nabla\U(\theta^{\star})}}\le \lip\normLigne{\theta_{k}-\theta^{\star}}^2$,
which implies taking $\varepsilon = \mtt /2$ and since $2 - 5\gamma(\mtt + \Ltt) \geq 0$,
\begin{align}
	\nonumber
	\E\br{\norm{\vartheta_{\gamma(k+1)}-\theta_{k+1}}^{2}}
	&\le\parentheseLigne{1-\gamma\mU(1-5\gamma\mU)}\E\br{\norm{\vartheta_{\gamma k}-\theta_{k}}^{2}}\\
	\nonumber
	&+\parentheseLigne{5\gamma+\mtt^{-1}}\int_{0}^{\gamma}\E\br{\norm{\nabla\U\parentheseLigne{\vartheta_{\gamma k+u}}-\nabla\U\parentheseLigne{\vartheta_{\gamma k}}}^{2}}\,\rmd u\\
	\nonumber
	&+\gamma^2(1-\gamma \mathtt{m}/2)^{\floor{k/l}}D_{\mathbf{n},\mathbf{N}}\E\br{\psi(\theta_0,\eta_0)} + 4ldD_{\mathbf{n},\mathbf{N}}/\mtt\\
	&+ 2\gamma^2\br{\sum_{i=1}^b(\omega_i+1-p_i)/p_i}(1-\alpha)^k\sum_{i=1}^b \norm{\nabla U_{i}(\theta^\star) - \eta^{(i)}_{0}}^2\eqsp.\label{proof_thm_QLSDpp_4}
\end{align}

Further, for any $u\in\R_+$, using \citet[Lemma 21]{durmus2018high} we have 
\begin{equation*}
\lip^{-2}\,\E\br{\norm{\nabla\U\parentheseLigne{\vartheta_{\gamma k+u}}-\nabla\U\parentheseLigne{\vartheta_{\gamma k}}}^{2}}
\le d u\pr{2+u^{2}\lip^{2}/3}+3u^{2}\lip^{2}/2\E\br{\norm{\vartheta_{\gamma k}-\theta^{\star}}^{2}}\eqsp.
\end{equation*}
Integrating the previous inequality on $\br{0,\gamma}$, we obtain
\begin{equation*}
\lip^{-2}\int_{0}^{\gamma}\E\br{\norm{\nabla\U\parentheseLigne{\vartheta_{\gamma k+u}}-\nabla\U\parentheseLigne{\vartheta_{\gamma k}}}^{2}}\,\rmd u
\le d\gamma^2+d\gamma^4\lip^{2}/12+\gamma^{3}\lip^{2}/2\E\br{\norm{\vartheta_{\gamma k}-\theta^{\star}}^{2}}\eqsp.
\end{equation*}
Plugging this bounds in \eqref{proof_thm_QLSDpp_4} and using \citet[Proposition~1]{durmus2018high} complete the proof.

\end{proof}



\section{CONSISTENCY ANALYSIS IN THE BIG DATA REGIME}

In this section, we assume that the number of observations on each client $i \in [b]$ writes $N_i = \floor{c_i N}$ where $\{c_i > 0\}_{i \in [b]}$, $N \in \mathbb{N}^*$, and provide upper bounds on the asymptotic bias associated to each algorithm when $N$ tends towards infinity. 
For simplicity, we assume for any $i \in [b]$, that $n_i = \floor{c_i n}$ with $n \in [N]$, $\Mtt_i = \Mtt$ with $\Mtt > 0$, $p_i = 1$ and $\omega_i = \omega$ with $\omega > 0$ but note that our conclusions also hold for the general setting considered in this paper.

\subsection{Asymptotic analysis for \Cref{algo:QLSD}}

The following corollary is associated with \texttt{QLSD} defined in \Cref{algo:QLSD} in the main paper. 

\begin{corollary}\label{cor:bound:qlsd_bias}
  Assume \Cref{ass:potential_U}, \Cref{ass:compression} \Cref{ass:stochastic_gradient} and \Cref{ass:A_k_supp}.
  In addition, assume that $\liminf_{N \rightarrow \infty} \mathtt{m}/N > 0$ and $\limsup_{N \rightarrow \infty} \mathtt{A}/N <\infty$ for $\mathtt{A} \in \{\lip,\Mtt,\Btt^\star,\sigma_\star\}$.
  Then, we have $\bgamma = \bar{\eta}/N$ where $\bar{\eta} >0$ and $\bar{\gamma}$ is defined in \eqref{eq:bar_gamma}.
  In addition, 
  \begin{equation*}
    B_{\bar{\gamma}} = (\omega+1)\Oh(N)\eqsp,
  \end{equation*}
  where $B_{\bar{\gamma}}$ is defined in \eqref{eq:def:E_gamma}.
\end{corollary}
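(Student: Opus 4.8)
The plan is to simply track the $N$-dependence of every constant appearing in the explicit formula \eqref{eq:def:E_gamma} for $B_{\bgamma}$, under the scaling $\gamma = \eta/N$ and the growth hypotheses on $\mtt,\lip,\Mtt,\Btt^\star,\sigmas$. First I would recall from \eqref{eq:def:E_gamma} that
\[
  B_{\bar{\gamma}} = (2d/\mtt)\pr{1/\mU+5\bar{\gamma}}\br{1+\bar{\gamma} \lip^2/(2\mU)+\bar{\gamma}^{2} \lip^2/12}+4\parentheseLigne{\omega \Bs+\sigmas^{2}}/\mtt + 4\parentheseLigne{\omega+1}\lip\MH\br{d+\bar{\gamma}\prn{\omega \Bs+\sigmas^{2}}}/\mU^2\eqsp,
\]
and that $\bgamma = \bgamma_1 \wedge \bgamma_2 \wedge \bgamma_3$ with $\bgamma_1 = 2/[5(\mtt+\lip)]$, $\bgamma_2 = (\mtt+\lip+\tMtt)^{-1}$, $\bgamma_3 = (10\mtt)^{-1}$, where in the setting of \Cref{algo:QLSD} one has $\tMtt = 2(\omega+1)\Mtt$. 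The first step is to establish $\bgamma = \bar\eta/N$ for some $\bar\eta>0$: since $\mtt, \lip, \Mtt$ are all $\Theta(N)$ by the $\liminf/\limsup$ hypotheses (the $\liminf_{N}\mtt/N>0$ gives the lower bound and strong convexity forces $\mtt \le \lip$, while $\limsup$ on the others together with $\mtt \le \lip$ and $\Mtt$ bounding $\lip$-type quantities pins them between two multiples of $N$), each of $\bgamma_1,\bgamma_2,\bgamma_3$ is $\Theta(1/N)$, hence so is their minimum; one then absorbs constants into $\bar\eta$.

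Next I would substitute these orders termwise. With $\bgamma = \bar\eta/N = \Oh(1/N)$ and $\mtt^{-1} = \mU^{-1} = \Oh(1/N)$: the term $(2d/\mtt)(1/\mU+5\bgamma)[\cdots]$ is $d\cdot\Oh(1/N)\cdot\Oh(1/N)\cdot\Oh(1) = d\,\Oh(1/N^2)$ (the bracket is $1+\Oh(1/N)\cdot\lip^2\cdot\Oh(1/N)+\cdots = \Oh(1)$ since $\bgamma\lip = \Oh(1)$). The term $4(\omega\Bs+\sigmas^2)/\mtt$ is $(\omega+1)\cdot\Oh(N)\cdot\Oh(1/N) = (\omega+1)\Oh(1)$ using $\Bs,\sigmas^2 = \Oh(N)$ (indeed $\sigmas = \Oh(N)$ by hypothesis so $\sigmas^2 = \Oh(N^2)$; here I must be careful — if $\sigmas = \Theta(N)$ then $\sigmas^2/\mtt = \Theta(N)$, which is consistent with the claimed $\Oh(N)$, and $\omega\Bs/\mtt = \Theta(1)$ if $\Bs=\Theta(N)$; the dominant contribution is $\sigmas^2$). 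The last term $4(\omega+1)\lip\MH[d+\bgamma(\omega\Bs+\sigmas^2)]/\mU^2$: here $\lip\MH/\mU^2 = \Oh(N)\Oh(N)/\Oh(N^2) = \Oh(1)$, and $d+\bgamma(\omega\Bs+\sigmas^2) = d + \Oh(1/N)\cdot\Oh(N^2) = d + \Oh(N) = \Oh(N)$; hence this term is $(\omega+1)\,\Oh(1)\cdot\Oh(N) = (\omega+1)\Oh(N)$. Collecting, the dominant contributions are $\Oh(N)$, giving $B_{\bgamma} = (\omega+1)\Oh(N)$.

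The main obstacle — really the only subtle point — is keeping straight which growth exponent each quantity carries under the hypothesis $\limsup_N \mathtt{A}/N > 0$ versus what an \emph{upper} bound $\mathtt{A} = \Oh(N)$ requires: the $\limsup$ condition as literally stated is a \emph{lower} bound assertion, so to conclude $\mathtt{A} = \Oh(N)$ (i.e.\ $\limsup \mathtt{A}/N < \infty$) one must invoke the finite-sum structure, namely $\lip \le \Mtt = N\bMtt$ and $\sigmas^2 \le N \cdot (\text{per-client bound})$, $\Bs = \sum_i\|\nabla U_i(\thetas)\|^2$ growing at most linearly — these are exactly the standard facts recalled after \Cref{ass:stochastic_gradient} and in \cite{Brosse2018}. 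Once the two-sided bound $\mathtt{A} = \Theta(N)$ for $\mathtt{A}\in\{\mtt,\lip,\Mtt\}$ and $\mathtt{A} = \Oh(N^2)$, $\mathtt{A}=\Oh(N)$ appropriately for $\sigmas^2,\Bs$ is in hand, the substitution above is a routine bookkeeping exercise. I would therefore write the proof as: (i) verify $\bgamma = \bar\eta/N$ from the definition \eqref{eq:bar_gamma}; (ii) tabulate the order of $\mtt^{-1}, \bgamma, \lip, \MH, \Bs, \sigmas^2$; (iii) plug into \eqref{eq:def:E_gamma} and read off that each summand is $(\omega+1)\Oh(N)$ at worst, with the $\lip\MH\bgamma\sigmas^2/\mU^2$ and $\sigmas^2/\mtt$ terms achieving the bound.
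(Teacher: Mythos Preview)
Your proposal is correct and follows essentially the same approach as the paper: substitute the $\Theta(N)$ orders of $\mtt,\lip,\Mtt,\Bs,\sigmas$ and $\bgamma=\Theta(1/N)$ directly into \eqref{eq:def:E_gamma} and read off that the dominant contributions ($\sigmas^2/\mtt$ and $(\omega+1)\lip\MH\bgamma\sigmas^2/\mU^2$) are $(\omega+1)\Oh(N)$. You are in fact more scrupulous than the paper about the logical gap between ``$\limsup_N \mathtt{A}/N>0$'' (a lower-bound statement) and the upper bounds $\mathtt{A}\le C_{\mathtt{A}}N$ actually needed; the paper simply asserts the latter from the former without comment, whereas you correctly point to the finite-sum structure as the real source of the upper bounds.
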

\begin{proof}
	Since we assume that $\liminf_{N \rightarrow \infty} \mathtt{m}/N > 0$ and $\limsup_{N \rightarrow \infty} \mathtt{A}/N <\infty$ for $\mathtt{A} \in \{\lip,\Mtt,\Btt^\star,\sigma_\star\}$, there exist $C_{\mtt}$, $C_{\lip}$, $C_{\Mtt}$, $C_{\Btt^\star}$ and $C_{\sigma_{\star}} > 0$ such that $\mtt \geq C_{\mtt} N$, $\lip \le C_{\lip} N$, $\Mtt \le C_{\Mtt} N$, $\Btt^\star \le C_{\Btt^{\star}} N$ and $\sigma_\star \le C_{\sigma_{\star}} N$. 
  	Under these assumptions, it is straightforward from \eqref{eq:bar_gamma} to see that there exists $\bar{\eta} > 0$ such that $\bar{\gamma} = \bar{\eta}/N$. In addition, it follows from \eqref{eq:def:E_gamma} that		
  \begin{equation*}
      B_{\bar{\gamma}} \le \frac{2dC_{\lip}^2}{C_{\mtt}}\pr{\frac{1}{C_{\mtt}}+5\bar{\eta}}\br{1+\frac{\bar{\eta}C_{\lip}^2}{2C_{\mtt}}+\frac{\bar{\eta}^{2} C_{\lip}^2}{12}}
      +\frac{4}{C_{\mtt}}\parentheseLigne{\omega C_{\Bs}+C_{\sigmas}^{2}N}
      + \frac{8\parentheseLigne{\omega+1}C_{\lip}C_{\MH}}{C_{\mU}^2} \br{d+\bar{\eta}\pr{\omega C_{\Bs}+C_{\sigmas}^{2}N}}\eqsp.
  \end{equation*}
  The proof is concluded by letting $N$ tend towards infinity.
\end{proof}

Regarding the specific instance \texttt{QLSD}$^{\#}$ of \Cref{algo:QLSD} in the main paper, a similar result holds.
Indeed, by using \Cref{lem:var_grad_sto}, we can notice that \Cref{ass:stochastic_gradient}-\ref{ass:stochastic_gradient:3} is verified with $\sigmas = C_{\sigmas} N$ for some $C_{\sigmas} > 0$ and we can apply \Cref{cor:bound:qlsd_bias}.

\subsection{Asymptotic analysis for \Cref{algo:QLSD-VR}}

The following corollary is associated with \texttt{QLSD}$^\star$ defined in \Cref{algo:QLSD-VR} in the main paper. 

\begin{corollary}\label{cor:bound:qlsds_bias}
  Assume \Cref{ass:potential_U}, \Cref{ass:compression}, \Cref{ass:A_k_supp} and  \Cref{ass:potential_fij}.
  In addition, assume that $\liminf_{N \rightarrow \infty} \mathtt{m}/N > 0$ and $\limsup_{N \rightarrow \infty} \mathtt{A}/N <\infty$ for $\mathtt{A} \in \{\lip,\Mtt\}$.
  Then, we have $\bgamma = \bar{\eta}/N$ where $\bar{\eta} >0$ and $\bar{\gamma}$ is defined in \eqref{eq:bar_gamma}.
  In addition, 
  \begin{equation*}
    B_{\ostar,\bar{\gamma}} = d(\omega+1)\Oh(1)\eqsp,
  \end{equation*}
  where $B_{\ostar,\bgamma}$ is defined in \eqref{B_gamma_QLSDs}.
\end{corollary}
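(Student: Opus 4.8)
The plan is to mimic exactly the argument used for \Cref{cor:bound:qlsd_bias}, substituting the explicit formula \eqref{B_gamma_QLSDs} for $B_{\ostar,\bar\gamma}$ in place of \eqref{eq:def:E_gamma}. First I would record the consequences of the scaling hypotheses: since $\liminf_{N\to\infty}\mathtt m/N>0$ and $\limsup_{N\to\infty}\mathtt A/N>0$ for $\mathtt A\in\{\lip,\Mtt\}$, there are constants $C_{\mtt},C_{\lip},C_{\Mtt}>0$ with $\mtt\ge C_{\mtt}N$, $\lip\le C_{\lip}N$ and $\Mtt\le C_{\Mtt}N$ for $N$ large. Plugging these into the definition \eqref{eq:bar_gamma} of $\bar\gamma=\bgamma_1\wedge\bgamma_2\wedge\bgamma_3$ shows each of the three terms is $\Theta(1/N)$, hence $\bar\gamma=\bar\eta/N$ for some $\bar\eta>0$; this is the same computation as in the proof of \Cref{cor:bound:qlsd_bias}.

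Next I would substitute into \eqref{B_gamma_QLSDs}:
\begin{equation*}
B_{\ostar,\bar\gamma} = (2d/\mtt)\pr{1/\mU+5\bar\gamma}\br{1+\bar\gamma\lip^2/(2\mU)+\bar\gamma^2\lip^2/12} + 2\lip d[N\omega+(\omega+1)A_{n,N}]\bMH/\mtt^2 .
\end{equation*}
The crucial observation is that the quantities $\bMH$, $n$ and $A_{n,N}=N(N-n)/(n(N-1))$ do \emph{not} scale with $N$ in the sense relevant here: $\bMH$ is a per-datum constant (independent of $N$ by \Cref{ass:potential_Ui}), and for fixed minibatch ratio $A_{n,N}=\Oh(N)$ while $N\omega=\Oh(N)$ as well, so the bracket $N\omega+(\omega+1)A_{n,N}=(\omega+1)\Oh(N)$. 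Using $\mtt\ge C_{\mtt}N$, $\lip\le C_{\lip}N$, $\bar\gamma=\bar\eta/N$, the first summand is $O(d/N^2)\cdot O(1)=d\,\Oh(1/N)$, and the second summand is $\le 2C_{\lip}N\cdot d\cdot (\omega+1)\Oh(N)\cdot\bMH/(C_{\mtt}^2N^2)=d(\omega+1)\Oh(1)$. Hence $B_{\ostar,\bar\gamma}=d(\omega+1)\Oh(1)$, as claimed.

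A minor point to address carefully is the behaviour of $A_{n,N}$: I would note that $A_{n,N}/N=(N-n)/(n(N-1))\to 1/n$ as $N\to\infty$ when $n$ is held fixed (or, in the finite-sum instance where $n=\floor{N/c}$, $A_{n,N}=\Oh(1)$, which only makes the bound better), so in either convention $A_{n,N}=\Oh(N)$ and the displayed estimate goes through. The only genuine subtlety — and the place I would be most careful — is bookkeeping the dependence on $\omega$ and $\bMH$ so that the constant in $\Oh(1)$ is indeed independent of $N$ but allowed to depend on $d,\omega$; this is exactly parallel to \Cref{cor:bound:qlsd_bias} and presents no real obstacle. The proof then concludes by letting $N\to\infty$.

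\begin{proof}
Since $\liminf_{N \rightarrow \infty} \mathtt{m}/N > 0$ and $\limsup_{N \rightarrow \infty} \mathtt{A}/N > 0$ for $\mathtt{A} \in \{\lip,\Mtt\}$, there exist $C_{\mtt},C_{\lip},C_{\Mtt}>0$ such that $\mtt \geq C_{\mtt} N$, $\lip \leq C_{\lip} N$ and $\Mtt \leq C_{\Mtt} N$ for $N$ large enough. Plugging these bounds into \eqref{eq:bar_gamma}, each of $\bgamma_1,\bgamma_2,\bgamma_3$ is of order $1/N$, so there exists $\bar{\eta}>0$ with $\bar{\gamma}=\bar{\eta}/N$. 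Moreover, by \Cref{ass:potential_Ui}, $\bMH$ does not depend on $N$, and $A_{n,N}=N(N-n)/(n(N-1))=\Oh(N)$, so that $N\omega+(\omega+1)A_{n,N}=(\omega+1)\Oh(N)$. Substituting all these estimates into \eqref{B_gamma_QLSDs},
\begin{align*}
B_{\ostar,\bar{\gamma}} &\leq \frac{2d}{C_{\mtt}N}\pr{1/C_{\mtt}+5\bar{\eta}/N}\br{1+\bar{\eta}C_{\lip}^2/(2C_{\mtt})+\bar{\eta}^{2}C_{\lip}^2/12} \\
&\qquad+ 2C_{\lip}N\, d\,(\omega+1)\Oh(N)\,\bMH/(C_{\mtt}^2N^2) \\
&= d\,\Oh(1/N) + d(\omega+1)\Oh(1)\eqsp.
\end{align*}
Letting $N$ tend towards infinity concludes the proof.
\end{proof}
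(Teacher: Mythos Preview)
Your proof is correct and follows essentially the same approach as the paper: both substitute the scaling assumptions $\mtt\ge C_{\mtt}N$, $\lip\le C_{\lip}N$, $\Mtt\le C_{\Mtt}N$ and $\bar\gamma=\bar\eta/N$ directly into the explicit formula \eqref{B_gamma_QLSDs}, observe that the first summand vanishes while the second is $d(\omega+1)\Oh(1)$ thanks to $A_{n,N}/N=(N-n)/(n(N-1))$ being bounded. Your bound on the first summand is slightly looser ($\Oh(1/N)$ rather than the paper's $\Oh(1/N^2)$, due to writing $1/C_{\mtt}$ instead of $1/(C_{\mtt}N)$), but this is immaterial to the conclusion.
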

\begin{proof}
  Since we assume that $\liminf_{N \rightarrow \infty} \mathtt{m}/N > 0$ and $\limsup_{N \rightarrow \infty} \mathtt{A}/N <\infty$ for $\mathtt{A} \in \{\lip,\Mtt\}$, there exist $C_{\mtt}$, $C_{\lip}$ and $C_{\Mtt} > 0$ such that $\mtt \geq C_{\mtt} N$, $\lip \le C_{\lip} N$ and $\Mtt \le C_{\Mtt} N$. 
  Under these assumptions, it is straightforward from \eqref{eq:bar_gamma} to see that there exists $\bar{\eta} > 0$ such that $\bar{\gamma}_{\alpha} = \bar{\eta}/N$. In addition, it follows from \eqref{lemma:gammacondition_QLSDpp} that
  \begin{equation*}
      B_{\ostar,\bar{\gamma}}
      \le \frac{2dC_{\lip}^2}{C_{\mtt}}\pr{\frac{1}{C_{\mtt}}+5\bar{\eta}}\br{1+\frac{\bar{\eta}C_{\lip}^2}{2C_{\mtt}}+\frac{\bar{\eta}^{2} C_{\lip}^2}{12}}
      + \frac{4 d \bMH C_{\lip}}{C_{\mtt}^2} \max_{i\in [b]}\ac{c_i\omega+(\omega+1)\cdot \frac{N-n}{n(\floor{c_i N}-1)}}\eqsp.
  \end{equation*}
  The proof is concluded by letting $N$ tend towards infinity.
\end{proof}

Lastly, we have the following asymptotic convergence result regarding \texttt{QLSD}$^{++}$ defined in \Cref{algo:QLSD-VR} in the main paper. 

\begin{corollary}\label{cor:bound:qlsds_bias}
  Assume \Cref{ass:potential_U}, \Cref{ass:compression}, \Cref{ass:A_k_supp} and  \Cref{ass:potential_fij}.
  In addition, assume that $\liminf_{N \rightarrow \infty} \mathtt{m}/N > 0$ and $\limsup_{N \rightarrow \infty} \mathtt{A}/N <\infty$ for $\mathtt{A} \in \{\lip,\Mtt\}$.
  Then, we have $\bgamma_{\alpha} = \bar{\eta}/N$ where $\bar{\eta} >0$ and $\bar{\gamma}_{\alpha}$ is defined in \eqref{eq:gamma_alpha_QLSDpp}.
  In addition, 
  \begin{equation*}
    B_{\oplus,\bar{\gamma}_{\alpha}} = d(\omega+1)\Oh(1)\eqsp,
  \end{equation*}
  where $B_{\oplus,\bgamma_{\alpha}}$ is defined in \eqref{eq:def:D_gamma_QLSDpp}.
\end{corollary}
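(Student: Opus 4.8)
The plan is to follow the template of the two preceding corollaries: translate the hypotheses on $\mtt,\lip,\Mtt$ into two-sided polynomial control in $N$ of every constant entering $\bgamma_{\alpha}$ (see \eqref{eq:gamma_alpha_QLSDpp}) and $B_{\oplus,\bgamma_{\alpha}}$ (see \eqref{eq:def:D_gamma_QLSDpp}), and then read off the orders. First I would fix constants: from $\liminf_{N}\mtt/N>0$ and $\limsup_{N}\mathtt{A}/N>0$ for $\mathtt{A}\in\{\lip,\Mtt\}$ there are, for $N$ large, $C_{\mtt},C_{\lip},C_{\Mtt}>0$ with $\mtt\ge C_{\mtt}N$, $\lip\le C_{\lip}N$, $\Mtt\le C_{\Mtt}N$, while $b,n,l,d,\omega,\alpha,\barM$ are $N$-independent. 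Combining the strong convexity and smoothness parts of \Cref{ass:potential_U} with Cauchy--Schwarz gives $\mtt\le\lip$, hence $C_{\mtt}N\le\mtt\le C_{\lip}N$, i.e. $\mtt=\Theta(N)$. Since $A_{n,N}=N(N-n)/\{n(N-1)\}=\Theta(N)$ for fixed $n$, the definitions \eqref{eq:def_B_n_N}, \eqref{eq:def_C_n_N}, \eqref{eq:D_n_N} show $B_{n,N},C_{n,N},D_{n,N}=\Theta(N^{2})$, each of $A_{n,N}\barM\lip$, $\Mtt^{2}$, $\lip^{2}$ being $\Theta(N^{2})$.

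Next I would establish $\bgamma_{\alpha}=\Theta(1/N)$. In \eqref{eq:def_bgamma_1} the quantity $\mtt^{2}/(B_{n,N}+3\omega C_{n,N})$ is a ratio of two $\Theta(N^{2})$ terms, hence bounded above and below by positive constants, and $\alpha/3$ is a positive constant, so $\bgamma_{\alpha,1}=\mtt^{-1}\cdot\Theta(1)=\Theta(1/N)$; similarly the second term in \eqref{eq:def_bgamma_2} is $\Theta(N)/\Theta(N^{2})=\Theta(1/N)$, and $\bgamma_{4}=1/(10\mtt)=\Theta(1/N)$. Taking minima yields $\bgamma_{\alpha}=\Theta(1/N)$, which is the first assertion $\bgamma_{\alpha}=\bar\eta/N$ in the sense used in the previous corollaries.

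Finally I would plug $\bgamma_{\alpha}=\Theta(1/N)$ into \eqref{eq:def:D_gamma_QLSDpp}. For the first summand, $\mU^{-1}+5\bgamma_{\alpha}=\Oh(1/N)$ and the bracket equals $\Oh(1)$ because $\bgamma_{\alpha}\,\lip^{2}\,\mU^{-1}=\Oh(1/N)\cdot\Oh(N^{2})\cdot\Oh(1/N)=\Oh(1)$ and $\bgamma_{\alpha}^{2}\lip^{2}=\Oh(1/N^{2})\cdot\Oh(N^{2})=\Oh(1)$; dividing by $\mtt=\Theta(N)$, the first summand is $d\,\Oh(1/N^{2})$, which vanishes. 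For the second summand, $4A_{n,N}\barM\lip+2b\Mtt^{2}=\Oh(N^{2})$ while $\mtt^{2}=\Theta(N^{2})$, so it is bounded by $8ld(3\omega+1)\,\Oh(1)\le d(\omega+1)\,\Oh(1)$ using $3\omega+1\le 3(\omega+1)$. Adding the two bounds and letting $N\to\infty$ gives $B_{\oplus,\bgamma_{\alpha}}=d(\omega+1)\Oh(1)$, as claimed.

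The argument is essentially bookkeeping; the two points that deserve care are (i) the two-sided estimate $\mtt=\Theta(N)$, whose upper half relies on $\mtt\le\lip$ — a consequence of \Cref{ass:potential_U} rather than a stated hypothesis — because both $\bgamma_{\alpha}\gtrsim 1/N$ and the vanishing of the first summand of the bias depend on it, and (ii) the fact that $A_{n,N}$ is $\Theta(N)$, not $\Theta(N^{2})$, so that $B_{n,N},C_{n,N},D_{n,N}$ and hence $\bgamma_{\alpha}$ carry the expected powers of $N$.
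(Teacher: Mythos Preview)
Your proposal is correct and follows essentially the same approach as the paper: translate the asymptotic hypotheses into constants $C_{\mtt},C_{\lip},C_{\Mtt}$, verify that $\bgamma_{\alpha}$ scales like $1/N$, substitute into \eqref{eq:def:D_gamma_QLSDpp}, and read off the orders. Your argument is in fact more thorough than the paper's, which simply declares the scaling of $\bgamma_{\alpha}$ to be ``straightforward'' and substitutes directly; your explicit tracking of $A_{n,N}=\Theta(N)$, $B_{n,N},C_{n,N}=\Theta(N^{2})$, and your observation that $\mtt\le\lip$ supplies the upper half of $\mtt=\Theta(N)$ (needed for the lower bound $\bgamma_{\alpha}\gtrsim 1/N$) are useful clarifications the paper omits.
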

\begin{proof}
  Since we assume that $\liminf_{N \rightarrow \infty} \mathtt{m}/N > 0$ and $\limsup_{N \rightarrow \infty} \mathtt{A}/N <\infty$ for $\mathtt{A} \in \{\lip,\Mtt\}$, there exist $C_{\mtt}$, $C_{\lip}$ and $C_{\Mtt} > 0$ such that $\mtt \geq C_{\mtt} N$, $\lip \le C_{\lip} N$ and $\Mtt \le C_{\Mtt} N$. 
  Under these assumptions, it is straightforward from \eqref{eq:gamma_alpha_QLSDpp} to see that there exists $\bar{\eta} > 0$ such that $\bar{\gamma}_{\alpha} = \bar{\eta}/N$. In addition, it follows from \eqref{eq:def:D_gamma_QLSDpp} that
  \begin{equation*}
    	\mathrm{B}_{\oplus,\bgamma_{\alpha}} 
    	\le \frac{2d C_{\lip}^2}{C_{\mtt}}\pr{\frac{1}{C_{\mtt}}+5\bar{\eta}}\br{1+\frac{\bar{\eta}C_{\lip}^2}{2C_{\mtt}}+\frac{\bar{\eta}^{2} C_{\lip}^2}{12}}
    	+ \frac{96(\omega+1)ldbC_{\mathtt{M}}}{C_{\mtt}^2} \pr{\frac{(N-n)\barM}{n(\min_{i\in[b]}\{\floor{c_iN}\}-1)} +C_{\mathtt{M}}}\eqsp.
    \end{equation*}
  The proof is concluded by letting $N$ tend towards infinity.
\end{proof}

\section{EXPERIMENTAL DETAILS}
\label{sec:expresults}

In this section, we provide additional details regarding our numerical experiments.
The code, data and instructions to reproduce our experimental results can be found in the supplementary material.

\subsection{Toy Gaussian example}

\noindent\textbf{Pseudocode of \texttt{LSD}$^\star$.} For completeness, we provide in \Cref{algo:LSD-star} the pseudocode of the non-compressed counterpart of \texttt{QLSD}$^\star$, namely \texttt{LSD}$^\star$.

\begin{algorithm}[h]
   \caption{Variance-reduced Langevin Stochastic Dynamics (\texttt{LSD}$^{\star}$)}
   \label{algo:LSD-star}
  \begin{algorithmic}
     \State {\bfseries Input:} minibatch sizes $\{n_i\}_{i \in [b]}$, number of iterations $K$, step-size $\gamma \in (0,\bar{\gamma}]$ with $\bar{\gamma} > 0$ and initial point $\theta_0$.
     \For{$k=0$ {\bfseries to} $K-1$}
     \For{$i \in \mathcal{A}_{k+1}$ \Comment{On active clients}}
        \State Draw $\mathcal{S}_{k+1}^{(i)} \sim \mathrm{Uniform}\pr{\wp_{N_i,n_i}}$.
        \State {\small Set $H_{k+1}^{(i)}(\theta_k) = (N_i/n_i)\sum_{j \in \mathcal{S}_{k+1}^{(i)}} [\nabla U_{i,j}(\theta_k) - \nabla U_{i,j}(\thetas)]$.}
        \State Compute $\textsl{g}_{i,k+1} = H_{k+1}^{(i)}(\theta_k)$.
        \State Send $\textsl{g}_{i,k+1}$ to the central server.
     \EndFor
     \State \Comment{On the central server}
     \State Compute $\textsl{g}_{k+1} = \frac{b}{|\mathcal{A}_{k+1}|}\sum_{i \in \mathcal{A}_{k+1}}\textsl{g}_{i,k+1}$.
     \State Draw $Z_{k+1} \sim \mathrm{N}(0_d,\mathrm{I}_d)$.
     \State Compute $\theta_{k+1} = \theta_k - \gamma \textsl{g}_{k+1} + \sqrt{2\gamma}Z_{k+1}$.
     \State Send $\theta_{k+1}$ to the $b$ clients.
     \EndFor
     \State {\bfseries Output:} samples $\{\theta_k\}_{k=0}^{K}$.
  \end{algorithmic}
\end{algorithm}

\noindent\textbf{Additional experimental details.}
As highlighted in Section 4 (\emph{Toy Gaussian example} paragraph) in the main paper, the synthetic dataset has been generated so that each client owns a heterogeneous and unbalanced dataset. An illustration of the unbalancedness is given in \Cref{fig:toy_example_supp}. The precise procedure to generate such a dataset can be found in the aforementioned notebook.

\begin{figure}[h]
  \begin{center}
    \mbox{{\includegraphics[scale=0.55]{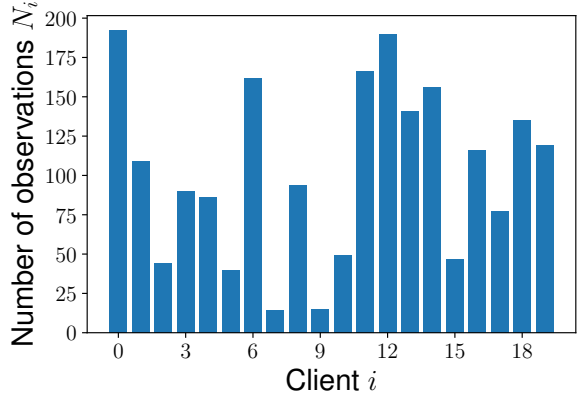}}}
  \end{center}
  \caption{Illustration of the unbalancedness of the synthetic dataset used in the Toy Gaussian experiment.\label{fig:toy_example_supp}}
\end{figure}

To obtain the figure at the bottom row of Figure 1 in the main paper, we launched all the MCMC algorithms with $K = 500,000$ outer iterations and considered a burn-in period of $450,000$ iterations. 
Hence, only the last $50,000$ samples have been used to compute the MSE associated to the test function $f:\theta \mapsto \norm{\theta}$.
In order to compute the expected number of bits transmitted during each upload period, we considered the \texttt{Elias} encoding scheme and used the upper-bounds given in \citet[Theorem 3.2 and Lemma A.2]{alistarh2017qsgd}.

\begin{itemize}
	\item \textbf{License of the assets:} No existing asset has been used for this experiment.
	\item \textbf{Total amount of compute and type of resources used:} This experiment has been run on a laptop running Windows 10 and equipped with Intel(R) Core(TM) i7\_8565U CPU 1.80GHz with 16Go of RAM.
	The total amount of compute is roughly 33 hours. 
	\item \textbf{Training details:} All training details (here hyperparameters) are detailed in Section 4 in the main paper. 
\end{itemize}

\noindent\textbf{Discretisation step-size and compression trade-off.} We complement the analysis made in the main paper by showing on \Cref{fig:toy_example_supp2} that the saving in terms of number of transmitted bits can be further improved by decreasing the value of $\gamma$.
This numerical finding illustrates our theory which in particular shows that the asymptotic bias associated to \qlsds~is of the order $\omega\Oh(\gamma)$, see \Cref{table:overview} in the main paper.

\begin{figure}
  \begin{center}
    \mbox{{\includegraphics[scale=0.6]{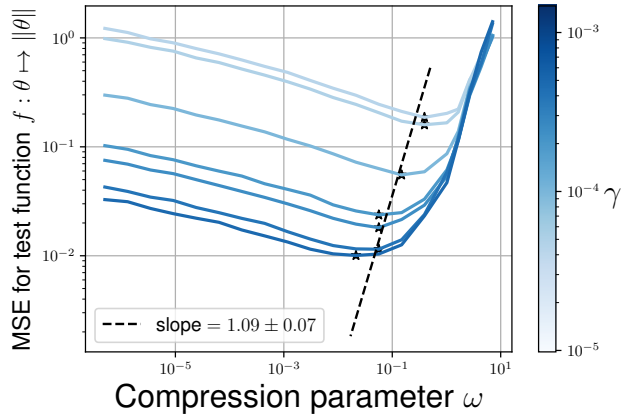}}}
  \end{center}
  \caption{Toy Gaussian example. Trade-off between step-size and compression parameter values.\label{fig:toy_example_supp2}}
\end{figure}

\subsection{Bayesian logistic regression}

\noindent\textbf{Pseudo-code of \texttt{LSD}$^{++}$.} For completeness, we provide in \Cref{algo:LSD-plus} the pseudo-code of the non-compressed counterpart of \texttt{QLSD}$^{++}$, namely \texttt{LSD}$^{++}$.

\begin{algorithm}[h]
   \caption{Variance-reduced Langevin Stochastic Dynamics (\texttt{LSD}$^{++}$)}
   \label{algo:LSD-plus}
  \begin{algorithmic}
     \State {\bfseries Input:} minibatch sizes $\{n_i\}_{i \in [b]}$, number of iterations $K$, step-size $\gamma \in (0,\bar{\gamma}]$ with $\bar{\gamma} > 0$, initial point $\theta_0$ and $\alpha \in (0,\bar{\alpha}]$ with $\bar{\alpha} > 0$.
     \State \Comment{Memory mechanism initialisation}
     \State Initialise $\{\eta^{(1)}_0,\ldots,\eta^{(b)}_0\}$ and $\eta_0 = \sum_{i=1}^b \eta^{(i)}_0$.
     \For{$k=0$ {\bfseries to} $K-1$}
      \State \Comment{Update of the control variates}
      \If{$k \equiv 0$ ($\mathrm{mod} \ l$)}
      \State Set $\zeta_k = \theta_k$.
      \Else{}
      \State Set $\zeta_k = \zeta_{k-1}$
      \EndIf
     \For{$i \in \mathcal{A}_{k+1}$ \Comment{On active clients}}
        \State Draw $\mathcal{S}_{k+1}^{(i)} \sim \mathrm{Uniform}\pr{\wp_{N_i,n_i}}$.
        \State {\small Set $H_{k+1}^{(i)}(\theta_k) = (N_i/n_i)\sum_{j \in \mathcal{S}_{k+1}^{(i)}} [\nabla U_{i,j}(\theta_k) - \nabla U_{i,j}(\zeta_k)] + \nabla U_i(\zeta_k)$.}
        \State Compute $\textsl{g}_{i,k+1} = H_{k+1}^{(i)}(\theta_k) - \eta^{(i)}_{k}$.
        \State Send $\textsl{g}_{i,k+1}$ to the central server.
        \State Set $\eta^{(i)}_{k+1} = \eta^{(i)}_{k} + \alpha \textsl{g}_{i,k+1}$.
     \EndFor
     \State \Comment{On the central server}
     \State Compute $\textsl{g}_{k+1} = \eta_k + \frac{b}{|\mathcal{A}_{k+1}|}\sum_{i \in \mathcal{A}_{k+1}}\textsl{g}_{i,k+1}$.
     \State Set $\eta_{k+1} = \eta_k + \alpha\sum_{i \in \mathcal{A}_{k+1}}^b\textsl{g}_{i,k+1}$.
     \State Draw $Z_{k+1} \sim \mathrm{N}(0_d,\mathrm{I}_d)$.
     \State Compute $\theta_{k+1} = \theta_k - \gamma \textsl{g}_{k+1} + \sqrt{2\gamma}Z_{k+1}$.
     \State Send $\theta_{k+1}$ to the $b$ clients.
     \EndFor
     \State {\bfseries Output:} samples $\{\theta_k\}_{k=0}^{K}$.
  \end{algorithmic}
\end{algorithm}

\noindent\textbf{Additional experimental details.} The code associated to this experiment can be found in the supplementary material (see ./code/notebook\_logistic\_regression.ipynb).
For the Bayesian logistic regression experiment detailed in the main paper, we ran the MCMC algorithms with $K=500,000$ outer iterations and considered a burn-in period of length $50,000$.

\noindent\textbf{Benefits of the memory mechanism.} We also run an additional experiment on a low-dimensional synthetic dataset to highlight the benefits brought by the memory mechanism involved in \texttt{QLSD}$^{++}$ when the dataset is highly heterogeneous.
To this end, we consider the \textsc{Synthetic}$(\alpha,\beta)$ dataset \citep{fedprox20} with $\alpha=\beta=1$, $d=2$ and $b=50$.
We run \texttt{QLSD}$^{++}$ with and without memory terms using $l=100$, $\alpha = 1/(\omega+1)$, $\gamma = 10^{-5}$ and for huge compression parameters, namely $s \in \{2^1,2^2\}$.
We use $K=100,000$ outer iterations without considering a burn-in period.
In order to have access to some ground truth, we also implement the Metropolis-adjusted Langevin algorithm (\texttt{MALA}) \citep{Robert2004}.

\begin{figure}
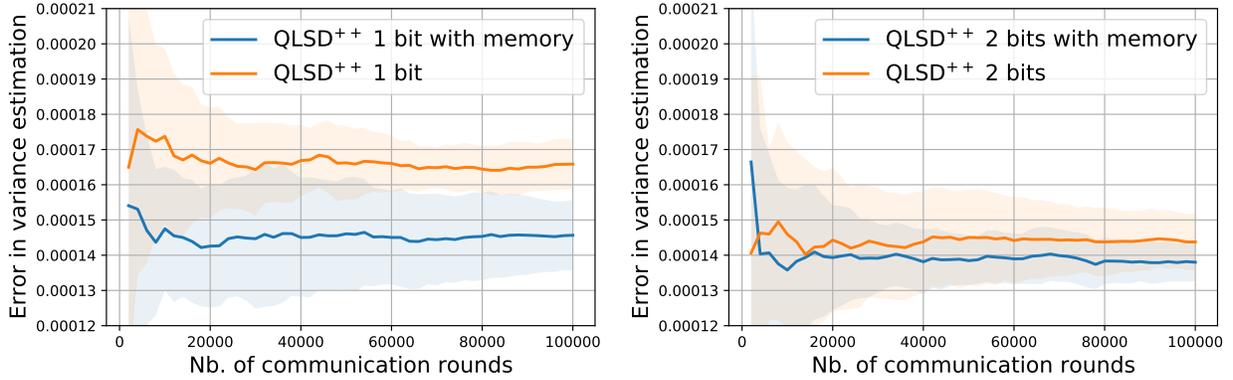

  \begin{center}
    \mbox{{\includegraphics[scale=0.55]{images/logistic-toy-var_1bits.pdf}}}
    \mbox{{\includegraphics[scale=0.55]{images/logistic-toy-var_2bits.pdf}}}
  \end{center}
  \caption{Bayesian logistic regression on synthetic data.\label{fig:toy_example_log}}
\end{figure}

\Cref{fig:toy_example_log} shows the Euclidean norm of the error between the true variance under $\pi$ estimated with \texttt{MALA} and the empirical variance computed using samples generated by \texttt{QLSD}$^{++}$.
As expected, we can notice that the memory mechanism reduces the impact of the compression on the asymptotic bias of \texttt{QLSD}$^{++}$ when $\omega$ is large.

\noindent\textbf{Results on a non-image dataset.} In order to complement our results on an image dataset (FEMNIST), we also implement our methodology and one competitor (\texttt{DG-SGLD}) on the \emph{covtype}\footnote{https://archive.ics.uci.edu/ml/datasets/covertype} dataset. Again, the ground truth has been obtained by implementing a long-run Metropolis-adjusted Langevin algorithm.
The results we obtained are gathered in \Cref{table:log_reg_covtype}.

\begin{table*}
\centering
\caption{Bayesian Logistic Regression on \emph{covtype} dataset.}\label{table:log_reg_covtype}
\begin{tabular}{cc}\\\toprule
Algorithm & 99\% HPD error \\\midrule
\texttt{DG-SGLD} &1.8e-2\\
\texttt{QLSD}$^{++}$ 4 bits &2.2e-3\\
\texttt{QLSD}$^{++}$ 8 bits &2.0e-2 \\
\texttt{QLSD}$^{++}$ 16 bits &1.9e-2\\
\bottomrule
\end{tabular}
\end{table*}

\begin{itemize}
	\item \textbf{License of the assets:} We use the Synthetic dataset whose associated code is under the MIT license, and the FEMNIST dataset whose data are publicy available and associated code is under MIT license.

	\item \textbf{Total amount of compute and type of resources used:} This experiment has been run on a laptop running Windows 10 and equipped with Intel(R) Core(TM) i7\_8565U CPU 1.80GHz with 16Go of RAM.
	The total amount of compute is roughly 30 hours. 
	
	\item \textbf{Training details:} Hyperparameter values are detailed in Section 4 in the main paper.
	Regarding our experiment on real data, we use a random subset of the initial training data (for computational reasons).

\end{itemize}

\subsection{Bayesian neural networks}
\label{subsec:expresults:bnn}



\begin{itemize}
	\item \textbf{License of the assets:} We use the MNIST, FMNIST, CIFAR10 and SVHN datasets which are publicly downloadable with the torchvision.datasets package.
	\item \textbf{Total amount of compute and type of resources used:} The total computational cost depends on the dataset, but is roughly 40 hours in the worst case. 
	\item \textbf{Training details:} We consider the same hyperparameter values detailed in \Cref{table:mnist-bbn_comparison} for both training on MNIST and CIFAR10 except for the initialisation and the sampling period. For the MNIST dataset, we use the default random weights given by pytorch whereas for CIFAR-10 we use the warm-start provided by the pytorchcv library and consider a burn-in period of half the sampling period ($K=10^4$ iterations) with a thinning of 10.
\end{itemize}

In the following, we denote $\mathrm{D}_{\text{test}}$ the test dataset and for any data $(x,y) \in \mathrm{D}_{\text{test}}$, we define the preditive density by
\begin{equation}
  \label{eq:def_predictive_distribution}
  p(y \mid x)= \int p(y \mid x,\theta) \ \pi(\theta \mid \mathrm{D}) \ \rmd \theta \eqsp,
\end{equation}
where $p(y\mid x,\theta)$ is the conditional likelihood. 
For any input $x$, the predicted label is denoted by $y_{\mathrm{pred}}(x)=\argmax_{y} p(y \mid x)$.

\noindent\textbf{Metrics used for the Bayesian neural network experiment in the main paper.} In the main paper, we consider three metrics to compare the different Bayesian FL algorithms, namely \emph{Accuracy}, \emph{Agreement} and \emph{TV}. They are defined in the following. 

\begin{itemize}
  \item \textbf{Accuracy:} Based on samples from the approximate posterior distribution, we compute the minimum mean-square estimator (\emph{i.e.} corresponding to the posterior mean) and use it to make predictions on the test dataset. The \emph{Accuracy} metric corresponds to the percentage of well-predicted labels.
  \item \textbf{Agreement:} Let denote $p_{\mathrm{ref}}$ and $p$ the predictive densities associated to \texttt{HMC} and an approximate simulation-based algorithm, respectively. 
  Similar to \citet{izmailov2021bayesian}, we define the agreement between $p_{\mathrm{ref}}$ and $p$ as the fraction of the test datapoints for which the top-1 predictions of $p_{\mathrm{ref}}$ and $p$, \emph{i.e.}
  $$
  \mathrm{agreement}(p_{\mathrm{ref}},p) = \frac{1}{|\mathrm{D}_{\mathrm{test}}|} \sum_{x \in \mathrm{D}_{\mathrm{test}}} \mathbf{1}\bbr{\argmax_{y'} p_{\mathrm{ref}}(y'\mid x) = \argmax_{y'} p(y'\mid x)}\eqsp.
  $$

  \item \textbf{Total variation (TV):} By denoting $\mathcal{Y}$ the set of possible labels, we consider the total variation metric between $p_{\mathrm{ref}}$ and $p$, \emph{i.e.}
  $$
  \mathrm{TV}(p_{\mathrm{ref}},p) = \frac{1}{2|\mathrm{D}_{\mathrm{test}}|} \sum_{x \in \mathrm{D}_{\mathrm{test}}} \sum_{y'\in\mathcal{Y}} \left|p_{\mathrm{ref}}(y'\mid x) - p(y'\mid x)\right|\eqsp.
  $$
\end{itemize}

\noindent\textbf{Performance results on a highly heterogeneous dataset.} 
We train LeNet5 \citep{lecun1998gradient} architecture on the MNIST dataset \citep{deng2012mnist} and we consider the FMNIST \citep{xiao2017fashion} as the out-of-distribution dataset.
To obtain a highly heterogeneous setting, we split the data among $b=20$ clients so that each client has a dominant label representing $40\%$ of the total amount in the training set and $1\%$ of the other labels as described in \Cref{fig:num_data}.

\begin{figure}[!h]
  \begin{center}
    \mbox{{\includegraphics[scale=0.65]{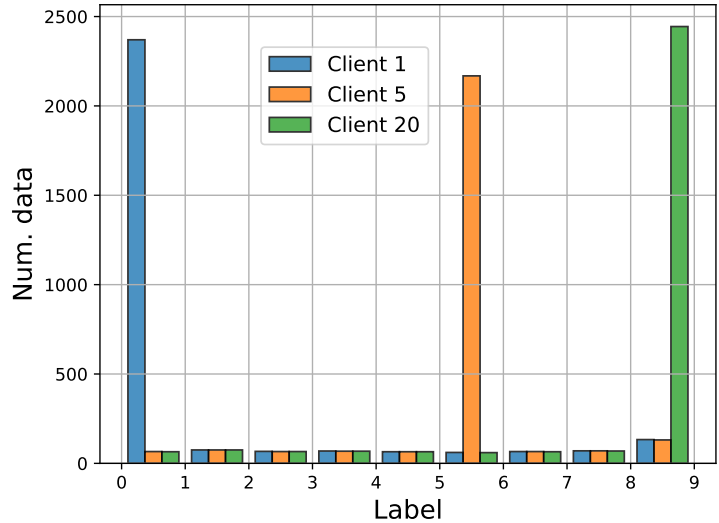}}}
  \end{center}
   \caption{Number of labels owned by different clients. \label{fig:num_data}}
\end{figure}

Inspired by the scores defined in \citet{guo2017calibration}, we measure the performance of the different algorithms and report those results in \Cref{table:mnist-bbn_comparison}. These statistics aim to better understand the predictions in order to calibrate the models \citep{rahaman2020uncertainty}.

\begin{table*}[!h]
\centering 
\begin{scriptsize} 
    \begin{tabular}{ccccccccccc}
    \\ \toprule
    Method & \texttt{SGLD} & \texttt{pSGLD} & \texttt{QLSD} & \texttt{QLSD} \texttt{PP} & \texttt{QLSD}$^{++}$ & \texttt{QLSD}$^{++}$ \texttt{PP} & \texttt{FedBe-Gauss.} & \texttt{FedBe-Dirich.} &  \texttt{FSGLD} \\ 
    \midrule
    Accuracy & 99.1 & 99.2 & 98.8 & 98.3 & 98.8 & 98.7 & 43.5 & 79.3 & 98.5 \\ 
    $10^2\times$ ECE & 0.577 & 1.25 & 0.916 & 1.57 & 0.692 & 0.930 & 7.51 & 21.3 & 2.65 \\ 
    $10^2\times$ BS & 1.38 & 1.39 & 1.98 & 2.23 & 1.91 & 2.18 & 66.6 & 36.1 & 2.64 \\ 
    $10^2\times$ nNLL & 2.86 & 3.16 & 4.15 & 4.82 & 4.11 & 4.65 & 139 & 78.0 & 6.19 \\ 
    Weight Decay & 5 & 5 & 5 & 5 & 5 & 5 & 0 & 0 & 5 \\ 
    Batch Size & 64 & 64 & 64 & 64 & 64 & 64 & 64 & 64 & 64 \\ 
    Learning rate & 1e-07 & 1e-08 & 1e-07 & 1e-07 & 1e-07 & 1e-07 & 1e-02 & 1e-02 & 1e-07 \\ 
    Local steps & N/A & N/A & 1 & 1 & 1 & 1 & 250 & 250 & 16 \\ 
    Burn-in & 100epch. & 100epch. & 1e04 & 1e04 & 1e04 & 1e04 & N/A & N/A & 1e04 \\ 
    Thinning & 1 & 1 & 500 & 500 & 500 & 500 & N/A & N/A & 500 \\
    Training & 1e03epch. & 1e03epch. & 1e05it. & 1e05it. & 1e05it. & 1e05it. & N/A & N/A & 1e05it. \\ 
    \bottomrule
    \end{tabular}
\end{scriptsize}
  \caption{Performance of Bayesian FL algorithms trained on the highly-heterogeneous dataset. \label{table:mnist-bbn_comparison}}
\end{table*}

\paragraph{Expected Calibration Error (ECE).}

To measure the difference between the accuracy and confidence of the predictions, we group the data into $M\ge 1$ buckets defined for any $m \in [M]$ by $\mathrm{B_m}=\{(x,y)\in\mathrm{D}_{\mathrm{test}}: p(y_{\mathrm{pred}}(x)| x)\in\left]\nofrac{(m-1)}{M}, \nofrac{m}{M}\right]\}$. As in the previous work of \citet{ovadia2019can}, we denote the model accuracy on $\mathrm{B_{m}}$ by
\[
\mathrm{acc}\left(\mathrm{B_{m}}\right)=\frac{1}{\left|\mathrm{B_{m}}\right|} \sum_{(x,y)\in \mathrm{B_{m}}} \mathbf{1}_{y_{\mathrm{pred}}(x)=y}
\]
and define the confidence on $\mathrm{B_m}$ by
\[
\mathrm{conf}\left(\mathrm{B_m}\right)=\frac{1}{\left|\mathrm{B_m}\right|} \sum_{(x,y) \in \mathrm{B_m}} p(y_{\mathrm{pred}}(x)| x)\eqsp.
\]
As stressed in \citet{guo2017calibration}, for any $m\in[M]$ the accurcay $\mathrm{acc}\left(\mathrm{B_m}\right)$ is an unbiased and consistent estimator of $\PP\left(y_{\mathrm{pred}}(x)=y \mid (m-1)/M<p(y_{\mathrm{pred}}(x)| x)\le \nofrac{m}{M}\right)$.
Therefore, the ECE defined by
\[
  \mathrm{ECE}=\sum_{m=1}^{M} \frac{\left|\mathrm{B_m}\right|}{\left|\mathrm{D}_{\mathrm{test}}\right|}\left|\mathrm{acc}\left(\mathrm{B_m}\right)-\mathrm{conf}\left(\mathrm{B_m}\right)\right|
\]
is an estimator of
\[
  \E_{(x,y)}\Big[\big|\PP\pr{y_{\mathrm{pred}}(x)=y \mid p(y_{\mathrm{pred}}(x)| x)}-p(y_{\mathrm{pred}}(x) | x)\big|\Big].
\]
Thus, ECE measures the absolute difference between the confidence level of a prediction and its accuracy.

\paragraph{Brier Score (BS).}

The BS is a proper scoring rule (see for example \citet{dawid2014theory}) that can only evaluate random variables taking a finite number of values. Denote by $\mathcal{Y}$ the finite set of possible labels, the BS measures the model's confidence in its predictions and is defined by
\[
  \mathrm{BS} = \frac{1}{|\mathrm{D}_{\mathrm{test}}|}\sum_{(x,y)\in\mathrm{D}_{\mathrm{test}}}\sum_{c\in\mathcal{Y}}(p(y=c| x) - \mathbf{1}_{y=c})^2 \eqsp.
\]

\paragraph{Normalised negative log-likelihood (nNLL).}

This classical score defined by
\[
  \mathrm{nNLL} = -\frac{1}{|\mathrm{D}_{\mathrm{test}}|}\sum_{(x,y)\in\mathrm{D}_{\mathrm{test}}}\log p(y| x)
\]
measures the model ability to predict good labels with high probability.

\paragraph{Out of distribution detection.}

Here we  study the behavior of our proposed algorithms in the out-of-distribution (OOD) framework, we consider the pairs MNIST/FMNIST and CIFAR10/SVHN, comparing the densities of the predictive entropies on the ID vs OOD data. These densities denoted by $p_{\text{in}}$ and $p_{\text{out}}$ respectively, are approximated using a kernel estimator based on of the histogram associated with $\{\mathrm{Ent}(x) : x \in \mathrm{D}_{\mathrm{test}}^x\}$ for $\mathrm{D}_{\mathrm{test}}\in\{\text{MNIST}, \text{FMNIST}\}$ or $\{\text{CIFAR10}, \text{SVHN}\}$, where $\mathrm{Ent}(x)$ is the predictive entropy defined by:
\[
  \mathrm{Ent}(x) = \sum_{y\in\mathcal{Y}}p(y| x)\log p(y| x) \eqsp,
\]
and $p(y|x)$ is defined by \eqref{eq:def_predictive_distribution} and estimated by the different methods that we consider. 
The resulting densities from the different methods that we consider are  displayed in \Cref{fig:suppl:entropies}.
\begin{figure}[!h]
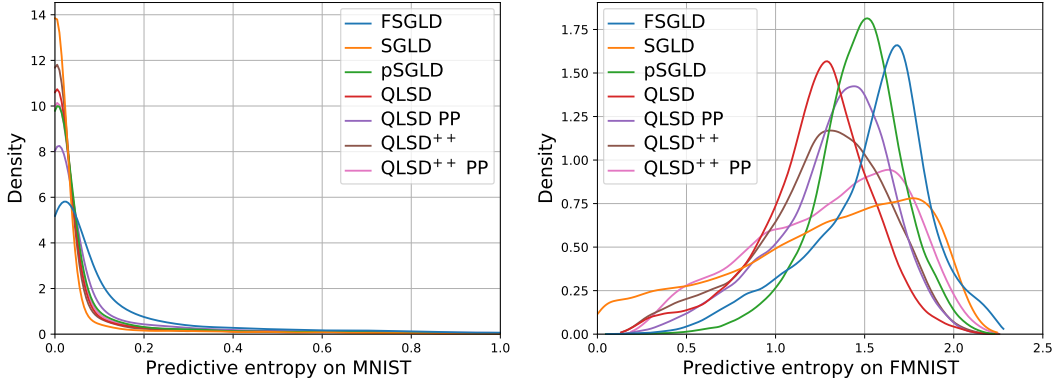

  \begin{center}
    \mbox{{\includegraphics[scale=0.47]{images/mnist-entropy-dataset.pdf}}}
    \mbox{{\includegraphics[scale=0.47]{images/mnist-entropy-ood.pdf}}}
  \end{center}
   \caption{Predictive entropies comparison between MNIST and FMNIST. \label{fig:suppl:entropies}}
\end{figure}

A new data point $x$ is then labeled in the original dataset (MNIST or CIFAR10) if $p_{\text{in}}(\mathrm{Ent}(x))>p_{\text{out}}(\mathrm{Ent}(x))$ and out-of-distribution otherwise.

\paragraph{Calibration results.}

Interpreting the predicted outputs as probabilities is only correct for well a calibrated model. Indeed, when a model is calibrated, the confidence is closed to the accuracy of the predictions. In order to evaluate the calibration of the models, we display the reliability diagram on the left-hand side of \Cref{fig:suppl:conf}. It represents the evolution of $\mathrm{acc}(\mathrm{B_m}) - \mathrm{conf}(\mathrm{B_m})$ in function of $\mathrm{conf}(\mathrm{B_m})$, closer the values are to zero better the model is calibrated.

For the second sub-experiment, we consider for any $\tau\in\brn{0,1}$, the set $\mathrm{D}_{\mathrm{pred}}^{(\tau)}=\{x\in\mathrm{D}_{\mathrm{test}}^x: p(y|x) \ge \tau\}$ of classified data with credibility greater than $\tau$.
We define the test accuracy on $\mathrm{D}_{\mathrm{pred}}^{(\tau)}$ by $$\mathrm{Card}(\{x\in\mathrm{D}_{\mathrm{pred}}^{(\tau)}: y_{\mathrm{true}}(x)=y_{\mathrm{pred}}(x)\})/\mathrm{Card}(\mathrm{D}_{\mathrm{pred}}^{(\tau)})\eqsp.$$
The right-hand side of \Cref{fig:suppl:conf} shows the evolution of the test accuracy on $\mathrm{D}_{\mathrm{pred}}^{(\tau)}$ with respect to the credibility threshold $\tau$. It can be noted that in both plots of \Cref{fig:suppl:conf}, the accuracy tends to $100\%$ for confident predictions.

\begin{figure}[!h]
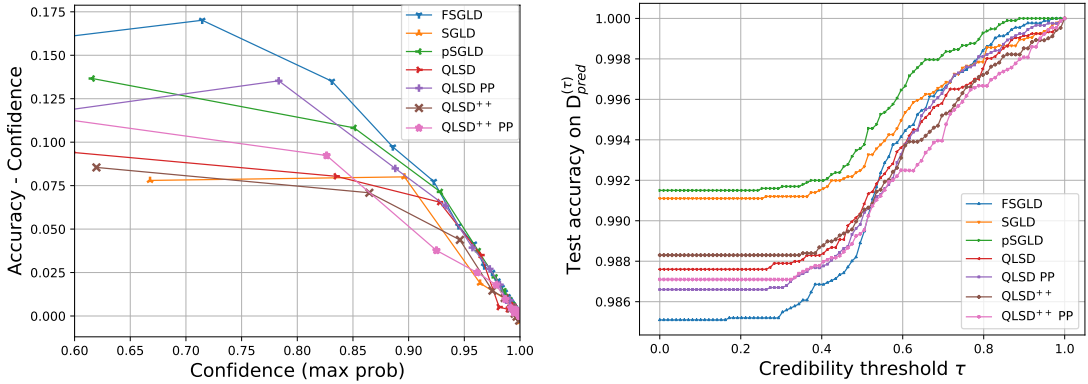

  \begin{center}
    \mbox{{\includegraphics[scale=0.47]{images/mnist-accuracy-confidence_confidence.pdf}}}
    \mbox{{\includegraphics[scale=0.47]{images/mnist-credibility_comparison.pdf}}}
  \end{center}
  \caption{Left: Calibration test from reliability diagrams -- Right: Test accuracy on $\mathrm{D}_{\mathrm{pred}}^{(\tau)}$ with respect to the threshold $\tau$. \label{fig:suppl:conf}}
\end{figure}


\end{document}